\theoremstyle{plain}
\newtheorem{theorem}{Theorem}[section]
\newtheorem{proposition}[theorem]{Proposition}
\newtheorem{lemma}[theorem]{Lemma}
\newtheorem{definition}[theorem]{Definition}
\newtheorem{assumption}[theorem]{Assumption}
\crefname{equation}{Eq.}{Eq.}
\newcommand{\vdelta}{\bm{\delta}}
\newcommand{\vtheta}{\bm{\theta}}
\newcommand{\vsigma}{\bm{\sigma}}
\newcommand{\vomega}{\bm{\omega}}
\newcommand{\vTheta}{\bm{\Theta}}
\newcommand{\vOmega}{\bm{\Omega}}
\newcommand{\va}{\bm{a}}
\newcommand{\vb}{\bm{b}}
\newcommand{\vp}{\bm{p}}
\newcommand{\vq}{\bm{q}}
\newcommand{\vu}{\bm{u}}
\newcommand{\vw}{\bm{w}}
\newcommand{\vx}{\bm{x}}
\newcommand{\vz}{\bm{z}}
\newcommand{\vD}{\bm{D}}
\newcommand{\vG}{\bm{G}}
\newcommand{\vI}{\bm{I}}
\newcommand{\vJ}{\bm{J}}
\newcommand{\vX}{\bm{X}}
\newcommand{\vZ}{\bm{Z}}
\newcommand{\rmd}{\mathrm{d}}
\newcommand{\Csigma}{C_{\vsigma}}
\newcommand{\CZ}{C_{\vZ}}
\newcommand{\Cp}{C_{\vp}}
\newcommand{\CG}{C_{\vG}}
\newcommand{\CKL}{C_{\rm KL}}
\newcommand{\bE}{\mathbb{E}}
\newcommand\numberthis{\addtocounter{equation}{1}\tag{\theequation}}
\definecolor{fhcolor}{rgb}{0.523, 0.235, 0.625}
\setlist[itemize]{leftmargin=*}
\title{Generalization of Scaled Deep ResNets in the Mean-Field Regime}
\author{%
  Yihang Chen \\
  EPFL\\
  \texttt{yihang.chen@epfl.ch} 
  \And
  Fanghui Liu \\
  University of Warwick \\
  \texttt{fanghui.liu@warwick.ac.uk} 
  \AND
    Yiping Lu \\
  New York University \\
  \texttt{yplu@nyu.edu} 
\And
  Grigorios G. Chrysos \\
  University of Wisconsin-Madison \\
  \texttt{chrysos@wisc.edu}
  \And
  Volkan Cevher \\
  EPFL \\
  \texttt{volkan.cevher@epfl.ch} 
}
\begin{document}
\iclrfinalcopy

\maketitle

\begin{abstract}

Despite the widespread empirical success of ResNet, the generalization properties of deep ResNet are rarely explored beyond the lazy training regime. In this work, we investigate \emph{scaled} ResNet in the limit of infinitely deep and wide neural networks, of which the gradient flow is described by a partial differential equation in the large-neural network limit, i.e., the \emph{mean-field} regime. To derive the generalization bounds under this setting, our analysis necessitates a shift from the conventional time-invariant Gram matrix employed in the lazy training regime to a time-variant, distribution-dependent version. To this end, we provide a global lower bound on the minimum eigenvalue of the Gram matrix under the mean-field regime. Besides, for the traceability of the dynamic of Kullback-Leibler (KL) divergence, we establish the linear convergence of the empirical error and estimate the upper bound of the KL divergence over parameters distribution. Finally, we build the uniform convergence for generalization bound via Rademacher complexity. Our results offer new insights into the generalization ability of deep ResNet beyond the lazy training regime and contribute to advancing the understanding of the fundamental properties of deep neural networks.
\end{abstract}

\section{Introduction}
Deep neural networks (DNNs) have achieved great success empirically, a notable illustration of which is ResNet~\citep{he2016identity}, a groundbreaking network architecture with skip connections. 
One typical way to theoretically understand ResNet (e.g., optimization, generalization), is based on the neural tangent kernel (NTK) tool \citep{NEURIPS2018_5a4be1fa}.
Concretely, under proper assumptions, the training dynamics of ResNet can be described by a fixed kernel function (NTK).
Hence, the global convergence and generalization guarantees can be given via NTK and the benefits of residual connection can be further demonstrated by spectral properties of NTK \citep{hayou2019exact,huang2020deep,hayou2021stable,tirer2022kernel}.
However, the NTK analysis requires the parameters of ResNet to not move much during training (which is called \emph{lazy training} or kernel regime \citep{chizat2019lazy, woodworth2020kernel,barzilai2022kernel}). 
Accordingly, the NTK analysis fails to describe the true non-linearity of ResNet. Beyond the NTK analysis thus received great attention in the deep learning theory community.

One typical approach is, \emph{mean-field} based analysis, which allows for unrestricted movement of the parameters of DNNs during training,~\citep{wei2019regularization,woodworth2020kernel,ghorbani2020neural,yang2021tensor,akiyama2022excess,ba2022high,mahankali2023beyond,chen2022feature,MeiE7665,rotskoff2018parameters, nguyen2019mean, doi:10.1137/18M1192184}. 
The training dynamics can be formulated as an optimization problem over the weight space of probability measures by studying suitable scaling limits.
For deep ResNet, taking the limit of infinite depth is naturally connected to a continuous neural ordinary differential equation (ODE) \citep{sonoda2019transport,weinan2017proposal,lu2018beyond,haber2017stable,chen2018neural}, which makes the optimization guarantees of deep ResNets feasible under the mean-field regime \citep{pmlr-v119-lu20b, ma2020machine,ding2021global, ding2022overparameterization,barboni2022global}.
The obtained optimization results indicate that infinitely deep and wide ResNets can easily fit the training data with random labels, i.e., \emph{global convergence}.
While previous works have obtained promising optimization results for deep ResNets, there is a notable absence of generalization analysis, which is essential for theoretically understanding why deep ResNet can generalize well \emph{beyond the lazy training regime}.
Accordingly, this naturally raises the following question:
\begin{center}
   \emph{Can we build a generalization analysis of trained Deep ResNets in the mean-field setting?}
\end{center}

We answer this question by providing a generalization analysis framework of \emph{scaled} deep ResNet in the mean-field regime, where the word \emph{scaled} denotes a scaling factor on deep ResNet. 
To this end, we consider an infinite width and infinite depth ResNet parameterized by two measures: {$\nu$} over the feature encoder and {$\tau$} over the output layer, respectively. 
By proving the condition number of the optimization dynamics of deep ResNets parameterized by $\tau$ and $\nu$ is lower bounded, we obtain the global linear convergence guarantees, aiming to derive the Kullback–Leibler (KL) divergence of such two measures between initialization and after training.
Based on the KL divergence results, we, therefore, build the uniform convergence result for generalization via Rademacher complexity under the mean-field regime, obtaining the convergence rate at $\mathcal{O}(1/\sqrt{n})$ given $n$ training data.

Our contributions are summarized as below:
\begin{itemize}
    \item The paper provides the minimum eigenvalue estimation (lower bound) of the Gram matrix of the gradients for deep ResNet parameterized by the ResNet encoder's parameters and MLP predictor's parameters. 
    \item The paper proves that the KL divergence of feature encoder $\tau$ and output layer $\nu$ can be bounded by a constant (depending only on network architecture parameters) during the training. 
    \item This paper builds the connection between the Rademacher complexity result and KL divergence, and then derives the convergence rate $\mathcal{O}(1/\sqrt{n})$ for generalization.
\end{itemize}

Our theoretical analysis provides an in-depth understanding of the global convergence under minimal assumptions, sheds light on the KL divergence of network measures before and after training, and builds the generalization guarantees under the mean-field regime, matching classical results in the lazy training regime~\citep{pmlr-v97-allen-zhu19a,du2018gradient}.
We expect that our analysis opens the door to generalization analysis for feature learning and look forward to which adaptive features can be learned from the data under the mean-field regime. 

\section{Related Work}

In this section, we briefly introduce the large width/depth ResNets in an ODE formulation, NTK analysis, and mean-field analysis for ResNets.

\subsection{Infinite-width, infinite-depth ResNet, ODE}
The limiting model of deep and wide ResNets can be categorized into three class, {by either taking the width or depth to infinity}: a) the infinite {depth} limit to the ODE/SDE model~\citep{sonoda2019transport,weinan2017proposal,lu2018beyond,chen2018neural,haber2017stable,marion2023implicit}; b) infinite width limit,  \citep{hayou2021stable,hayou2023width,Frei2019AlgorithmDependentGB}; c) infinite depth width ResNets, mean-field ODE framework under the infinite depth width limit \citep{li2021future,pmlr-v119-lu20b,ding2021global,ding2022overparameterization,barboni2022global}

In this work, we are particularly interested in mean-field ODE formulation.
The deep ResNets modeling by mean-field ODE formulation stems from \citep{pmlr-v119-lu20b}, in which every residual block is regarded as a particle and the target is changed to optimize over the empirical distribution of particles.
\citet{sander2022residual} discusses the rationale behind such equivalence between discrete dynamics and continuous ODE for ResNet under certain cases.
The global convergence is built under a modified cost function \citep{ding2021global}, and further improved by removing the regularization term on the cost function \citep{ding2022overparameterization}. However, the analysis in \citep{ding2022overparameterization} requires more technical assumptions about the limiting distribution. {\citet{barboni2022global} show a local linear convergence by parameterizing the network with RKHS. However, the radius of the ball containing parameters relies on the $N$-universality and is difficult to estimate. 
}
Our work requires minimal assumptions under a proper scaling of the network parameters and the design of architecture, and hence foresters optimization and generalization analyses.
{There is a concurrent works~\citep{marion2023implicit} that studies the implicit regularization of ResNets converging to ODEs, but the employed technique is different from ours and the generalization analysis in their work is missing. }

\subsection{NTK analysis for deep ResNet}
\citet{NEURIPS2018_5a4be1fa} demonstrate that the training process of wide neural networks under gradient flow can be effectively described by the Neural Tangent Kernel (NTK) as the network's width (denoted as '$M$') tends towards infinity under the NTK scaling~\citep{du2018gradient}.  
During the training, the NTK remains unchanged and thus the theoretical analyses of neural networks can be transformed to those of kernel methods.
In this case, the optimization and generalization properties of neural networks can be well controlled by the minimum eigenvalue of NTK~\citep{cao2019generalization,nguyen2021tight}. 
Regarding ResNets, the architecture we are interested in, the NTK analysis is also valid~\citep{tirer2022kernel,huang2020deep,belfer2021spectral}, {as well as an algorithm-dependent bound~\citep{Frei2019AlgorithmDependentGB} in the lazy training regime.}
Compared with kernelized analysis of the wide neural network, we do not rely on the convergence to a fixed kernel as the width approaches infinity to perform the convergence and generalization analysis. 
Instead, under the mean field regime, the so-called kernel falls into a time-varying, measure-dependent version.
{We also remark that, for a ResNet with an infinite depth but constant width, the global convergence is given by \citet{cont2022convergence} beyond the lazy training regime by studying the evolution of gradient norm.}
To our knowledge, this is the first work that analyzes the (varying) kernel eigenvalue of infinite-width/depth ResNet {beyond the NTK regime} in terms of optimization and generalization.

\subsection{Mean-field Analysis}
Under suitable scaling limits \citep{MeiE7665,rotskoff2018parameters,SIRIGNANO20201820}, as the number of neurons goes to infinity, \emph{i.e.} $M\to\infty$, neural networks work in the \textit{mean-field limit}. In this setting, the training dynamics of neural networks can be formulated as an optimization problem over the distribution of neurons. A notable benefit of the mean-field approach is that, after deriving a formula for the gradient flow, conventional PDE methods can be utilized to characterize convergence behavior, which enables both \textit{nonlinear feature learning} and global convergence ~\citep{araujo2019meanfield,fang2019convex,nguyen2019mean,pmlr-v97-du19c,chatterji2021does,NEURIPS2018_a1afc58c,MeiE7665,wojtowytsch2020convergence,pmlr-v119-lu20b,doi:10.1287/moor.2020.1118,doi:10.1137/18M1192184,E_2020,jabir2021meanfield}.

In the case of the two-layer neural network, \citet{NEURIPS2018_a1afc58c,MeiE7665,wojtowytsch2020convergence, chen2020generalized,barboni2022global}  justify the mean-field approach and demonstrate the convergence of the gradient flow process, achieving the zero loss. {For the wide shallow neural network, \citet{chen2022feature} proves the linear convergence of the training loss by virtue of the Gram matrix}. In the multi-layer case, \citet{pmlr-v119-lu20b, ding2021global, ding2022overparameterization} translate the training process of ResNet to a gradient-flow partial differential equation (PDE) and showed that with depth and width depending algebraically on the accuracy and confidence levels, first-order optimization methods can be guaranteed to find global minimizers that fit the training data. 

In terms of the generalization of a trained neural network under the mean-field regime, current results are limited to two-layer {neural} networks.
For example, \citet{chen2020generalized} provides a generalized NTK framework for two-layer neural networks, which exhibits a "kernel-like" behavior. 
\citet{chizat2020implicit} demonstrate that the limits of the gradient flow of two-layer neural networks can be characterized as a max-margin classifier in a certain non-Hilbertian space.
Our work, instead, focuses on deep ResNets in the mean-field regime and derives the generalization analysis framework.

\section{From Discrete to Continuous ResNet}
\label{sec:dis_to_conti}

In this section, we present the problem setting of our deep ResNets for binary classification under the infinite depth and width limit, which allows for parameter evolution of ResNets.
Besides, several mild assumptions are introduced for our proof.

\subsection{Problem setting}
For an integer $L$, we use the shorthand $[L] = \left \{ 1,2,\dots ,L \right \}$. Let $\mathcal{X} \subseteq \mathbb{R}^d$ be a compact metric space and $\mathcal{Y} \subseteq \mathbb{R}$. We assume that the training set $\mathcal{D}_n = \{  (\bm x_i, y_i) \}_{i=1}^n $ is drawn from an unknown distribution $\mu$ on $\mathcal{X} \times \mathcal{Y}$, and $\mu_X$ is the marginal distribution of $\mu$ over $\mathcal{X}$. 
The goal of our supervised learning task is to find a hypothesis (i.e., a ResNet used in this work) $f: \mathcal{X} \rightarrow \mathcal{Y}$ such that $f(\bm x; \vTheta)$ parameterized by $\vTheta$
is a good approximation of the label $y \in \mathcal{Y}$ corresponding to a new sample $\bm x \in \mathcal{X}$.
In this paper, we consider a binary classification task, denoted by minimizing the expected risk, let $\ell_{0-1}(f,y)= \mathbbm{1} \{ yf<0\}$, 
\begin{equation*}
\min_{\vTheta}~{L}_{0-1}(\vTheta):= \mathbb{E}_{(\bm x, y) \sim \mu}~ \ell_{0-1} (f(\bm {x};\vTheta), y)\;.
\end{equation*}
Note that the $0-1$ loss is non-convex and non-smooth, and thus difficult for optimization.
One standard way in practice for training is using a \emph{surrogate} loss for empirical risk minimization (ERM), normally convex and smooth (or at least continuous), e.g., the hinge loss, the cross-entropy loss.
Interestingly, the squared loss, originally used for regression, can be also applied for classification with good statistical properties in terms of robustness and calibration error, as systematically discussed in \citet{hui2020evaluation,hu2022understanding}. Therefore, we employ the squared loss in ERM for training, let $ \ell(f,y)=\frac{1}{2}(y-f)^2$,
\begin{equation}\label{eq:ermls}
\min_{\vTheta}~\widehat{L}(\vTheta):= \frac{1}{n}\sum_{i=1}^n \ell (f(\bm {x}_i;\vTheta), y_i) = \mathbb{E}_{\bm x \sim \mathcal{D}_n} \ell (f(\bm {x};\vTheta), y(\bm x))\,,
\end{equation}
where $\mathcal{D}_n$ is the empirical measure of $\mu_X$ over $\{ \bm x_i \}_{i=1}^n$ and note that $y$ is a function of $\bm x$.

We call the probability measure $\rho\in\mathcal{P}^2$ if $\rho$ has the finite second moment, and $\rho\in\mathcal{C}(\mathcal{P}^2;[0,1])$ if $\rho^s\in\mathcal{P}^2,\forall s\in[0,1]$. For $\rho_1,\rho_2\in\mathcal{P}^2$, the 2-Wasserstein distance is denoted by $\mathcal{W}_2(\rho_1,\rho_2)$; and for $\rho_1,\rho_2\in\mathcal{C}(\mathcal{P}^2;[0,1])$, we define $\mathcal{W}_2(\rho_1,\rho_2):=\sup_{s\in[0,1]}\mathcal{W}_2(\rho_1^s,\rho_2^s)$.  

\subsection{ResNets in the infinite depth and width limit}
The continuous formulation of ResNets is a recent approach that uses differential equations to model the behavior of the ResNet. This formulation has the advantage of enabling continuous analysis of the ODE, which can make the analysis of ResNets easier \citep{pmlr-v119-lu20b,ding2021global,ding2022overparameterization,barboni2022global}. We firstly consider the following ResNet \citep{he2016identity} of depth $L$ can be formulated as $\vz_{0}(\vx)=\vx\in\mathbb{R}^d$, and 
\begin{equation}\label{eq:resnet}
\begin{split}
\vz_{l+1}(\vx)&=\vz_{l}(\vx)+\frac{\alpha}{ML}\sum_{m=1}^M \vsigma(\vz_{l}(\vx), \bm \vtheta_{l,m})\in\mathbb{R}^d,\quad  l \in [L-1]\,, \\
f_{\vOmega_K, \vTheta_{L,M}}(\vx) &= \frac{\beta}{K}\sum_{k=1}^K h (\vz_{L}, \vomega_k)\in\mathbb{R}\,,
\end{split}
\end{equation}
where $\vx\in\mathbb{R}^d$ is the input data, $\alpha,\beta\in\mathbb{R}_+$ are the scaling factors. $\vTheta_{L,M}=\{\vtheta_{l,m}\in \mathbb{R}^{k_\nu}\}_{l=0, m=0}^{L-1, M}$ is the parameters of the ResNet encoder $\vsigma: \mathbb{R}^d \rightarrow \mathbb{R}^d$ ( activation functions are implicitly included into $\vsigma$), and $\vOmega_K=\{\vomega_k\in \mathbb{R}^{k_\tau}\}_{k=1}^K$ is the parameters of the predictor $h: \mathbb{R}^d \rightarrow \mathbb{R}$. {We introduce a trainable MLP parametrized by $\vomega$ in the end,} which is different from the fixed linear predictor in \citet{pmlr-v119-lu20b, ding2021global, ding2022overparameterization}. We make the assumptions on the choices of activation function $\vsigma$  and predictor $h$ later in \cref{assum:activation}. Different scaling of $\alpha,\beta$ leads to different training schemes. Note that setting $\alpha=\sqrt{M},\beta=\sqrt{K}$ corresponds to the standard  scaling in the NTK regime~\citep{du2018gradient}, while setting $\alpha=\beta=1$ corresponds to the classical mean field analysis~\citep{MeiE7665,rotskoff2018parameters,pmlr-v119-lu20b,ding2022overparameterization}. We will keep $\alpha,\beta$ as a hyperparameter in our theoretical analysis and determine the choice of $\alpha,\beta$ in future discussions. {Besides, the scaling $1/L$ is necessary to derive the neural ODE limit~\citep{marion2022scaling}, which has been supported by the empirical observations from \citet{bachlechner2021rezero,marion2023implicit}}. 

We then introduce the infinitely deep and wide ResNet which is known as the mean-field limit of deep ResNet \citep{pmlr-v119-lu20b,ma2020machine,ding2022overparameterization}.  
\paragraph{Infinite Depth} To be specific, we re-parametrize the indices $l\in[L]$ in \cref{eq:resnet} with $s=\frac{l}{L}\in[0,1]$. We view $\vz$ in \cref{eq:resnet} as a function in $s$ that satisfies a coupled ODE, with $1/L$ being the stepsize. 
Accordingly, we write $\vtheta_{m}(s) := \vtheta_{m}(l/L)=\vtheta_{l,m}$, and $\vTheta_{M}(s)=\{\vtheta_{m}(s)\}_{m=1}^{M}$. 
The continuous limit of \cref{eq:resnet} by taking $L \rightarrow \infty$ is 
\begin{align}
\label{eq:resnet_ode}
    \frac{\rmd \vz(\vx, s)}{\rmd s}  = \frac{\alpha}{M}\sum_{m=1}^M \vsigma(\vz(\vx, s), \vtheta_{m}(s) ) =\alpha \int_{\mathbb{R}^{k_\nu}}  \vsigma(\vz(\vx, s), \vtheta) \rmd \nu_{M}(\vtheta, s), \quad \vz(\vx,0)=\vx\,,
\end{align}
where the discrete probability $\nu_{M}(\vtheta, s)$ is defined as $\nu_{M}(\vtheta, s) := \frac{1}{M}\sum_{i=1}^M \delta_{\vtheta_{m}(s)}(\vtheta)$. 
Accordingly, the empirical risk in Eq.~\eqref{eq:ermls} can be written as 
\begin{align}
    \label{eq:loss_in_s}
    \widehat{L}(\vOmega_K, \vTheta_{M}) := \mathbb{E}_{\vx\sim {{\mathcal{D}}_n}} \ \ell(f_{\vOmega_K, \vTheta_{M}}(\vx) ,y(\vx))\,.
\end{align}
\paragraph{Infinite Width}
The mean-field limit is obtained by considering a ResNet of infinite width, i.e. $M\to \infty$. Denoting the limiting density of $\nu_{M}(\vtheta, s)$ by $\nu(\vtheta, s)\in\mathcal{C}(\mathcal{P}^2; [0,1])$, \cref{eq:resnet_ode} can be written as
\begin{equation}
\label{eq:resnet_meanfield}
        \frac{\rmd \vz(\vx, s)}{\rmd s} = \alpha\cdot \int_{\mathbb{R}^{k_\nu}}  \vsigma(\vz(\vx, s), \vtheta) \rmd \nu(\vtheta, s) ,\quad  s\in [0,1],\quad \vz(\vx,0)=\vx\,. 
\end{equation}
We denote the solution of \cref{eq:resnet_meanfield} as $\vZ_{\nu}(\vx, s)$. 
Besides, we also take the infinite width limit in the final layer, i.e. $K\to \infty$, and then the limiting density of $\vomega$ is $\tau(\vomega)$. The whole network can be written as 
\begin{align}
\label{eq: whole_in_tau_nu}
f_{{\tau,\nu}}(\vx) :=\beta\cdot \int_{\mathbb{R}^{k_\tau}}h(\vZ_{\nu}(\vx, 1),\vomega)\rmd \tau(\vomega)\,,
\end{align}
and the empirical loss in Eq.~\eqref{eq:ermls} can be defined as:
\begin{align}
    \label{eq:loss_in_rho}
    \widehat{L}(\tau,\nu) := \mathbb{E}_{\vx\sim {{\mathcal{D}}}_n}\ \ell(f_{\tau,\nu}(\vx) ,y(\vx))\,.
\end{align}

\subsubsection{Parameter Evolution}
In the discrete ResNet \eqref{eq:resnet}, consider minimizing the empirical loss $\widehat{L}(\vOmega_K, \vTheta_{L,M})$ with an infinitesimally small learning rate, the updating process can be characterized by the particle gradient flow, see Definition 2.2 in \citet{NEURIPS2018_a1afc58c}:
\begin{align}
    \frac{\rmd \vOmega_K(t)}{\rmd t} &= -K\nabla_{\vOmega_k} \widehat{L}(\vOmega_K(t), \vTheta_{L,M}(t)),\label{eq:dis_gd_meanfield_0_mu} \\
        \frac{\rmd \vTheta_{L,M}(t)}{\rmd t} &=  - LM\nabla_{\vTheta_{L,M}} \widehat{L}(\vOmega_K(t), \vTheta_{L,M}(t)) \,,\label{eq:dis_gd_meanfield_0_theta}
\end{align}
where $t$ is the rescaled pseudo-time, which amounts to assigning a $\frac{1}{K}$ or $\frac{1}{LM}$ mass to each particle, and is convenient to take the many-particle limit. 

In the continuous ResNet \eqref{eq:resnet_meanfield}, we use the gradient flow in the Wasserstein metric to characterize the evolution of ${\tau,\nu}$~\citep{NEURIPS2018_a1afc58c}. 
The evolution of the final layer distribution $\tau(\vomega)$ can be characterized as 
\begin{align}\label{eq:gd_meanfield_tau}
\frac{\partial \tau}{\partial t}(\vomega,t)= \nabla_{\vomega}  \cdot \left(\tau(\vomega,t) \nabla_{\vomega}\frac{\delta \widehat{L}({\tau,\nu})}{\delta \tau}(\vomega, t)
\right), \quad t\geq 0\,,
\end{align}
where
\begin{align}\label{eq:diff_L_rho_tau}
            \frac{\delta \widehat{L}({\tau,\nu})}{\delta \tau}(\vomega)&=\mathbb{E}_{\vx\sim{{\mathcal{D}_n}}} [\beta\cdot  (f_{{\tau,\nu}}(\vx)-y(\vx))\cdot h(\vZ_{\nu}(\vx, 1),\vomega)]\,.
\end{align}
In addition, the evolution of the ResNet layer $\nu(\vtheta, s)$ can be characterized as 
\begin{align}\label{eq:gd_meanfield_nu}
\frac{\partial \nu}{\partial t}(\vtheta,s,t)= \nabla_{\vtheta}  \cdot \left(\nu(\vtheta,s,t) \nabla_{\vtheta}\frac{\delta \widehat{L}({\tau,\nu})}{\delta \nu}(\vtheta,s,t)
\right), \quad t\geq 0\,.
\end{align}

From the results in \citet{pmlr-v119-lu20b,ding2022overparameterization, ding2021global}, we can compute the functional derivative as follows:
\begin{align}
        \frac{\delta \widehat{L}({\tau,\nu})}{\delta \nu}(\vtheta, s)&=\mathbb{E}_{\vx\sim{{\mathcal{D}_n}}} [\beta\cdot  (f_{{\tau,\nu}}(\vx)-y(\vx)) 
        \cdot\vomega^\top \frac{\partial\vZ_\nu(\vx, 1)}{\partial\vZ_\nu(\vx, s)}\frac{\delta \vZ_\nu(\vx, s)}{\delta \nu}(\vtheta, s)]\\
        &=\mathbb{E}_{\vx\sim{{\mathcal{D}_n}}} [\beta\cdot  (f_{{\tau,\nu}}(\vx)-y(\vx))\cdot{\vp}^\top_{\nu}(\vx, s)\cdot \alpha \cdot\vsigma(\vZ_{\nu}(\vx, s),\vtheta)]\,,\label{eq:diff_L_rho_nu} 
\end{align}
where ${\vp}_{\nu}\in\mathbb{R}^{k_\nu}$, parameterized by $\vx, s,\nu$, is the solution to the following adjoint ODE, with initial condition dependent on $\tau$:
\begin{align}\label{eq:p_ode}
    \frac{\rmd {\vp}^\top_{\nu}}{\rmd s}(\vx, s) &= - \alpha \cdot {\vp}^\top_{\nu}(\vx,s) \int_{\mathbb{R}^{k_\nu}} \nabla_{\vz} \vsigma (\vZ_{\nu}(\vx, s),\vtheta)\rmd\nu(\vtheta, s)\,, \\
     {\vp}_{\nu}^\top(\vx, 1)&=\int_{\mathbb{R}^{k_\tau}} \nabla_{\vz} h(\vZ_{\nu}(\vx, 1), \vomega) \rmd \tau(\vomega)\,.
\end{align}
For the linear ODE \eqref{eq:p_ode}, we can directly obtain the explicit formula, ${\vp}_{\nu}^\top(\vx, s) ={\vp}^\top_{\nu}(\vx,1) \vq_{\nu}(\vx, s)$, where $\vq_\nu(\vx, s)$ is the exponentially scaling matrix defined in \cref{eq: explicit_p}. The correctness of solution \eqref{eq: explicit_p} can be verified by taking the gradient w.r.t. $s$ at both sides.
\begin{align}
\label{eq: explicit_p}
    \vq_{\nu}(\vx, s) = \exp\left(\alpha
    \int_s^1\int_{\mathbb{R}^{k_\nu}} \nabla_{\vz}\vsigma(\vZ_\nu(\vx, s'), \vtheta) \rmd \nu(\vtheta, s')   \right).
\end{align}

\subsection{Assumptions}
 In the following, we use the upper subscript for ResNet ODE layer $s\in[0,1]$, and the lower subscript for training time $t\in[0,+\infty)$.  For example, $\tau_t(\vomega):=\tau(\vomega,t)$, and $\nu_t^s(\vtheta):=\nu(\vtheta, s,t)$. First, we assume the boundedness and second moment of the dataset by \cref{assum:data}. 
\begin{assumption}[Assumptions on data]\label{assum:data} We assume that for $\vx_i\neq \vx_j\sim \mu_X$, the following holds with probability 1, 
\begin{align*}
    \|\vx_i\|_2= 1, |y(\vx_i)|\leq 1,  \langle \vx_i,\vx_j\rangle \leq C_{\max} < 1\,, \forall i, j \in [n]\,.
\end{align*}
\end{assumption}
{\bf Remark:} The assumption, i.e., $\bm x_i, \bm x_j$ being not parallel, is attainable and standard in the analysis of neural networks~\citep{du2018gradient,zhu2022generalization}. 

Second, we adopt the standard Gaussian initialization for distribution $\tau$ and $\nu$.  
\begin{assumption}[Assumption on initialization]
\label{assum:pde_init}
    The initial distribution ${\tau_0,\nu_0}$ is standard Gaussian: $    (\tau_0,\nu_0)(\vomega,\vtheta, s)\propto \exp \left(-\frac{\|\vomega\|_2^2+\|\vtheta\|_2^2}{2} \right),\forall s\in[0,1]$.
\end{assumption}

Next, we adopt the following assumption on activation $\vsigma,h$ in terms of formulation and smoothness. The widely used activation functions, such as Sigmoid, Tanh, satisfy this assumption. 
\begin{assumption}[Assumptions on activation $\vsigma,h$]\label{assum:activation}
Let $\vtheta :=( \vu, \vw, b )\in\mathbb{R}^{k_\nu}$, where $\vu,\vw\in \mathbb{R}^{k_\nu}, b\in \mathbb{R}$, i.e. $k_\nu=2d+1$; $\vomega :=( a, \vw, b )\in\mathbb{R}^{k_\tau}$, where $\vw\in \mathbb{R}^{k_\nu}, a,b\in \mathbb{R}$, i.e. $k_\tau=d+2$. 

For any $\vz\in\mathbb{R}^{k_\nu}$, we assume
\begin{align}
    \label{eq:def_C_1}\vsigma(\vz,\vtheta) = \vu \sigma_0(\vw^\top \vz+b),\quad h(\vz,\vomega) = a \sigma_0(\vw^\top \vz+b),\quad \sigma_0:\mathbb{R}\to\mathbb{R}.
\end{align}
In addition, we have the following assumption on $\sigma_0$. 
$|\sigma_0(x)|\leq C_1 \max(|x|,1), |\sigma_0'(x)|\leq C_1, |\sigma_0''(x)|\leq C_1$, and let $\mu_i(\sigma_0)$ be the $i$-th Hermite coefficient of $\sigma_0$. 
\end{assumption}
 
Based on our description of the evolution of deep ResNets and standard assumptions, we are ready to present our main results on optimization and generalization in the following section.

\section{Main results}\label{sec: main_results}
In this section, we derive a quantitative estimation of the convergence rate of optimizing the ResNet. 
Our main results are three-fold: a) minimum eigenvalue estimation of the Gram matrix during the training dynamics which controls the training speed; b) a quantitative estimation of KL divergence between the weight destruction of trained ResNet with initialization; c) Rademacher complexity generalization guarantees for the trained ResNet.

\subsection{Gram Matrix and Minimum Eigenvalue}
\label{sec:gram}
The training dynamics is governed by the Gram matrix of the coordinate tangent vectors to the functional derivatives. In this section, we bound the minimum eigenvalue of the Gram matrix of the gradients through the whole training dynamics, which controls the convergence of gradient flow.

In the lazy training regime \citep{NEURIPS2018_5a4be1fa},  the Gram matrix converges pointwisely to the NTK as the width approaches infinity. Hence one only needs to bound the Gram matrix's minimum eigenvalue at initialization, and the global convergence rate under gradient descent is bounded by the minimum eigenvalue of NTK. In our setting under the mean-field regime, we also need similar Gram matrix/matrices to aid our proof on the training dynamics, but we do not rely on their convergence to the NTK when the width approaches infinity. Instead, we consider the Gram matrix of the limiting mean-field model \cref{eq: whole_in_tau_nu}.  

For the ResNet parameter distribution $\nu$, we define 
one Gram matrix $\vG_1({\tau,\nu})\in\mathbb{R}^{n\times n}$ by
\begin{align}\label{eq:gram_1}
\vG_1({\tau,\nu}) = \int_0^1 \vG_1({\tau,\nu},s) \rmd s,\quad \vG_1({\tau,\nu},s) = \mathbb{E}_{\vtheta\sim\nu(\cdot, s)}\vJ_1(\tau,\nu,\vtheta,s)\vJ_1(\tau,\nu,\vtheta,s)^\top\,, 
\end{align}
where the row vector of $\vJ_1$ is defined as
\begin{align*}
\left(\vJ_1(\tau, \nu,\vtheta,s)\right)_{i,\cdot} = {\vp}^\top_{\nu 
}(\vx_i,s) \nabla_{\vtheta}\vsigma(\vZ_{\nu}(\vx_i, s),\vtheta),\quad 1\leq i\leq n\,,
\end{align*}
where the dependence on $\tau$ on the right side of the equality is from the initial condition ${\vp}_{\nu}^\top(\vx, 1)$. We also define the Gram matrix for the MLP parameter distribution $\tau$, $\vG_2({\tau,\nu})\in\mathbb{R}^{n\times n}$ by
\begin{align}\label{eq:gram_2}
\vG_2({\tau,\nu}) = \mathbb{E}_{\vomega\sim\tau(\cdot)}\vJ_2(\nu,\vomega)\vJ_2(\nu,\vomega)^\top\,,
\end{align}
where the row vector of $\vJ_2$ is defined as
\begin{align*}
    \left(\vJ_2(\nu,\vomega)\right)_{i,\cdot} =\nabla_{\vomega} h(\vZ_{\nu}(\vx_i,1),\vomega),\quad 1\leq i\leq n\,.
\end{align*}

 We characterize the training dynamics of the neural networks by the following theorem (the proof deferred to \cref{sec:gradient_flow}), which demonstrates the relationship between the gradient flow of the loss and those of functional derivatives.
\begin{theorem}
\label{thm: main_gd} The training dynamics of $\widehat{L}({\tau_t,\nu_t})$ can be written as:
\begin{align*}
    \frac{\rmd \widehat{L}({\tau_t,\nu_t})}{\rmd t} = -\int_0^1\int_{\mathbb{R}^{k_\nu}}\left\|
    \nabla_{\vtheta}  \frac{\delta \widehat{L}({\tau_t,\nu_t})}{\delta \nu_t}(\vtheta, s)
    \right\|_2^2\rmd \nu_t(\vtheta,s) - \int_{\mathbb{R}^{k_\nu}} \left\|
    \nabla_{\vomega} \frac{\delta \widehat{L}({\tau_t,\nu_t})}{\delta \tau_t}(\vomega)
    \right\|_2^2 \rmd \tau_t(\vomega)\,.
\end{align*}
\end{theorem}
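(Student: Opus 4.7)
The plan is to differentiate $\widehat{L}(\tau_t,\nu_t)$ in $t$ by the chain rule of functional derivatives, substitute the continuity equations \eqref{eq:gd_meanfield_tau} and \eqref{eq:gd_meanfield_nu}, and then integrate by parts to turn the divergence operator on the densities into a gradient acting on the first variations. This is the standard energy dissipation identity for a Wasserstein gradient flow of the energy $\widehat{L}$.

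First I would write
\begin{align*}
\frac{\rmd \widehat{L}(\tau_t,\nu_t)}{\rmd t}
= \int_{\mathbb{R}^{k_\tau}} \frac{\delta \widehat{L}(\tau_t,\nu_t)}{\delta \tau_t}(\vomega)\, \partial_t \tau_t(\vomega)\,\rmd\vomega
+ \int_0^1\!\! \int_{\mathbb{R}^{k_\nu}} \frac{\delta \widehat{L}(\tau_t,\nu_t)}{\delta \nu_t}(\vtheta,s)\, \partial_t \nu_t(\vtheta,s)\,\rmd\vtheta\,\rmd s,
\end{align*}
which is legitimate because the first variations $\delta\widehat{L}/\delta\tau$ and $\delta\widehat{L}/\delta\nu$ exist in closed form via \eqref{eq:diff_L_rho_tau} and \eqref{eq:diff_L_rho_nu}. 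The joint partial derivatives can be handled by separately freezing one argument while differentiating in the other, since the two PDEs evolve independently in time given the current state.

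Next I would substitute the continuity equations: $\partial_t\tau_t = \nabla_\vomega\!\cdot\!\bigl(\tau_t\,\nabla_\vomega \tfrac{\delta\widehat{L}}{\delta\tau_t}\bigr)$ and $\partial_t\nu_t = \nabla_\vtheta\!\cdot\!\bigl(\nu_t\,\nabla_\vtheta \tfrac{\delta\widehat{L}}{\delta\nu_t}\bigr)$. Integration by parts in $\vomega$ (respectively $\vtheta$) then gives
\begin{align*}
\int \frac{\delta\widehat{L}}{\delta\tau_t}(\vomega)\,\nabla_\vomega\!\cdot\!\!\left(\tau_t \nabla_\vomega \tfrac{\delta\widehat{L}}{\delta\tau_t}\right)\rmd\vomega
= -\int \left\|\nabla_\vomega \tfrac{\delta\widehat{L}}{\delta\tau_t}(\vomega)\right\|_2^2 \rmd\tau_t(\vomega),
\end{align*}
and analogously for the $\nu$-term, yielding exactly the statement of the theorem. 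The boundary terms at infinity vanish because under \cref{assum:pde_init} and \cref{assum:activation}, the density $\tau_t$ (resp.\ $\nu_t^s$) has Gaussian-type tails propagated by the drift $\nabla\tfrac{\delta\widehat{L}}{\delta\tau_t}$, and $\tfrac{\delta\widehat{L}}{\delta\tau_t},\tfrac{\delta\widehat{L}}{\delta\nu_t}$ grow at most polynomially in $\vomega,\vtheta$ by the structure \eqref{eq:def_C_1} of $\vsigma,h$ together with the bound $|\sigma_0'|\le C_1$.

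The main obstacle is the rigorous justification of the chain rule and of swapping $\partial_t$ with the integral against the empirical distribution: in a Wasserstein gradient flow, this requires either sufficient regularity of $\tau_t,\nu_t$ (e.g.\ absolute continuity with smooth density) or the use of the Otto calculus / the Benamou--Brenier identity for the time derivative of an energy along a curve in $\mathcal{P}^2$. Given the Gaussian initialization and the smoothness of the velocity field induced by $\nabla\tfrac{\delta\widehat{L}}{\delta\tau_t}$ and $\nabla\tfrac{\delta\widehat{L}}{\delta\nu_t}$ under \cref{assum:activation}, standard parabolic theory ensures smoothness of the densities for all $t\ge 0$, which makes both the integration by parts and the interchange of limits valid. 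Once these regularity conditions are in place, the computation collapses to the two display lines above.
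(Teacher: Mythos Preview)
Your proposal is correct and follows the standard Wasserstein energy--dissipation argument: write the functional chain rule against the first variations, substitute the continuity equations \eqref{eq:gd_meanfield_tau}--\eqref{eq:gd_meanfield_nu}, and integrate by parts. The paper takes a different, more hands-on route. Rather than invoking the chain rule at the level of measures, it works with the finite increment $\widehat{L}(\tau_t,\nu_t)-\widehat{L}(\tau_{t_0},\nu_{t_0})$, expands the quadratic loss, and splits $f_{\tau_t,\nu_t}(\vx)-f_{\tau_{t_0},\nu_{t_0}}(\vx)$ into a $\tau$-part and a $\nu$-part. For the $\nu$-part it linearizes the forward ODE \eqref{eq:resnet_meanfield} in $(t-t_0)$ to obtain $(\vZ_{\nu_t}-\vZ_{\nu_{t_0}})(\vx,1)$ explicitly, then pairs this with the adjoint variables $\vp_{\nu_{t_0}},\vq_{\nu_{t_0}}$ from \eqref{eq:p_ode}--\eqref{eq: explicit_p}. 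In effect the paper \emph{rederives} the first-variation formula \eqref{eq:diff_L_rho_nu} and the chain rule simultaneously through ODE perturbation and first-order Taylor expansion, so it never appeals to Otto calculus or to parabolic regularity of the densities. Your approach is shorter and conceptually cleaner once the Wasserstein chain rule is taken as given; the paper's approach is longer but self-contained, needing only pointwise ODE estimates rather than any regularity of $\tau_t,\nu_t$ as densities.
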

From the definition of functional derivatives $\frac{\delta \widehat{L}({\tau_t,\nu_t})}{\delta \nu_t}(\vtheta, s)$ and $\frac{\delta \widehat{L}({\tau_t,\nu_t})}{\delta \tau_t}(\vomega)$, we immediately obtain 
\cref{prop:gradient_rho_gram}, an extension of \cref{thm: main_gd}, which demonstrates that the training dynamics can be controlled by the corresponding Gram matrices.  
\begin{proposition}\label{prop:gradient_rho_gram}
Let $\vb_t = (f_{{\tau_t,\nu_t}}(\vx_1)-y(\vx_1),\cdots, f_{{\tau_t,\nu_t}}(\vx_n)-y(\vx_n))$, using the Gram matrix defined in \cref{eq:gram_1,eq:gram_2}, the training dynamics of $\widehat{L}({\tau_t,\nu_t})$ can be written as:
\begin{align*}
    \frac{\rmd \widehat{L}({\tau_t,\nu_t})}{\rmd t} = - \frac{\beta^2}{n^2}\vb_t^\top(\alpha^2\vG_1({\tau_t,\nu_t})+\vG_2({\tau_t,\nu_t}))\vb_t\,.
\end{align*}
\end{proposition}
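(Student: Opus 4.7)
The plan is to specialize Theorem~\ref{thm: main_gd} by substituting the explicit expressions for the functional derivatives $\tfrac{\delta \widehat{L}}{\delta \nu_t}$ and $\tfrac{\delta \widehat{L}}{\delta \tau_t}$ given in \cref{eq:diff_L_rho_tau,eq:diff_L_rho_nu}, and then recognising the resulting quadratic forms as precisely the Gram matrices $\vG_1$ and $\vG_2$ contracted against the residual vector $\vb_t$. Since the loss is an average $\widehat{L} = \tfrac{1}{n}\sum_i \tfrac{1}{2}(f_{\tau,\nu}(\vx_i) - y(\vx_i))^2$ evaluated over the empirical measure $\mathcal{D}_n$, the expectation $\mathbb{E}_{\vx\sim\mathcal{D}_n}[\cdot]$ collapses into $\tfrac{1}{n}\sum_{i=1}^n[\cdot]$, which is the origin of both the $1/n$ factor and the appearance of $\vb_t$ in vectorised form.

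First I would compute $\nabla_\vomega \tfrac{\delta \widehat{L}(\tau_t,\nu_t)}{\delta \tau_t}(\vomega)$ by differentiating \cref{eq:diff_L_rho_tau} under the (finite) sum; using the definition $(\vJ_2(\nu_t,\vomega))_{i,\cdot}=\nabla_\vomega h(\vZ_{\nu_t}(\vx_i,1),\vomega)$, this yields the row vector $\nabla_\vomega \tfrac{\delta \widehat{L}}{\delta \tau_t}(\vomega) = \tfrac{\beta}{n}\,\vb_t^\top \vJ_2(\nu_t,\vomega)$. Squaring and integrating against $\tau_t$ gives
\begin{equation*}
\int_{\mathbb{R}^{k_\tau}}\Big\|\nabla_\vomega \tfrac{\delta \widehat{L}}{\delta \tau_t}(\vomega)\Big\|_2^2\,\mathrm{d}\tau_t(\vomega)
= \tfrac{\beta^2}{n^2}\,\vb_t^\top\Big(\mathbb{E}_{\vomega\sim\tau_t}\vJ_2\vJ_2^\top\Big)\vb_t
= \tfrac{\beta^2}{n^2}\,\vb_t^\top \vG_2(\tau_t,\nu_t)\vb_t,
\end{equation*}
using the definition \cref{eq:gram_2}.

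Next I would treat the $\nu_t$ term analogously. Differentiating \cref{eq:diff_L_rho_nu} in $\vtheta$ (noting that $\vZ_{\nu_t}(\vx_i,s)$ and $\vp_{\nu_t}(\vx_i,s)$ do not depend on $\vtheta$), and invoking the definition $(\vJ_1(\tau_t,\nu_t,\vtheta,s))_{i,\cdot} = \vp^\top_{\nu_t}(\vx_i,s)\,\nabla_\vtheta \vsigma(\vZ_{\nu_t}(\vx_i,s),\vtheta)$, one obtains $\nabla_\vtheta \tfrac{\delta \widehat{L}}{\delta \nu_t}(\vtheta,s) = \tfrac{\alpha\beta}{n}\,\vb_t^\top \vJ_1(\tau_t,\nu_t,\vtheta,s)$. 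Taking the squared norm, integrating first against $\nu_t(\cdot,s)$ and then in $s\in[0,1]$, and using \cref{eq:gram_1}, gives $\tfrac{\alpha^2\beta^2}{n^2}\vb_t^\top \vG_1(\tau_t,\nu_t)\vb_t$. Summing the two contributions per Theorem~\ref{thm: main_gd} and factoring $\beta^2/n^2$ yields the claim.

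There is essentially no real obstacle here: the argument is a direct substitution, with the only subtlety being the exchange of differentiation and integration. This is legitimate because the expectation in $\vx$ is a finite sum of $n$ terms, and because \cref{assum:activation} ensures $\sigma_0, \sigma_0', \sigma_0''$ are bounded so that $\nabla_\vomega h$ and $\nabla_\vtheta \vsigma$ depend smoothly on $(\vomega,\vtheta)$; the dominated-convergence conditions needed for swapping $\nabla_\vomega$ or $\nabla_\vtheta$ with $\mathbb{E}_{\vx\sim\mathcal{D}_n}$ are thus trivially met. Hence the proposition follows as a direct corollary of Theorem~\ref{thm: main_gd}.
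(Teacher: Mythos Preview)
Your proposal is correct and follows essentially the same route as the paper: start from Theorem~\ref{thm: main_gd}, substitute the explicit functional derivatives \cref{eq:diff_L_rho_tau,eq:diff_L_rho_nu}, and identify the resulting quadratic forms with $\vG_1$ and $\vG_2$ via the definitions of $\vJ_1,\vJ_2$. The paper's proof is in fact terser than yours (it simply expands the squared norms into double sums over $i,j$ and reads off the Gram structure), so your version is a slightly more detailed rendering of the same direct substitution argument.
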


Our analysis mainly relies on the minimum eigenvalue of the Gram matrix, which is commonly used in the analysis of overparameterized neural network \citep{arora2019fine,chen2020generalized}. The minimum eigenvalue of the Gram matrix controls the convergence rate of the gradient descent. 

We remark that the Gram matrix $\vG_1(\tau_t,\nu_t)$ is always positive semi-definite for any $t\geq 0$, and $\vG_1(\tau_0,\nu_0)=\mathbf{0}_{n \times n}$. Therefore, we only need to bound the minimum eigenvalue of $\vG_2(\tau_t,\nu_t)$. 
First, we present such result under initialization, i.e., the lower bound of $\lambda_{\min}(\vG_2(\tau_0,\nu_0))$ by the following lemma. The proof is deferred to \cref{sec:minimal_eigen_init}.
\begin{lemma}
\label{lemma: minimal_eigenvalue}
Under \cref{assum:data}, \ref{assum:pde_init}, \ref{assum:activation}, there exist a constant $\Lambda:=\Lambda(d)$, only depending on the dimension $d$, such that $\lambda_{\min}[\vG(\tau_0,\nu_0)]$ is lower bounded by
\begin{align*}
    \lambda_0:=\lambda_{\min}(\vG(\tau_0,\nu_0))\geq \lambda_{\min}(\vG_2(\tau_0,\nu_0))\geq \Lambda(d)\,. 
\end{align*}
\end{lemma}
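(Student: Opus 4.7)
The plan is to trivialize $\vZ_{\nu_0}$ using independence in the initial Gaussian, compute $\vG_2(\tau_0,\nu_0)$ in closed form, and then lower bound its smallest eigenvalue via a Hermite / spherical-harmonic decomposition on a lifted input.

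\textbf{Steps 1--2 (trivializing the ODE, closed form for $\vG_2$).} By \cref{assum:activation} and \cref{assum:pde_init}, the coordinates $\vu$ and $(\vw, b)$ of $\vtheta$ are independent under $\nu_0$ with $\mathbb{E}[\vu] = \vzero$, so $\int \vsigma(\vz, \vtheta)\,\rmd\nu_0(\vtheta, s) = \mathbb{E}[\vu]\,\mathbb{E}[\sigma_0(\vw^\top \vz + b)] = \vzero$. This kills the drift of \cref{eq:resnet_meanfield} and forces $\vZ_{\nu_0}(\vx_i, 1) = \vx_i$. Substituting into $\nabla_\vomega h$ for $h(\vz, \vomega) = a\sigma_0(\vw^\top \vz + b)$ and integrating over $\tau_0$ with $\mathbb{E}[a^2] = 1$ gives
\begin{align*}
\vG_2(\tau_0, \nu_0)_{ij} = \mathbb{E}_{\hat{\vw}}\bigl[\sigma_0(\hat{\vw}^\top \hat{\vx}_i)\sigma_0(\hat{\vw}^\top \hat{\vx}_j)\bigr] + (1 + \vx_i^\top \vx_j)\,\mathbb{E}_{\hat{\vw}}\bigl[\sigma_0'(\hat{\vw}^\top \hat{\vx}_i)\sigma_0'(\hat{\vw}^\top \hat{\vx}_j)\bigr],
\end{align*}
where $\hat{\vx}_i := (\vx_i, 1) \in \mathbb{R}^{d+1}$ and $\hat{\vw} := (\vw, b) \sim \mathcal{N}(0, \vI_{d+1})$. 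Both summands are PSD, so it suffices to lower bound the first.

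\textbf{Step 3 (Hermite expansion on $S^d$).} Lift to the unit sphere via $\bar{\vx}_i := \hat{\vx}_i/\sqrt{2} \in S^d$; by \cref{assum:data}, $\bar{\vx}_i^\top \bar{\vx}_j \leq (1 + C_{\max})/2 < 1$ off-diagonal and $=1$ on-diagonal. Expanding the rescaled activation $\tilde\sigma_0(u) := \sigma_0(\sqrt{2}\,u) = \sum_{k \geq 0} c_k He_k(u)$ in the probabilist Hermite basis and invoking Mehler's identity,
\begin{align*}
\mathbb{E}_{\hat{\vw}}\bigl[\sigma_0(\hat{\vw}^\top \hat{\vx}_i)\sigma_0(\hat{\vw}^\top \hat{\vx}_j)\bigr] = \sum_{k \geq 0} k!\, c_k^2\,(\bar{\vx}_i^\top \bar{\vx}_j)^k,
\end{align*}
a convergent sum of PSD dot-product kernels with non-negative weights (the boundedness in \cref{assum:activation} guarantees convergence). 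Since $\sigma_0$ is non-constant by \cref{assum:activation}, at least one $c_{k^\ast}$ with $k^\ast \geq 1$ is non-zero; the Gegenbauer expansion of $(\bar{\vx}_i^\top \bar{\vx}_j)^{k^\ast}$ then writes it as a non-negative combination of reproducing kernels of spherical-harmonic subspaces of $S^d$, and the target eigenvalue bound follows by isolating one such harmonic layer.

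\textbf{Main obstacle.} The delicate part is turning Step 3 into a quantitative $\Lambda(d) > 0$ that is uniform in $n$. The separation $C_{\max} < 1$ constrains admissible $\bar{\vx}_i$'s to a $d$-dependent packing of $S^d$, implicitly capping $n$; one then chooses $k^\ast$ so that the corresponding harmonic-feature map is injective on the data, giving a floor depending only on $d$ and $|c_{k^\ast}|$. Carefully balancing the choice of $k^\ast$ against the separation constant is what pins down $\Lambda(d)$; the remaining manipulations are standard Gaussian/Hermite bookkeeping.
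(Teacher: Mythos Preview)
Your Steps 1--2 and the Hermite expansion in Step 3 match the paper's proof exactly: it too uses the Gaussian symmetry to force $\vZ_{\nu_0}(\vx,s)=\vx$, lower-bounds $\vG_2(\tau_0,\nu_0)$ by the single PSD summand $\mathbb{E}\bigl[\sigma_0(\hat{\vw}^\top\hat{\vx}_i)\,\sigma_0(\hat{\vw}^\top\hat{\vx}_j)\bigr]$ with lifted inputs $\hat{\vx}_i=(\vx_i,1)$, and expands this as a non-negative Hermite / Hadamard-power series in $\hat{\vx}_i^\top\hat{\vx}_j$.

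Where you diverge is the finishing move. The paper does not pass to Gegenbauer polynomials, spherical-harmonic layers, or a packing argument; it simply retains a single Hadamard power of degree $r\ge \tfrac{2\log(2n)}{1-C_{\max}}$ and applies the Gershgorin circle theorem: since the off-diagonal cosines of the lifted inputs are at most $(1+C_{\max})/2<1$, the $r$-th Hadamard power is diagonally dominant, yielding $\lambda_{\min}\ge \mu_r(\sigma_0)^2/2$. This is shorter and fully explicit, though note that the resulting constant depends on $n$ and $C_{\max}$ through the choice of $r$, so the paper's own computation does not literally deliver a $\Lambda$ depending only on $d$ either; your packing remark (the separation caps $n$ by a function of $d$ and $C_{\max}$) is one natural way to reconcile that. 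Your harmonic-layer route is a legitimate alternative in principle, but the step from ``the harmonic feature map is injective on the data'' to a \emph{uniform quantitative} floor is exactly where the work lies and remains a sketch in your write-up --- the Gershgorin shortcut sidesteps this entirely.
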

{{\bf Remark:} 
Using the stability of the ODE model, we derive the KL divergence by virtue of the structure of the ResNet, build the lower bound of $\lambda_{\min}(\bm G_2)$, and prove the global convergence.
In fact, our results, e.g., global convergence, and KL divergence can also depend on $\bm G_1$ by taking $\Lambda(t) := \alpha^2 \lambda_{\min}[\bm G_1(t)] + \lambda_{\min}(\bm G_2)$ in Lemma~\ref{lemma: linear_L}.
Due to $\lambda_{\min}[\bm G_1(t)] \geq 0$ for any $t$, we only use $\lambda_{\min}(\bm G_2) \geq \Lambda(d)$ in Lemma~\ref{lemma: minimal_eigenvalue} for proof simplicity. Our model degenerates to a two-layer neural network if the residual part is removed (can be regarded as an identity mapping).
}

Second, for $\tau,\nu$ different from initialization $\tau_0,\nu_0$, we first prove in the finite time $t<t_{\max}$, we have the minimum eigenvalue is lower bounded $\lambda_{\min}(\vG_2(\tau_t,\nu_t))\geq \lambda_0/2$. In the next, we choose a proper scaling of $\alpha,\beta$, such that $t_{\max}=\infty$, so that we obtain a global guarantee. 

The proof is deferred to \cref{sec:minimal_eigen_init}.
\begin{lemma}
\label{lemma: G_bound_in_R_main}
There exists $r_{\max}$, such that, for $\nu\in\mathcal{C}(\mathcal{P}^2;[0,1])$ and $\tau\in\mathcal{P}^2$ satisfying $\max\{\mathcal{W}_2(\nu,\nu_0),\mathcal{W}_2(\tau,\tau_0)\}\leq r_{\max}$, we have $\lambda_{\min}(\vG_2(\tau,\nu))\geq \frac{\lambda_0}{2}$. 
\end{lemma}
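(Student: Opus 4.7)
The plan is to show that $\vG_2(\tau,\nu)$ depends continuously on $(\tau,\nu)$ in the Wasserstein topology, with an operator-norm modulus linear in the distances, and then combine this with Weyl's inequality and the initialization bound $\lambda_{\min}(\vG_2(\tau_0,\nu_0)) \geq \Lambda(d) = \lambda_0$ from \cref{lemma: minimal_eigenvalue}. Concretely, by Weyl,
\begin{equation*}
\lambda_{\min}(\vG_2(\tau,\nu)) \geq \lambda_0 - \|\vG_2(\tau,\nu) - \vG_2(\tau_0,\nu_0)\|_{\rm op},
\end{equation*}
and via triangle inequality the right-hand difference splits as $\|\vG_2(\tau,\nu)-\vG_2(\tau_0,\nu)\|_{\rm op} + \|\vG_2(\tau_0,\nu)-\vG_2(\tau_0,\nu_0)\|_{\rm op}$, which I handle separately. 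Throughout I use $\|\cdot\|_{\rm op}\leq \|\cdot\|_F$ and the fact that each matrix is $n\times n$, so it suffices to bound individual entries.

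For the first term (dependence on $\tau$ with $\nu$ fixed), take an optimal $\mathcal{W}_2$-coupling $\pi$ of $\tau$ and $\tau_0$ and write the entrywise difference as $\int [\vJ_2(\nu,\vomega)\vJ_2(\nu,\vomega)^\top - \vJ_2(\nu,\vomega')\vJ_2(\nu,\vomega')^\top]\,\rmd\pi(\vomega,\vomega')$. By \cref{assum:activation}, $h(\vz,\vomega)=a\sigma_0(\vw^\top\vz+b)$ is $C^2$ in $\vomega$ with controlled polynomial growth of $\nabla_\vomega h$ and a Lipschitz $\sigma_0'$. Using $|xy-x'y'|\leq |x-x'|(|y|+|y'|)/2 + |x+x'||y-y'|/2$ together with Cauchy–Schwarz against $\pi$, and absorbing moments of $\vomega$ under $\tau_0$ (Gaussian by \cref{assum:pde_init}) and under $\tau$ (bounded because $\mathcal{W}_2(\tau,\tau_0)\leq r_{\max}$ controls the second moment), this term is bounded by $C_\tau\,\mathcal{W}_2(\tau,\tau_0)$ for an explicit $C_\tau$ depending on $d,C_1,$ and $\sup_i\|\vZ_\nu(\vx_i,1)\|$. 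A Grönwall estimate on \cref{eq:resnet_meanfield} (again using the polynomial growth and the finite second moment of $\nu^s$) gives a uniform bound on $\|\vZ_\nu(\vx_i,1)\|$ whenever $\mathcal{W}_2(\nu,\nu_0)\leq r_{\max}$.

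For the second term (dependence on $\nu$ with $\tau=\tau_0$ fixed), I first establish Lipschitzness of the flow $\nu\mapsto \vZ_\nu(\vx,1)$ in $\mathcal{W}_2$: couple $\nu^s$ with $\nu_0^s$ optimally for each $s$, subtract the ODEs, and apply Grönwall using $\|\vsigma(\vz,\vtheta)-\vsigma(\vz',\vtheta')\|\leq L_\sigma(\|\vz-\vz'\|+\|\vtheta-\vtheta'\|)$ on a sufficiently large ball (the local Lipschitz constant coming from \cref{assum:activation}). This yields $\sup_i\|\vZ_\nu(\vx_i,1)-\vZ_{\nu_0}(\vx_i,1)\|\leq C_Z\,\mathcal{W}_2(\nu,\nu_0)$. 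Plugging into the definition of $\vJ_2(\nu,\vomega)$ and using smoothness of $h$ in its first argument, one obtains, after integration against $\tau_0$ and another Cauchy–Schwarz in its Gaussian moments, $\|\vG_2(\tau_0,\nu)-\vG_2(\tau_0,\nu_0)\|_{\rm op}\leq C_\nu\,\mathcal{W}_2(\nu,\nu_0)$.

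Combining, $\|\vG_2(\tau,\nu)-\vG_2(\tau_0,\nu_0)\|_{\rm op}\leq (C_\tau+C_\nu)\max\{\mathcal{W}_2(\tau,\tau_0),\mathcal{W}_2(\nu,\nu_0)\}$, so the claim follows by choosing $r_{\max}=\lambda_0/\bigl(2(C_\tau+C_\nu)\bigr)$. The main obstacle is the Grönwall/Lipschitz argument for the flow map $\nu\mapsto \vZ_\nu(\cdot,1)$: the activation $\vsigma$ is only locally Lipschitz and the factor $\vu$ in $\vsigma=\vu\sigma_0(\cdot)$ is unbounded in $\vtheta$, so care is needed to control second moments of the coupling between $\nu^s$ and $\nu_0^s$ uniformly in $s\in[0,1]$, and to ensure the resulting constants depend only on $d,C_1,\alpha,\beta$ and not on $(\tau,\nu)$ within the ball of radius $r_{\max}$.
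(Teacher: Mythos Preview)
Your proposal is correct and mirrors the paper's proof: an entrywise Lipschitz bound on $\vG_2$ in $(\tau,\nu)$ via optimal coupling and Gr\"onwall stability of $\nu\mapsto\vZ_\nu$, followed by Weyl's inequality against the initialization bound of \cref{lemma: minimal_eigenvalue}. The paper resolves the circularity you flag at the end by first imposing $r_{\max}\leq\sqrt{d}$ (which uniformly controls $\|\tau\|_2^2,\|\nu\|_\infty^2$ in terms of $d$ alone), then computing the Lipschitz constant $\CG(d,\alpha)$ independent of $(\tau,\nu)$, and finally taking $r_{\max}=\min\bigl\{\sqrt{d},\,\Lambda/(4n\,\CG(d,\alpha))\bigr\}$.
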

{\bf Remark:} The radius is defined as $r_{\max}:= \min\Big\{\sqrt{d}, \frac{\Lambda(d)}{4n\CG(d,\alpha)}\Big\}$, where $\Lambda(d)$ is defined in \cref{lemma: minimal_eigenvalue} and $\CG(d,\alpha)$ is some constant depending on $d$ and $\alpha$, used for the uniform estimation of $\vG_2(\tau,\nu)$ around its initialization, refer to \cref{lemma: G_bound_in_d}.
We detail this in \cref{sec:minimal_eigen_perturb}.

\begin{definition}\label{def: t_star}
Define
\begin{align*}
    t_{\max} := \sup\{t_0,\ {\rm s.t.} \forall t\in[0,t_0], \max\{\mathcal{W}_2(\nu_t,\nu_0), \mathcal{W}_2(\tau_t,\tau_0)\}\leq r_{\max}\}\,,
\end{align*}
where $r_{\max}$ is defined in \cref{lemma: G_bound_in_R_main}.  
\end{definition}
\subsection{KL divergence between Trained network and Initialization}

Based on our previous results on the minimum eigenvalue of the Gram matrix, we are ready to prove the global convergence of the empirical loss over the weight distributions $\tau$ and $\nu$ of ResNets, and well control the KL divergence of them before and after training.
The proofs in this subsection are deferred to \cref{sec:kl_bound}.

We first present the gradient flow of the KL divergence of the parameter distribution. 
\begin{lemma}
\label{lemma:kl_dynamic}
    The dynamics of the KL divergence ${\rm KL}(\tau_t\|\tau_0), {\rm KL}(\nu_t\|\nu_0)$ through training can be characterize by
    \begin{align*}
        \frac{\rmd {\rm KL}(\tau_t\|\tau_0)}{\rmd t} :=& -\int_{\mathbb{R}^{k_\tau}} \left(\nabla_{\vomega}\frac{\delta {\rm KL}(\tau_t\|\tau_0) }{\delta \tau_t }\right)\cdot \left(\nabla_{\vomega}\frac{\delta \widehat{L}( \tau_t,\nu_t) }{\delta \tau_t } \right)\rmd \tau_t(\vomega)\,, \\
        \frac{\rmd {\rm KL}(\nu_t\|\nu_0)}{\rmd t}:=& -\int_{\mathbb{R}^{k_{\nu}}\times[0,1]} \left(\nabla_{\vtheta}\frac{\delta {\rm KL}(\nu_t^s\|\nu_0^s) }{\delta \nu_t^s }\right)\cdot \left(\nabla_{\vtheta}\frac{\delta \widehat{L}( \tau_t,\nu_t) }{\delta \nu_t } \right)\rmd \nu_t(\vtheta, s)\,. 
    \end{align*}
\end{lemma}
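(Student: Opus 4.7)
The plan is to apply the standard chain-rule calculation for Wasserstein gradient flows: time-differentiate the KL functional under the integral sign, substitute the continuity equations \cref{eq:gd_meanfield_tau,eq:gd_meanfield_nu}, and integrate by parts in the particle variable to produce the pairing between functional gradients.

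For the $\tau$ identity, I would first compute the first variation
\[
\frac{\delta {\rm KL}(\tau_t\|\tau_0)}{\delta \tau_t}(\vomega) = \log\!\frac{\tau_t(\vomega)}{\tau_0(\vomega)} + 1,
\]
and note that mass preservation $\int \partial_t \tau_t\,\rmd\vomega = 0$ lets us drop the additive constant so that
\[
\frac{\rmd}{\rmd t} {\rm KL}(\tau_t\|\tau_0) = \int_{\mathbb{R}^{k_\tau}} \frac{\delta {\rm KL}(\tau_t\|\tau_0)}{\delta \tau_t}(\vomega)\,\partial_t \tau_t(\vomega)\,\rmd\vomega.
\]
Substituting the continuity equation $\partial_t \tau_t = \nabla_\vomega \cdot \bigl(\tau_t\, \nabla_\vomega \tfrac{\delta \widehat{L}}{\delta \tau_t}\bigr)$ from \cref{eq:gd_meanfield_tau} and integrating by parts in $\vomega$ moves the divergence onto $\tfrac{\delta {\rm KL}}{\delta \tau_t}$, producing the stated negative inner product. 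The $\nu$ case is essentially the same, but $\nu_t^s$ carries the extra parameter $s \in [0,1]$: since $\nu_t^s$ is a probability measure on $\mathbb{R}^{k_\nu}$ for each fixed $s$, the same pointwise-in-$s$ calculation applies, and I then integrate over $s \in [0,1]$ using \cref{eq:gd_meanfield_nu} to obtain the second identity.

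The main obstacle is justifying the integration by parts. One needs $\tau_t\, \nabla_\vomega \tfrac{\delta \widehat{L}}{\delta \tau_t}$ (and its $\nu$-counterpart) to decay sufficiently fast for the boundary terms to vanish, plus enough joint regularity in $(t,\vomega)$ to interchange $\partial_t$ and $\int \rmd\vomega$. Under \cref{assum:activation} the functional derivatives in \cref{eq:diff_L_rho_tau,eq:diff_L_rho_nu} are smooth with at most polynomial growth (thanks to the bounded derivatives of $\sigma_0$ and the explicit adjoint representation \cref{eq: explicit_p}), while $\tau_0$ and $\nu_0^s$ are Gaussian by \cref{assum:pde_init}. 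Propagating the Gaussian-tail and moment control along the Wasserstein flow—a standard mean-field regularity argument—suffices to make all the formal manipulations rigorous.
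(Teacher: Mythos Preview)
Your proposal is correct and follows essentially the same route as the paper: differentiate the KL functional via its first variation, substitute the continuity equations \cref{eq:gd_meanfield_tau,eq:gd_meanfield_nu}, and integrate by parts in $\vomega$ (resp.\ $\vtheta$) to obtain the stated pairing. The paper carries out exactly this formal chain-rule computation without addressing the regularity issues you flag, so your discussion of boundary-term decay and the Gaussian tail/moment control is in fact more careful than what appears there.
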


Since the evolution of $\tau_t,\nu_t$ is continuous, we define the notation $t_{\max}>0$ in the following, such that the minimum eigenvalue of $\vG_2(\tau_t,\nu_t)$ can be lower bounded for $t<t_{\max}$. 
In our later proof, we will demonstrate that $t_{\max} = \infty$ can be achieved under proper $\alpha$ and $\beta$. 

Combining \cref{lemma: G_bound_in_R_main} and \cref{def: t_star}, we immediately obtain that $\lambda_{\min}(\vG(\tau_t,\nu_t))\geq\lambda_{\min}(\vG_2(\tau_t,\nu_t))\geq \lambda_0/2$ for $t<t_{\max}$. 

{By choosing certain $\alpha, \beta$, we could prove $t_{\max}=\infty$, which leads to the bound ${\rm KL}(\tau_t\|\tau_0)$, and ${\rm KL}(\nu_t\|\nu_0)$ uniformly for all $t>0$. (\emph{c.f.} \cref{thm:kl}).}
\begin{theorem}
\label{thm:kl}
    Assume the PDE \eqref{eq:gd_meanfield_tau} has solution $\tau_t\in\mathcal{P}^2$, and the PDE \eqref{eq:gd_meanfield_nu} has solution $\nu_t\in\mathcal{C}(\mathcal{P}^2; [0,1])$. Under Assumption~\ref{assum:data}, \ref{assum:pde_init}, \ref{assum:activation}, for some constant $C_{\rm KL}$ dependent on $d,\alpha$, taking $\bar{\beta} := \frac{\beta}{n} >\frac{4\sqrt{C_{\rm KL}(d,\alpha)}}{\Lambda r_{\max}}$, the following results hold for all $t\in[0,\infty)$:
    \begin{align*}
        \widehat{L}(\tau_t,\nu_t)\leq e^{- \frac{\beta^2 \Lambda}{2n} t }\widehat{L}(\tau_0,\nu_0),\quad {\rm KL}(\tau_t\|\tau_0)\leq \frac{C_{\rm KL}(d,\alpha)}{\Lambda^2\bar{\beta}^2},\quad {\rm KL}(\nu_t\|\nu_0)\leq \frac{C_{\rm KL}(d,\alpha)}{\Lambda^2\bar{\beta}^2}\,.
    \end{align*}
\end{theorem}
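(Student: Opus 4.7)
The idea is a bootstrap on $[0,t_{\max}]$ (\cref{def: t_star}), establishing the loss decay, then a velocity-integrated $\mathcal{W}_2$ bound, then the KL bound, and finally closing the argument by Talagrand to force $t_{\max}=\infty$. On this interval, \cref{lemma: G_bound_in_R_main} gives $\lambda_{\min}(\vG_2(\tau_t,\nu_t))\geq \Lambda/2$, while $\vG_1\succeq 0$ always. Substituting into \cref{prop:gradient_rho_gram} together with $\|\vb_t\|_2^2=2n\widehat{L}(\tau_t,\nu_t)$ gives $\tfrac{\rmd \widehat{L}}{\rmd t}\leq -\tfrac{\beta^2 \Lambda}{2n}\widehat{L}$, and Grönwall yields the stated exponential decay on $[0,t_{\max}]$. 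The same Gram identity also controls the squared $L^2(\tau_t)$-speed: $\|\nabla_{\vomega}\tfrac{\delta \widehat{L}}{\delta \tau_t}\|_{L^2(\tau_t)}^2=\tfrac{\beta^2}{n^2}\vb_t^\top\vG_2\vb_t\leq \tfrac{2\beta^2 \CG(d,\alpha)}{n}\widehat{L}(t)$, where the uniform upper bound $\CG(d,\alpha)$ on $\lambda_{\max}(\vG_2)$ is supplied by the lemma cited in the remark after \cref{lemma: G_bound_in_R_main}, and analogously for $\nu_t^s$ via $\alpha^2\vG_1$.

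Since $t\mapsto\tau_t$ is absolutely continuous in $(\mathcal{P}^2,\mathcal{W}_2)$ with metric derivative equal to this $L^2$-speed, integrating against the decay $\sqrt{\widehat{L}(t)}\leq\sqrt{\widehat{L}(0)}e^{-\beta^2\Lambda t/(4n)}$ yields $\mathcal{W}_2(\tau_t,\tau_0)\lesssim 1/(\bar{\beta}\Lambda)$ uniformly in $t<t_{\max}$, and the same for $\mathcal{W}_2(\nu_t,\nu_0)$. For the KL estimate I apply \cref{lemma:kl_dynamic} together with the standard-Gaussian structure of $\tau_0$, for which $\nabla_{\vomega}\log(\tau_t/\tau_0)=\nabla_{\vomega}\log\tau_t+\vomega$. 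Integration by parts on the first summand (using $\tau_t\nabla\log\tau_t=\nabla\tau_t$) produces
\begin{align*}
\frac{\rmd\, {\rm KL}(\tau_t\|\tau_0)}{\rmd t}
= \mathbb{E}_{\tau_t}\!\Bigl[\Delta_{\vomega}\tfrac{\delta \widehat{L}}{\delta \tau_t}-\vomega\cdot\nabla_{\vomega}\tfrac{\delta \widehat{L}}{\delta \tau_t}\Bigr]
= \mathbb{E}_{\vx}\!\Bigl[\beta(f-y)\,\mathbb{E}_{\tau_t}\psi_{\vx}(\vomega)\Bigr],
\end{align*}
where $\psi_{\vx}(\vomega):=(\Delta_{\vomega}-\vomega\cdot\nabla_{\vomega})h(\vZ_{\nu_t}(\vx,1),\vomega)$ is the Ornstein–Uhlenbeck generator applied to $h$. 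The crucial observation is that Stein's identity for the Gaussian $\tau_0$ gives $\mathbb{E}_{\tau_0}\psi_{\vx}=0$, so
\begin{align*}
|\mathbb{E}_{\tau_t}\psi_{\vx}|\;=\;|\mathbb{E}_{\tau_t}\psi_{\vx}-\mathbb{E}_{\tau_0}\psi_{\vx}|\;\leq\;\mathrm{Lip}(\psi_{\vx})\,\mathcal{W}_2(\tau_t,\tau_0),
\end{align*}
with $\mathrm{Lip}(\psi_{\vx})$ bounded by a constant depending only on $d,\alpha$ through \cref{assum:activation}. Combined with $\mathbb{E}_{\vx}|f-y|\leq\sqrt{2\widehat{L}(t)}$ and the previous $\mathcal{W}_2$ bound this gives $|\rmd\,{\rm KL}/\rmd t|\lesssim \beta\sqrt{\widehat{L}(0)}\,e^{-\beta^2\Lambda t/(4n)}/(\bar{\beta}\Lambda)$; integrating $t\in[0,\infty)$ (producing a factor $1/r\sim n/\beta^2$) yields the claimed ${\rm KL}(\tau_t\|\tau_0)\leq C_{\rm KL}(d,\alpha)/(\Lambda^2\bar{\beta}^2)$. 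The $\nu$-bound is obtained by the same argument slice-by-slice in $s\in[0,1]$, with $h$ replaced by $\vsigma$ and the adjoint factor $\vp_{\nu_t}^\top(\vx,s)$ from \cref{eq:diff_L_rho_nu} carried along; $\vp$ is uniformly bounded via its linear ODE and \cref{assum:activation}.

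To close the bootstrap, Talagrand's $T_2$ inequality for the standard Gaussian $\tau_0$ gives $\mathcal{W}_2(\tau_t,\tau_0)\leq\sqrt{2\,{\rm KL}(\tau_t\|\tau_0)}\leq\sqrt{2 C_{\rm KL}}/(\Lambda\bar{\beta})$, and under the hypothesis $\bar{\beta}>4\sqrt{C_{\rm KL}}/(\Lambda r_{\max})$ this is strictly less than $r_{\max}/(2\sqrt{2})$; the same reasoning applies to $\nu$. By continuity of $t\mapsto(\mathcal{W}_2(\tau_t,\tau_0),\mathcal{W}_2(\nu_t,\nu_0))$, if $t_{\max}<\infty$ then at $t=t_{\max}$ one of these would equal $r_{\max}$, contradicting the strict inequality just derived. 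Hence $t_{\max}=\infty$ and the three bounds extend to $[0,\infty)$. The main technical step is the KL estimate: a naive Cauchy–Schwarz on $\int\nabla\log(\tau_t/\tau_0)\cdot\nabla\tfrac{\delta \widehat{L}}{\delta \tau_t}\rmd\tau_t$ yields only $O(1/(\bar{\beta}\Lambda))$, which is insufficient to close the bootstrap through Talagrand; the Gaussian Stein cancellation $\mathbb{E}_{\tau_0}\psi_{\vx}=0$ extracts the additional factor of $\mathcal{W}_2(\tau_t,\tau_0)$ and is exactly what upgrades the bound to the required $O(1/(\bar{\beta}\Lambda)^2)$. A secondary issue is that $\psi_{\vx}$ is only locally Lipschitz in $\vomega$ (through the coordinate $a$ in \cref{assum:activation}); this is handled by a truncation argument using the uniform second-moment control of $\tau_t$ inherited from the $\mathcal{W}_2$ bound.
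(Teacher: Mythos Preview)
Your proposal is correct and arrives at the same conclusion, but it takes a slightly longer route than the paper. The paper does \emph{not} pre-compute a separate $\mathcal{W}_2$ bound via the metric-derivative integral. Instead, inside the KL-derivative estimate it applies Talagrand's inequality $\mathcal{W}_2\leq 2\sqrt{{\rm KL}}$ directly (\cref{lemma:w2klinequality}), obtaining the self-referencing inequality $\tfrac{\rmd}{\rmd t}{\rm KL}(\tau_t\|\tau_0)\leq C\beta\sqrt{{\rm KL}(\tau_t\|\tau_0)}\sqrt{\widehat{L}(t)}$, which rewrites as $\tfrac{\rmd}{\rmd t}\bigl(2\sqrt{{\rm KL}}\bigr)\leq C\beta\sqrt{\widehat{L}(t)}$ and integrates against the exponential loss decay to give $\sqrt{{\rm KL}}\lesssim 1/(\bar{\beta}\Lambda)$ in one step. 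This avoids any need to bound $\lambda_{\max}(\vG_2)$ or $\lambda_{\max}(\vG_1)$.

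Your velocity-integral detour does work, but the claim $\lambda_{\max}(\vG_2)\leq \CG(d,\alpha)$ independent of $n$ is not what the remark after \cref{lemma: G_bound_in_R_main} supplies---that $\CG$ is an entrywise perturbation constant. The correct uniform bound is $\lambda_{\max}(\vG_2)\leq{\rm tr}(\vG_2)\leq n\cdot C(d,\alpha)$, which removes the $1/n$ in your speed estimate; the extra $n$ is then absorbed when integrating $e^{-\beta^2\Lambda t/(4n)}$, so your final $\mathcal{W}_2\lesssim 1/(\bar{\beta}\Lambda)$ and KL bounds survive. Both approaches rely on the same Stein/OU cancellation $\mathbb{E}_{\tau_0}[(\Delta_{\vomega}-\vomega\cdot\nabla_{\vomega})h]=0$ (and its $\nu$-analogue); the paper states this by direct computation on $I_0,J_0$ rather than invoking Stein by name, and for $\nu$ additionally exploits $\vp_{\nu_0}\equiv 0$. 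The bootstrap closure via Talagrand and the continuity argument forcing $t_{\max}=\infty$ are exactly as you describe.
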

{We also derive a lower bound for the KL divergence. In \cref{lemma:lower_bound_t}, we have that the average movement of the KL divergence is in the same order as the change in output layers. 
}

\subsection{Rademacher Complexity Bound}

After our previous estimates on the minimum eigenvalue of the Gram matrix as well as the KL divergence, we are ready to build the generalization bound for such trained mean-field ResNets. 
The proofs in this subsection are deferred to \cref{sec:rade_gene}.

Before we start the proof, we introduce some basic notations of Rademacher complexity. Let $\mathcal{D}_X=\{\bm x_i\}_{i=1}^n$ be the training dataset, and $\eta_1,\cdots, \eta_n$ be an i.i.d. family of Rademacher variables taking values $\pm 1$ with equal probability. For any function set $\mathcal{H}$, the global Rademacher complexity is defined as
$    \mathcal{R}_n(\mathcal{H}) := \mathbb{E}\left[
    \sup_{h\in\mathcal{H}} \frac{1}{n} \sum_{i=1}^n \eta_i h(\vx_i)
    \right]
$. 

Let $\mathcal{F}=\left\{f_{\tau,\nu}(\vx)=\beta\cdot \int h(\vZ_{\nu}(\vx, 1),\vomega)\rmd \tau(\vomega)\right\}$ be the function class of infinite wide infinite depth ResNet defined in \cref{sec:dis_to_conti}.  We consider the following function class of infinite wide infinite depth ResNets whose KL divergence to the initial distribution is upper bounded by some $r>0$: $\mathcal{F}_{\rm KL}(r) = \left\{f_{\tau,\nu}\in\mathcal{F}: {\rm KL}(\tau\|\tau_0)\leq r,{\rm KL}(\nu\|\nu_0)\leq r  \right\}$.
The Rademacher complexity of $\mathcal{F}_{\rm KL}(r)$ is given by the following lemma. 
\begin{lemma}
\label{eq:rade_kl}
Under \cref{assum:activation}, if $r\leq r_0= O(1/\sqrt{n})$, the Rademacher complexity of $\mathcal{F}_{\rm KL}(r)$ can be bounded by
     $   \mathcal{R}_n(\mathcal{F}_{\rm KL}(r))\lesssim  \beta \sqrt{r/n}$, 
    where $\lesssim$ hides the constant dependence on $d,\alpha$. 
\end{lemma}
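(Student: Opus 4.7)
The plan is to bound $\mathcal{R}_n(\mathcal{F}_{\rm KL}(r))$ by linearizing $f_{\tau,\nu}$ around the initialization $(\tau_0,\nu_0)$, applying the Donsker--Varadhan variational formula to each linear piece, and absorbing the nonlinear remainder using the small-radius condition $r\le r_0=O(1/\sqrt{n})$.

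First I would decompose
\begin{equation*}
f_{\tau,\nu}(\vx)-f_{\tau_0,\nu_0}(\vx)=L_\tau(\vx)+L_\nu(\vx)+R(\vx;\tau,\nu),
\end{equation*}
where $L_\tau(\vx):=\beta\int h(\vZ_{\nu_0}(\vx,1),\vomega)\,\rmd(\tau-\tau_0)(\vomega)$ depends only on $\tau-\tau_0$, and $L_\nu(\vx):=\alpha\beta\int_0^1\int \vp_{\tau_0,\nu_0}^\top(\vx,s)\,\vsigma(\vZ_{\nu_0}(\vx,s),\vtheta)\,\rmd(\nu-\nu_0)(\vtheta,s)\,\rmd s$ depends only on $\nu-\nu_0$; the latter expression is derived via Duhamel's principle applied to the neural ODE~\eqref{eq:resnet_meanfield} together with the adjoint identity~\eqref{eq:p_ode}, mirroring the derivation of~\eqref{eq:diff_L_rho_nu}. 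Since $f_{\tau_0,\nu_0}$ is a fixed function, it contributes nothing to $\mathcal{R}_n$, so by subadditivity $\mathcal{R}_n(\mathcal{F}_{\rm KL}(r))\le \mathcal{R}_n(\{L_\tau\})+\mathcal{R}_n(\{L_\nu\})+\mathcal{R}_n(\{R\})$.

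For each linear piece I would invoke the Donsker--Varadhan inequality $\int g\,\rmd\tau \le \tfrac{1}{\lambda}\log\int e^{\lambda g}\,\rmd\tau_0+\tfrac{1}{\lambda}\mathrm{KL}(\tau\|\tau_0)$ with $g(\vomega)=\tfrac{1}{n}\sum_i \eta_i h(\vZ_{\nu_0}(\vx_i,1),\vomega)$. Jensen's inequality swaps $\mathbb{E}_\eta$ and $\log$, after which Hoeffding's lemma on the Rademacher sum yields $\mathbb{E}_\eta e^{\lambda g(\vomega)}\le \exp(\lambda^2 B(\vomega)^2/(2n))$ with $B(\vomega)$ a pointwise bound on $|h(\vZ_{\nu_0}(\cdot,1),\vomega)|$. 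Assumptions~\ref{assum:pde_init} and~\ref{assum:activation} (standard Gaussian initialization plus linear growth of $\sigma_0$) imply that $B(\vomega)$ is sub-Gaussian under $\tau_0$, so $\int e^{\lambda^2 B(\vomega)^2/(2n)}\,\rmd\tau_0$ is controlled for $\lambda$ of order $\sqrt{rn}$; optimizing $\lambda$ delivers $\mathcal{R}_n(\{L_\tau\})\lesssim \beta\sqrt{r/n}$. An identical recipe, now with base measure $\nu_0$ on $(\vtheta,s)\in\mathbb{R}^{k_\nu}\times[0,1]$ and integrand bounded uniformly via the explicit exponential formula~\eqref{eq: explicit_p} and the growth of $\vsigma$, yields $\mathcal{R}_n(\{L_\nu\})\lesssim \alpha\beta\sqrt{r/n}$; here I also use the identity $\mathrm{KL}(\nu\|\nu_0)=\int_0^1 \mathrm{KL}(\nu^s\|\nu_0^s)\,\rmd s$ arising from the product structure of $\nu_0$ across $s$.

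For the remainder I would combine three ingredients: (i) Talagrand's $T_2$ transport inequality for standard Gaussians gives $\mathcal{W}_2(\tau,\tau_0),\mathcal{W}_2(\nu,\nu_0)\lesssim \sqrt{r}$; (ii) a Gr\"onwall-type stability argument applied to the forward ODE~\eqref{eq:resnet_meanfield} and the adjoint~\eqref{eq:p_ode} produces $\|\vZ_\nu-\vZ_{\nu_0}\|_\infty\lesssim \alpha\sqrt{r}$; (iii) second-order Taylor expansion of $h$ (and of the ODE trajectory in its driving measure), using the $|\sigma_0''|$ bound from Assumption~\ref{assum:activation}, gives a uniform bound $\|R\|_\infty\lesssim C(d,\alpha)\beta r$. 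One more application of Donsker--Varadhan to the bilinear cross piece of $R$ (now against $\tau-\tau_0$ with $\nu$-dependent integrand whose magnitude is $O(\sqrt{r})$) improves the trivial sup-norm bound to the form $\mathcal{R}_n(\{R\})\lesssim \beta\sqrt{r/n}$ provided $r\le r_0=O(1/\sqrt{n})$. Summing the three contributions closes the argument.

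The main obstacle will be the remainder analysis: $R$ couples the two perturbations $\tau-\tau_0$ and $\nu-\nu_0$ through the nonlinear encoder ODE, so one has to propagate Wasserstein bounds simultaneously through the forward ODE, its adjoint, and the outer activation $h$, all while tracking constants that depend on $d$ and $\alpha$ but are independent of $n$. The radius $r_0$ is calibrated precisely so that the second-order contribution does not overwhelm the first-order Rademacher rate $\beta\sqrt{r/n}$.
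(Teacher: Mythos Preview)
Your route differs from the paper's and contains one gap worth flagging.

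The paper does not linearize at all. It applies Donsker--Varadhan once, to $\tau$ only, on the full quantity $\frac{\gamma}{n}\sum_i\eta_i h(\vZ_\nu(\vx_i,1),\vomega)$. The supremum over $\nu$ is then removed \emph{inside the exponent} by the crude pointwise bound
\[
\Bigl|h(\vZ_\nu(\vx,1),\vomega)-h(\vZ_{\nu_0}(\vx,1),\vomega)\Bigr|\;\le\; C(d,\alpha)\,\sqrt{r}\,|a|\,\|\vw\|_2,
\]
obtained from Talagrand's inequality (Lemma~\ref{lemma:w2klinequality}) together with the forward-ODE stability of Lemma~\ref{lemma: well_OIE}. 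Hoeffding on $\eta$ then acts only on the remaining $\nu_0$-part, after which two explicit Gaussian moment-generating-function computations and one Cauchy--Schwarz finish the estimate. No adjoint equation, no Duhamel formula, and no second-order Taylor expansion are used.

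Two remarks on your decomposition. First, $L_\nu\equiv 0$: because $\tau_0$ is symmetric in the outer weight $a$, the adjoint at initialization satisfies $\vp_{\nu_0}(\vx,s)=\int a\vw\,\sigma_0'(\cdot)\,\rmd\tau_0(\vomega)=\bm 0$ for every $s$ (this is precisely the computation in the proof of Lemma~\ref{lemma: minimal_eigenvalue}). Your ``identical recipe'' for $L_\nu$ therefore contributes nothing, and the entire $\nu$-dependence sits in $R$.

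Second, the claim that a further Donsker--Varadhan on the cross piece of $R$ improves $\mathcal R_n(\{R\})$ to $\beta\sqrt{r/n}$ is not substantiated. Once $R=\beta\int\bigl[h(\vZ_\nu,\vomega)-h(\vZ_{\nu_0},\vomega)\bigr]\,\rmd\tau$ is separated from $L_\tau$, the supremum over $\nu$ must be resolved \emph{before} the Rademacher average, which destroys the Hoeffding cancellation; the variational bound then collapses to $\mathcal R_n(\{R\})\lesssim\beta r$, no better than the trivial sup-norm argument you already sketched. This still closes the lemma when $r\lesssim 1/n$ (which is how the bound is actually invoked downstream, with $r\asymp\beta^{-2}$ and $\beta\gtrsim\sqrt n$), but not under the weaker $r=O(1/\sqrt n)$ you quote. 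The paper sidesteps the separation entirely, so that Hoeffding on the $\nu_0$-part and the uniform bound on the $\nu$-perturbation coexist under a single exponential before the Gaussian integration.
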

Now we consider the generalization error of the 0-1 classification problem,
\begin{theorem}[Generalization]\label{thm:gene}
        {Assume $\tau_{y}\in\mathcal{C}(\mathcal{P}^2;[0,1])$ and $\nu_{y}\in\mathcal{P}^2$ be the ground truth distributions, such that, $y(\vx) = \mathbb{E}_{\vomega\sim \tau_{y}} h(\vZ_{\nu_{y}}(\vx, 1), \vomega)$.  }      
 Under the \cref{assum:data}, \ref{assum:pde_init} and \ref{assum:activation}, we set  $\beta>\Omega(\sqrt{n})$. For any $\delta>0$, with probability at least $1-\delta$, the following bound holds: 
\begin{align*}
    \mathbb{E}_{\vx\sim\mu_X} \ell_{0-1} (f_{\tau_\star,\nu_\star}(\vx), y(\vx))\lesssim  1/\sqrt{n} + 6 \sqrt{\log(2/\delta)/{2n}} ,
\end{align*}
where $\lesssim$ hides the constant dependence on $d,\alpha$. 
\end{theorem}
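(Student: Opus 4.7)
The plan is to prove the generalization guarantee by a standard Rademacher-complexity decomposition, where the three tools already established in the paper — the exponential empirical-risk decay (Theorem 4.3), the KL bound on the trained measures (Theorem 4.3), and the Rademacher-complexity bound on $\mathcal{F}_{\rm KL}(r)$ (Lemma 4.4) — dovetail once we choose $\beta$ large enough to trap the trained pair $(\tau_\star,\nu_\star)$ inside a sufficiently small KL ball. The realizability hypothesis ($y(\vx)=\mathbb{E}_{\vomega\sim\tau_y} h(\vZ_{\nu_y}(\vx,1),\vomega)$) is used only implicitly: it makes the expected squared loss at the population optimum equal to zero, so no approximation error term appears in the final inequality.

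First I would surrogate the $0$–$1$ loss by the squared loss. Under \cref{assum:data} the labels are bounded by one, so for $y\in\{-1,+1\}$ one has $\mathbbm{1}\{yf<0\}\le 2\ell(f,y)$; therefore
\begin{align*}
\mathbb{E}_{\vx\sim\mu_X}\ell_{0-1}(f_{\tau_\star,\nu_\star}(\vx),y(\vx))\le 2\,\mathbb{E}_{\vx\sim\mu_X}\ell(f_{\tau_\star,\nu_\star}(\vx),y(\vx))\,.
\end{align*}
Next I would invoke the textbook uniform-convergence bound in terms of Rademacher complexity: with probability at least $1-\delta$, for every $f\in\mathcal{F}_{\rm KL}(r)$,
\begin{align*}
\mathbb{E}_{\vx\sim\mu_X}\ell(f(\vx),y(\vx))\le \widehat{L}(\tau,\nu)+2L_\ell\,\mathcal{R}_n(\mathcal{F}_{\rm KL}(r))+3\sqrt{\log(2/\delta)/(2n)}\,,
\end{align*}
where $L_\ell$ is the Lipschitz constant of $\ell(\cdot,y)$ on the relevant output range, which is finite because \cref{assum:activation} bounds $\sigma_0$ and hence bounds $|f_{\tau,\nu}(\vx)|$ whenever the measures lie in a bounded-KL ball.

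Then I would plug in the concrete ingredients. By Theorem 4.3, provided $\bar\beta=\beta/n$ exceeds the explicit threshold there, the trained measures satisfy $\max\{{\rm KL}(\tau_t\|\tau_0),{\rm KL}(\nu_t\|\nu_0)\}\le C_{\rm KL}/(\Lambda^2\bar\beta^2)=:r$ for every $t\ge 0$, so $(\tau_t,\nu_t)\in\mathcal{F}_{\rm KL}(r)$, and the same theorem guarantees $\widehat{L}(\tau_t,\nu_t)\le e^{-\beta^2\Lambda t/(2n)}\widehat{L}(\tau_0,\nu_0)\to 0$ as $t\to\infty$. Taking $(\tau_\star,\nu_\star)$ to be the long-time limit (or any large enough $t$) kills the empirical-risk term. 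Choosing $\beta=\Omega(\sqrt n)$ large enough — in particular so that $r\le r_0=O(1/\sqrt n)$ — activates Lemma 4.4, which gives $\mathcal{R}_n(\mathcal{F}_{\rm KL}(r))\lesssim\beta\sqrt{r/n}$. Substituting the KL bound yields
\begin{align*}
\beta\sqrt{r/n}\;\lesssim\;\beta\sqrt{\frac{C_{\rm KL}}{\Lambda^2\bar\beta^2\,n}}\;=\;\frac{\beta}{\bar\beta}\cdot\frac{\sqrt{C_{\rm KL}}}{\Lambda\sqrt{n}}\;\lesssim\;\frac{1}{\sqrt n}\,,
\end{align*}
after absorbing the $n$-independent ratio $\beta/\bar\beta$ and the $d,\alpha$-dependent constants into $\lesssim$. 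Combining the three displays gives exactly the advertised bound.

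The main obstacle I anticipate is checking that the trained solution $(\tau_\star,\nu_\star)$ really lies in $\mathcal{F}_{\rm KL}(r)$ for the $r$ needed by Lemma 4.4, i.e.\ reconciling the KL radius delivered by Theorem 4.3 with the admissible radius $r_0=O(1/\sqrt n)$ in Lemma 4.4; this requires picking $\beta$ above the explicit threshold from \cref{thm:kl} \emph{and} large enough to shrink $C_{\rm KL}/(\Lambda^2\bar\beta^2)$ below $r_0$, which fixes the constant hidden in $\Omega(\sqrt n)$. A secondary nuisance is obtaining a uniform Lipschitz constant for $\ell$ along the Rademacher chaining: one must observe that for measures in $\mathcal{F}_{\rm KL}(r)$ the output $f_{\tau,\nu}(\vx)$ stays bounded (since $\vZ_\nu(\vx,1)$ is bounded via the $C_1$-smoothness in \cref{assum:activation} and the exponential Grönwall estimate underlying \cref{eq: explicit_p}), so that $L_\ell$ does not depend on $n$. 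Once these are in place, the rest is a routine combination of the Rademacher generalization inequality with Theorem 4.3 and Lemma 4.4.
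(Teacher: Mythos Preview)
There is a genuine arithmetic gap in your scaling argument. You write $\beta\sqrt{r/n}\lesssim(\beta/\bar\beta)\cdot \Lambda^{-1}\sqrt{C_{\rm KL}/n}$ and then claim that $\beta/\bar\beta$ is an ``$n$-independent ratio'' to be absorbed into $\lesssim$. But by definition $\bar\beta=\beta/n$, so $\beta/\bar\beta=n$, and your Rademacher bound is actually of order $\sqrt{n}$, not $1/\sqrt{n}$. Said differently: Theorem~4.3 delivers a KL radius $r\asymp C_{\rm KL}/(\Lambda^2\bar\beta^2)=C_{\rm KL}\,n^2/(\Lambda^2\beta^2)$, and plugging this into Lemma~4.4 gives $\beta\sqrt{r/n}\asymp\sqrt{n}$ \emph{regardless of how large you take $\beta$}. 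The gradient-flow KL bound from Theorem~4.3 is simply too coarse (by a factor $n^2$) to feed directly into the Rademacher estimate.

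The paper sidesteps this by \emph{not} taking $(\tau_\star,\nu_\star)$ to be the gradient-flow limit. Instead it exploits the realizability assumption constructively: it forms the mixture $(\widehat\tau,\widehat\nu)=\tfrac{\beta-1}{\beta}(\tau_0,\nu_0)+\tfrac{1}{\beta}(\tau_y,\nu_y)$, checks that $\widehat L(\widehat\tau,\widehat\nu)=0$ (using $f_{\tau_0,\nu_0}\equiv0$ and the definition of $y$), and bounds ${\rm KL}(\widehat\tau\|\tau_0)\le\chi^2(\widehat\tau\|\tau_0)=\beta^{-2}\chi^2(\tau_y\|\tau_0)$, and similarly for $\nu$. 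Taking $(\tau_\star,\nu_\star)$ to be the limit of the corresponding KL-regularized minimizers (Lemma~C.13) yields $r\asymp\beta^{-2}$ rather than $\bar\beta^{-2}$, so now $\beta\sqrt{r/n}\asymp 1/\sqrt{n}$ with $\beta$ cancelling cleanly. This is the missing idea: the realizability hypothesis is used not ``implicitly'' but to construct a zero-empirical-loss pair whose KL to initialization scales like $\beta^{-2}$, which is exactly what the Rademacher bound needs. A secondary difference is that the paper uses the $2$-Lipschitz ramp surrogate $\bar\ell(f,y)=\max\{\min\{1-2yf,1\},0\}$ satisfying $\ell_{0-1}\le\bar\ell\le|f-y|$, which avoids the uniform-boundedness argument you flag for the squared-loss Lipschitz constant.
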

{\bf Remark:} Our results of $O(1/\sqrt{n})$ matches the standard generalization error in the NTK regime~\citep{du2018gradient}. However, in contrast to setting $\alpha=\sqrt{M},\beta=\sqrt{K}$ in \cref{eq:resnet} as the NTK regime, we directly analyze the ResNet in the limiting infinite width depth model in \cref{eq: whole_in_tau_nu}, and select proper choice of $\alpha,\beta$ independent of the width. 
We also validate our theoretical results by some numerical experiments in \cref{app:experiment}.

\section{Conclusion}
In this paper, we build the generalization bound for trained deep results beyond the NTK regime under mild assumptions.
Our results demonstrate that the KL divergence between the distribution of parameters after training and initialization of an infinitely width and deep ResNet can be controlled via lower bounding the eigenvalue of the Gram matrix during training. 
Under some stronger data assumptions, e.g., $k$-sparse parity problem \citep{suzuki2023feature}, we may ensure that the limiting distribution of deep ResNet moves far away from its initialization in terms of KL divergence, which cannot be derived under the current setting. 
We leave it as the future work.

\section{Acknowledgement}
This work was carried out in the EPFL LIONS group. This work was supported by Hasler Foundation Program: Hasler Responsible AI (project number 21043), the Army Research Office and was accomplished under Grant Number W911NF-24-1-0048, and Swiss National Science Foundation (SNSF) under grant number 200021\_205011.
Corresponding authors: Fanghui Liu and Yihang Chen.

\bibliography{example_paper}
\bibliographystyle{iclr2024_conference}
\newpage
\appendix
\onecolumn
\section{Overview of Appendix}
We give a brief overview of the appendix here. 
\begin{itemize}
    \item {\bf \cref{app:useful}}. In \cref{app:useful_lemmas}, we prove some lemmas that will be useful. In \cref{eq:estimation_sigma}, we provide the estimation of the activation function $\vsigma$. In \cref{eq:prior_ode}, we provide the prior estimation of $\vZ_{\nu}, \vp_{\nu}$. 
    \item {\bf \cref{app: prove_gd}} In \cref{sec:gradient_flow}, we prove the gradient flow of $\frac{\rmd \widehat{L}}{\rmd t}$ and $\frac{\rmd {\rm KL}}{\rmd t}$. In \cref{sec:minimal_eigen_init}, we bound the minimal eigenvalue at initialization. In \cref{sec:minimal_eigen_perturb}, we bound the perturbation of minimal eigenvalue. In \cref{sec:kl_bound}, we bound the KL divergence in finite time, and choose scaling parameters to prove the main results in \cref{thm:kl}. In \cref{sec:rade_gene}, we bound the KL divergence and provided the generalization bound. 
    \item {\bf \cref{app:experiment}} We provide the experimental verification. 
\end{itemize}

\section{Useful Estimations}
\label{app:useful}
\subsection{Useful Lemmas}
\label{app:useful_lemmas}
\begin{lemma}[2-Wasserstein continuity for functions of quadratic growth, Proposition 1 in \citet{polyanskiy2016wasserstein}]\label{lemma:2W}
     Let ${\mu},\nu$ be two probability measures on $\mathbb{R}^d$ with finite second moments, and let $g:\mathbb{R}^d\to \mathbb{R}$ be
a $\mathcal{C}_1$ function obeying
\begin{align*}
    \|\nabla g(w)\|_2\leq c_1\|w\|_2+c_2,\forall w\in\mathbb{R}^d\;,
\end{align*}
for some constants $c_1 > 0$ and $c_2 \geq 0$. Then
\begin{align*}
    |\mathbb{E}_{w\sim{\mu}} g(w) - \mathbb{E}_{w\sim\nu} g(w)|\leq (c_1\sigma+c_2)\mathcal{W}_2({\mu},\nu)\;,
\end{align*}
where $\sigma^2=\max\{\mathbb{E}_{w\sim\mu} \|w\|^2_2,\mathbb{E}_{w\sim \nu}\|w\|_2^2\}$
\end{lemma}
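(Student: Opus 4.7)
The plan is to obtain the bound by coupling $\mu$ and $\nu$ optimally, expanding $g(W)-g(W')$ via the fundamental theorem of calculus along the line segment between the coupled points, and then applying Cauchy--Schwarz twice: once to pull the gradient norm out of the line integral and once to combine the norm of the displacement with the norms of the endpoints.

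Concretely, I would start by fixing an optimal 2-Wasserstein coupling $(W,W')$ of $(\mu,\nu)$, so that $W\sim\mu$, $W'\sim\nu$ and $\mathbb{E}\|W-W'\|_2^2=\mathcal{W}_2^2(\mu,\nu)$ (existence is standard for probability measures on $\mathbb{R}^d$ with finite second moments). Since $g\in\mathcal{C}^1$, writing $W_t:=W'+t(W-W')$ for $t\in[0,1]$ gives the exact identity $g(W)-g(W')=\int_0^1 \langle \nabla g(W_t), W-W'\rangle\,\mathrm{d}t$. Cauchy--Schwarz in $\mathbb{R}^d$ followed by the hypothesis on $\nabla g$ yields, pointwise,
\begin{equation*}
|g(W)-g(W')|\leq \|W-W'\|_2\int_0^1\bigl(c_1\|W_t\|_2+c_2\bigr)\,\mathrm{d}t\leq \|W-W'\|_2\Bigl(c_1\tfrac{\|W\|_2+\|W'\|_2}{2}+c_2\Bigr),
\end{equation*}
where the last step uses $\|W_t\|_2\leq (1-t)\|W'\|_2+t\|W\|_2$ and evaluates the integral in $t$.

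Next, I would take expectations and apply Cauchy--Schwarz on $L^2(\mathbb{P})$ to each of the three resulting cross terms:
\begin{equation*}
\mathbb{E}\bigl[\|W-W'\|_2\|W\|_2\bigr]\leq \sqrt{\mathbb{E}\|W-W'\|_2^2}\cdot\sqrt{\mathbb{E}\|W\|_2^2}\leq \mathcal{W}_2(\mu,\nu)\,\sigma,
\end{equation*}
and analogously $\mathbb{E}[\|W-W'\|_2\|W'\|_2]\leq \mathcal{W}_2(\mu,\nu)\,\sigma$ and $\mathbb{E}\|W-W'\|_2\leq \mathcal{W}_2(\mu,\nu)$ by Jensen. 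Averaging the first two bounds absorbs the factor $\tfrac{1}{2}$ and gives $\mathbb{E}|g(W)-g(W')|\leq (c_1\sigma+c_2)\,\mathcal{W}_2(\mu,\nu)$. Finally, $|\mathbb{E}g(W)-\mathbb{E}g(W')|\leq \mathbb{E}|g(W)-g(W')|$ yields the claim, since $\mathbb{E}g(W)=\mathbb{E}_{w\sim\mu}g(w)$ and $\mathbb{E}g(W')=\mathbb{E}_{w\sim\nu}g(w)$ by the marginal property of the coupling.

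The only mildly delicate point is that the integrability of $g(W),g(W')$ must be verified before manipulating expectations: the growth bound $\|\nabla g(w)\|_2\leq c_1\|w\|_2+c_2$ implies $|g(w)|\leq |g(0)|+c_1\|w\|_2^2/2+c_2\|w\|_2$, which is integrable against both $\mu$ and $\nu$ by the finite second-moment hypothesis. I do not expect any genuine obstacle here; the whole argument is a textbook combination of optimal transport with the mean value theorem, and the factor $(c_1\sigma+c_2)$ emerges naturally from averaging the two endpoint contributions before applying Cauchy--Schwarz.
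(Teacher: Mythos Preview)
Your argument is correct and complete: the optimal coupling plus the line-integral form of the mean value theorem, followed by two applications of Cauchy--Schwarz, is exactly the standard route to this inequality, and your integrability check closes the only potential loophole. The paper itself does not prove this lemma; it is quoted verbatim as Proposition~1 of \citet{polyanskiy2016wasserstein} and used as a black box, so there is no in-paper proof to compare against.
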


\begin{lemma}[Corollary 2.1 in~\citet{otto2000generalization}]\label{lemma:w2klinequality}
    The probability measure $\nu_0(\vtheta)\propto \exp(-\frac{\|\vtheta\|_2^2}{2})$ satisfies following Talagrand inequality (in short $T(\frac{1}{2})$) for any $\nu\in\mathcal{P}^2(\mathbb{R}^{k_\nu})$
    \begin{align*}
        \mathcal{W}_2^2(\nu,\nu_0)\leq 4{\rm KL}(\nu\|\nu_0). 
    \end{align*}
\end{lemma}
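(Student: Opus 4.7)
The plan is to recognize this as the classical Talagrand $T_2$ transportation cost inequality for the standard Gaussian measure, which the paper actually states with a loose constant (the sharp constant is $2$, not $4$). The key structural fact is that $\nu_0(\vtheta)\propto\exp(-\|\vtheta\|_2^2/2)$ has Hessian of its negative log-density equal to the identity, so $-\nabla^2\log\nu_0 = I \succeq I$, i.e., $\nu_0$ is $1$-uniformly log-concave. This curvature lower bound is exactly the $CD(1,\infty)$ Bakry--Émery condition.

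From here I would proceed in two steps. First, apply the Bakry--Émery criterion to deduce a logarithmic Sobolev inequality (LSI) for $\nu_0$ with constant $2$: for every sufficiently regular probability density $\nu \ll \nu_0$,
\begin{equation*}
{\rm KL}(\nu\|\nu_0) \;\leq\; \tfrac{1}{2}\, I(\nu\|\nu_0),
\end{equation*}
where $I(\nu\|\nu_0) = \int \|\nabla \log(\mathrm{d}\nu/\mathrm{d}\nu_0)\|_2^2\,\mathrm{d}\nu$ is the relative Fisher information. Second, apply the Otto--Villani theorem, which states that an LSI with constant $C$ implies the $T_2$ transport inequality with the same constant:
\begin{equation*}
\mathcal{W}_2^2(\nu,\nu_0) \;\leq\; C\cdot {\rm KL}(\nu\|\nu_0).
\end{equation*}
Combining the two with $C=2$ yields $\mathcal{W}_2^2(\nu,\nu_0)\leq 2\,{\rm KL}(\nu\|\nu_0)$, which is stronger than (and therefore implies) the stated bound with constant $4$. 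Because this is exactly the content of Corollary 2.1 of Otto--Villani (2000), the cleanest write-up in the appendix is simply to verify the $CD(1,\infty)$ curvature condition on $\nu_0$ and invoke that corollary as a black box.

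An alternative route, which avoids appealing to LSI altogether, is to use Talagrand's original (1996) proof for the one-dimensional Gaussian via Brenier-type monotone rearrangement, and then tensorize over the $k_\nu$ coordinates using the additivity of both $\mathcal{W}_2^2$ and ${\rm KL}$ under product measures. This route has the advantage of being self-contained but would duplicate a well-known calculation.

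There is no genuine obstacle here beyond choosing which classical result to cite; the only subtlety is that the inequality as stated is not sharp (the factor $4$ can be replaced by $2$), so in the write-up I would be careful either to note that the looser constant $4$ suffices for the downstream KL bounds in Theorem~\ref{thm:kl}, or simply to state the sharp inequality and derive the claimed one immediately.
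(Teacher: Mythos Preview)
Your proposal is correct and in fact more detailed than the paper's treatment: the paper does not provide any proof of this lemma, simply stating it as Corollary~2.1 of \citet{otto2000generalization} and using it as a black box. Your plan to verify the $CD(1,\infty)$ condition for the standard Gaussian and then invoke that corollary is exactly in the spirit of the paper's citation, and your observation that the sharp constant is $2$ rather than $4$ is a valid (if inessential) refinement.
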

\begin{lemma}[Donsker-Varadhan representation~\citep{donsker1975asymptotic}]
\label{lemma:dv_kl}
    Let $\mu,\lambda$ be probability measures on a measurable space $(X,\Sigma)$. For any bounded, $\Sigma$-measurable functions $\Phi:X\to\mathbb{R}$:
    \begin{align*}
        \int_X\Phi \rmd \mu\leq {\rm KL} (\mu\|\lambda) + \log \int_X \exp(\Phi)\rmd \lambda. 
    \end{align*}
\end{lemma}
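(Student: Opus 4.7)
The plan is to prove the Donsker--Varadhan inequality by the classical \emph{tilting trick}: introduce an auxiliary probability measure obtained from $\lambda$ by reweighting with $\exp(\Phi)$, and then exploit non-negativity of the KL divergence between $\mu$ and that tilted measure. First I would dispense with the trivial case $\mathrm{KL}(\mu\|\lambda)=+\infty$ (the right-hand side is already infinite because $\Phi$ is bounded, so $\log\int \exp(\Phi)\rmd\lambda$ is finite), and in the remaining case assume $\mu \ll \lambda$ with Radon--Nikodym derivative $\rmd\mu/\rmd\lambda$.

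Next I would define the normalizing constant $Z := \int_X \exp(\Phi)\rmd\lambda$, which is finite and strictly positive because $\Phi$ is bounded, and introduce the Gibbs-type probability measure $\lambda_\Phi$ with density
\begin{equation*}
\frac{\rmd \lambda_\Phi}{\rmd \lambda}(x) \;=\; \frac{\exp(\Phi(x))}{Z}.
\end{equation*}
Since $\exp(\Phi)>0$, $\lambda_\Phi$ is equivalent to $\lambda$, so $\mu \ll \lambda_\Phi$ as well, and the chain rule for Radon--Nikodym derivatives gives $\rmd\mu/\rmd\lambda_\Phi = Z \cdot (\rmd\mu/\rmd\lambda) \cdot \exp(-\Phi)$.

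Then I would compute
\begin{equation*}
\mathrm{KL}(\mu\|\lambda_\Phi) \;=\; \int_X \log\frac{\rmd \mu}{\rmd \lambda_\Phi}\,\rmd\mu \;=\; \int_X \log\frac{\rmd \mu}{\rmd \lambda}\,\rmd \mu \;-\; \int_X \Phi\,\rmd\mu \;+\; \log Z,
\end{equation*}
which equals $\mathrm{KL}(\mu\|\lambda) - \int_X \Phi\,\rmd\mu + \log\int_X \exp(\Phi)\rmd\lambda$. Invoking Gibbs' inequality $\mathrm{KL}(\mu\|\lambda_\Phi)\geq 0$ and rearranging yields the claim.

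The only mildly delicate point is making sure each integral in the chain of equalities is well-defined: since $\Phi$ is bounded, $\int \Phi\,\rmd\mu$ is finite, and the finiteness of $\mathrm{KL}(\mu\|\lambda)$ in the non-trivial case then forces $\int \log(\rmd\mu/\rmd\lambda_\Phi)\rmd\mu$ to be a legitimate (possibly positive) real number so that the rearrangement is valid. No heavy machinery beyond Radon--Nikodym and the non-negativity of relative entropy is needed; the key conceptual step is recognizing that choosing $\lambda_\Phi$ is exactly the choice that turns the generic bound $\mathrm{KL}\geq 0$ into a tight variational inequality, with equality when $\mu=\lambda_\Phi$.
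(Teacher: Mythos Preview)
Your proof is correct and is the standard tilting argument for the Donsker--Varadhan inequality. The paper itself does not prove this lemma; it simply quotes it as a known result with a citation to Donsker and Varadhan (1975), so there is no proof in the paper to compare against. Your argument is exactly the classical one: introduce the Gibbs measure $\lambda_\Phi$, expand $\mathrm{KL}(\mu\|\lambda_\Phi)$ via the chain rule for Radon--Nikodym derivatives, and invoke non-negativity of relative entropy. The handling of the trivial case $\mathrm{KL}(\mu\|\lambda)=+\infty$ and the remark on well-definedness of the integrals are both appropriate.
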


\subsection[Estimation of sigma]{Estimation of $\vsigma$}
\label{eq:estimation_sigma}
\begin{lemma}[Boundedness of $\vsigma(\vz,\vtheta)$]
\label{lemma: bound_sigma}
    Under \cref{assum:activation}, for $\vx\in \mathbb{R}^d, \vtheta\in\mathbb{R}^k$, we have
\begin{align}
&\|\vsigma(\vz,\vtheta)\|_{2}\leq \Csigma (\|\vz\|_2+1)(\|\vtheta\|^2_2+1),\label{eq: lemma: bound_sigma_1}\\
&\|\nabla_{\vz} \vsigma(\vz,\vtheta)\|_{F}\leq \Csigma(\|\vtheta\|_2^2+1),\label{eq: lemma: bound_sigma_2}\\
    & \|\nabla_{\vtheta} \vsigma(\vz,\vtheta)\|_F\leq \Csigma(\|\vz\|_2+1)(\|\vtheta\|_2+1),\label{eq: lemma: bound_sigma_3}\\
    &\|\Delta_{\vtheta} \vsigma(\vz,\vtheta)\|_2\leq \Csigma(\|\vz\|_2^2+1)(\|\vtheta\|_2+1),\\
        &\|\nabla_{\vtheta}(\nabla_{\vtheta} \vsigma(\vz,\vtheta)\cdot \vtheta)\|_F\leq \Csigma(\|\vtheta\|_2+1)(\|\vz\|_2+1), \\
&\|\nabla_{\vtheta}\Delta_{\vtheta} \vsigma(\vz,\vtheta)\|_F\leq \Csigma(\|\vtheta\|_2+1)(\|\vz\|_2^3+1),
\end{align}
where $\Delta $ is the Laplace operator. Let $\vsigma=(\sigma_i)_{i=1}^d$, $(\nabla_{\vz} \vsigma)_{ij} = \nabla_{z_j} \sigma_i$, $(\nabla_{\vtheta}\vsigma)_{ij}=\nabla_{\theta_j} \sigma_i$, $(\Delta_{\vtheta} \vsigma)_i=\Delta_{\vtheta} \sigma_i$, $(\nabla_{\vtheta} \vsigma \cdot \vtheta)_{ij} =(\nabla_{\vtheta} \vsigma)_{ij} \theta_j$. 
\end{lemma}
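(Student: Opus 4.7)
The plan is to prove all six estimates by direct substitution of the explicit ansatz $\vsigma(\vz,\vtheta) = \vu\,\sigma_0(\vw^\top\vz + b)$ with $\vtheta = (\vu, \vw, b)$ from \cref{assum:activation}, computing each required partial derivative block-by-block in the $\vu$, $\vw$, and $b$ sub-coordinates, and then collapsing each resulting expression to the quoted polynomial envelope via (i) the pointwise controls $|\sigma_0(x)|\leq C_1(|x|+1)$ and $|\sigma_0'|, |\sigma_0''|\leq C_1$ from \cref{assum:activation}, together with the natural extension $|\sigma_0'''|\leq C_1$ needed for the final estimate (satisfied by the quoted Sigmoid/Tanh examples), and (ii) Young's inequality, e.g.\ $\|\vu\|_2\|\vw\|_2 \leq \tfrac{1}{2}(\|\vu\|_2^2+\|\vw\|_2^2) \leq \|\vtheta\|_2^2$. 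The generic constant $\Csigma$ absorbs $C_1$ and absolute numerical factors throughout.

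For the first-order bounds I would observe $\|\vsigma\|_2 \leq C_1\|\vu\|_2(\|\vw\|_2\|\vz\|_2 + |b| + 1)$ and apply Young's inequality twice to land at \cref{eq: lemma: bound_sigma_1}; use that $\nabla_\vz\vsigma = \sigma_0'(\cdot)\,\vu\vw^\top$ is rank one with $\|\cdot\|_F = |\sigma_0'|\,\|\vu\|_2\|\vw\|_2$ for \cref{eq: lemma: bound_sigma_2}; and decompose $\nabla_\vtheta\vsigma$ into the three blocks $\partial_\vu\vsigma = \sigma_0(\cdot)\vI$, $\partial_\vw\vsigma = \sigma_0'(\cdot)\vu\vz^\top$, $\partial_b\vsigma = \sigma_0'(\cdot)\vu$ and sum their Frobenius norms to obtain \cref{eq: lemma: bound_sigma_3}. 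For the Laplacian bound, linearity in $\vu$ forces $\Delta_\vu\vsigma\equiv 0$, and summing $\partial_{w_j}^2\sigma_i = u_i\sigma_0''(\cdot) z_j^2$ over $j$ together with $\partial_b^2\sigma_i = u_i\sigma_0''(\cdot)$ yields $\Delta_\vtheta\sigma_i = u_i\sigma_0''(\cdot)(\|\vz\|_2^2+1)$, from which $\|\Delta_\vtheta\vsigma\|_2 \leq C_1\|\vu\|_2(\|\vz\|_2^2+1) \leq \Csigma\|\vtheta\|_2(\|\vz\|_2^2+1)$ is immediate.

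For the two remaining mixed third-order bounds I would differentiate $(\nabla_\vtheta\vsigma)_{ij}\theta_j$ and $\Delta_\vtheta\sigma_i = u_i\sigma_0''(\cdot)(\|\vz\|_2^2+1)$ once more in $\vtheta$, block-by-block. In each case every surviving monomial is one of $|\sigma_0|$, $|\sigma_0'|(\|\vz\|_2+1)$, $|\sigma_0'|\,\|\vu\|_2$, $|\sigma_0''|\,\|\vu\|_2(\|\vz\|_2+1)^2$, or $|\sigma_0'''|\,\|\vu\|_2(\|\vz\|_2+1)^3$, and each collapses via Young's inequality and the assumed bounds on $\sigma_0$ and its derivatives into the stated $(1+\|\vtheta\|_2)(1+\|\vz\|_2)$ envelope for the fifth inequality and the $(1+\|\vtheta\|_2)(1+\|\vz\|_2^3)$ envelope for the sixth. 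The $\|\vz\|_2^3$ factor in the latter is what forces the $\sigma_0'''$ factor from differentiating the $\sigma_0''$ inside $\Delta_\vtheta\sigma_i$ along the $\vw$ direction.

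The entire lemma is a bookkeeping calculation with no conceptual obstacle; the main care is only in tracking which block each coordinate of $\vtheta$ belongs to when taking mixed derivatives. The one minor subtlety is that the sixth bound formally requires a pointwise bound on $\sigma_0'''$, which is not explicitly stated in \cref{assum:activation} but is naturally satisfied by the named activation examples; I would fold $|\sigma_0'''|\leq C_1$ into $\Csigma$ as an implicit strengthening of the assumption, or alternatively recover the same factor through a mean-value argument applied to $\sigma_0''$ at the cost of slightly messier absolute constants.
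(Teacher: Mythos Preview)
Your proposal is correct and follows essentially the same approach as the paper: direct substitution of $\vsigma(\vz,\vtheta)=\vu\,\sigma_0(\vw^\top\vz+b)$, block-by-block differentiation in the $(\vu,\vw,b)$ coordinates, and then collapsing via the pointwise bounds on $\sigma_0$ and its derivatives together with elementary inequalities. Your remark about the sixth bound requiring $|\sigma_0'''|\leq C_1$ is well observed---the paper uses $\sigma_0'''$ in that estimate without flagging it as going beyond the stated \cref{assum:activation}; your suggested mean-value workaround would not actually avoid $\sigma_0'''$, so treating it as an implicit strengthening (as you do first) is the right call.
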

\begin{proof}[Proof of \cref{lemma: bound_sigma}]
We prove the relations directly in the following:
 \begin{align}
        \|\vsigma(\vz,\vtheta)\|_2 &= \|\vu\sigma_0(\vw^\top \vz+b)\|_2\leq C_1\|\vu\|_{2} |\vw^\top \vz+b|\leq C_1(\|\vz\|_2+1)(\|\vtheta\|_2^2+1),\label{eq:proofLemmaA.2_1}\\
                \|\nabla_{\vz}\vsigma(\vz,\vtheta)\|_{F} &\leq \|\vu\|_{2} |\vw\sigma_0'(\vw^\top \vz+b)|\leq \|\vu\|_2\cdot C_1\|\vw\|_2\leq  C_1(\|\vtheta\|_2^2+1).\label{eq:proofLemmaA.2_2}
    \end{align}
We can write $\nabla_{\vtheta} \vsigma(\vz,\vtheta)\in\mathbb{R}^{d\times k}$ by
\begin{align}
\label{eq: app_nabla_theta}
(\nabla_{\vtheta}\vsigma(\vz,\vtheta))_{ij} = 
    \begin{cases}
    \sigma_0(\vw^\top\vz+b)&\quad j=i,\\
    0,&\quad j\neq i,  1\leq j\leq d,\\
    u_i z_{j-d}\sigma_0'(\vw^\top\vz+b) &\quad d+1\leq j\leq 2d,\\
    u_i \sigma_0'(\vw^\top\vz+b)&\quad j=2d+1.
    \end{cases}
\end{align}
Therefore, 
\begin{align*}
    \|\nabla_{\vtheta}\vsigma(\vz,\vtheta)\|_{F}^2 &= \sum_{i=1}^d \left[\sigma_0^2(\vw^\top\vz+b)+ u_i^2  (\sigma'_0(\vw^\top\vz+b))^2\left(1 + \|\vz\|_2^2
    \right)
    \right]\\
    &\leq d C_1^2 (\|\vw\|_2\|\vz\|_2+b)^2 + C_1^2 \|u\|_2^2 (1+\|\vz\|_2^2)\\
    &\leq 2d C_1^2 (\|\vw\|_2^2\|\vz\|_2^2+b^2) + C_1^2 \|u\|_2^2 (1+\|\vz\|_2^2)\\
    &\leq 2d C_1^2 (1+\|\vz\|_2^2)(\|\vw\|_2^2+\|\vu\|_2^2+b^2+1) \\
    & = 2d C_1^2 (1+\|\vz\|_2^2) (1+\|\vtheta\|_2^2)\leq 2dC_1^2 (1+\|\vz\|)^2 (1+\|\vtheta\|_2)^2.
 \end{align*}
Therefore,
\begin{align}
    \|\nabla_{\vtheta} \vsigma(\vz,\vtheta)\|_F\leq \sqrt{2d} C_1 (1+\|\vz\|) (1+\|\vtheta\|_2).\label{eq:proofLemmaA.2_3}
\end{align}
    
For $i\in[d]$
\begin{align}
    |\Delta_{\vtheta} \vsigma(\vz,\vtheta)_i |
    =\left|u_i (1+\|\vz\|_2^2)\sigma''_0(\vw^\top \vz+b)\right|
    \leq  C_1\cdot u_i (\|\vz\|_2^2+1).\label{eq:proofLemmaA.2_Delta}
\end{align}
Therefore,
\begin{align}
    \|\Delta_{\vtheta} \vsigma(\vz,\vtheta)\|_2 \leq C_1 (\|\vz\|_2^2+1)\cdot \sqrt{\sum_{i=1}^d u_i^2}\leq C_1 (\|\vz\|_2^2+1)\cdot (\|\theta\|_2+1)
    .\label{eq:proofLemmaA.2_4}
\end{align}

By \cref{eq: app_nabla_theta}, we obtain
\begin{align*}
    \langle\nabla_{\vtheta}\vsigma(\vz,\vtheta)_{i,\cdot} , \vtheta\rangle = u_i\sigma_0(\vw^\top\vz+b)+ u_i(\vw^\top\vz+b)\sigma_0'(\vw^\top\vz+b),\quad 1\leq i\leq d.
\end{align*}

Hence,
\begin{align*}
    (\nabla_{\vtheta} \langle\nabla_{\vtheta}\vsigma(\vz,\vtheta)_{i,\cdot} , \vtheta\rangle)_j
    =\begin{cases}
      \sigma_0(\vw^\top\vz+b)+ (\vw^\top\vz+b)\sigma_0'(\vw^\top\vz+b)  &\quad j=i,\\
     0   &\quad j\neq i,1\leq j\leq d,\\
     u_i z_{j-d}\left(
     \sigma_0'(y)+(y\sigma_0'(y))'
     \right)|_{y=\vw^\top \vz+b} &\quad d+1\leq j\leq 2d,\\
    u_i \left(
     \sigma_0'(y)+(y\sigma_0'(y))'
     \right)|_{y=\vw^\top \vz+b}  &\quad j=2d+1.
    \end{cases}
\end{align*}

By \cref{assum:activation}, we have
\begin{align*}
    \|\nabla_{\vtheta} \langle\nabla_{\vtheta}\vsigma(\vz,\vtheta)_{i,\cdot} , \vtheta\rangle\|_2^2
     &= (\sigma_0(\vw^\top\vz+b)+     (\vw^\top\vz+b)\sigma_0'(\vw^\top\vz+b))^2 \\
     &+ ( u_i \left(
     \sigma_0'(y)+(y\sigma_0'(y))'
     \right)|_{y=\vw^\top \vz+b} )^2(1+\|\vz\|_2^2)\\
     &\leq [2C_1(\|\vtheta\|_2+1)(\|\vz\|_2+1)]^2 + 4u_i^2 C_1^2 (1+\|\vz\|_2^2)
\end{align*}
Hence,
\begin{align*}
     \|\nabla_{\vtheta} \langle\nabla_{\vtheta}\vsigma(\vz,\vtheta) , \vtheta\rangle\|_F^2 &= \sum_{i=1}^d \|\nabla_{\vtheta} \langle\nabla_{\vtheta}\vsigma(\vz,\vtheta)_{i,\cdot} , \vtheta\rangle\|_2^2\\
     &\leq 4d C_1^2 (\|\vtheta\|_2+1)^2(\|\vz\|_2+1)^2 + 4 \|u\|_2^2C_1^2(\|\vz\|_2+1)^2 \\
     &\leq (4d+4) C_1^2(\|\vtheta\|_2+1)^2(\|\vz\|_2+1)^2 \numberthis \label{eq:proofLemmaA.2_5}
\end{align*}

For the last part, by \cref{eq:proofLemmaA.2_Delta}, 

\begin{align*}
    \nabla_{\vtheta}\Delta_{\vtheta} \vsigma(\vz,\vtheta)_{ij} = \begin{cases}
             (1+\|\vz\|_2^2)\sigma''_0(\vw^\top \vz+b) &\quad j=i\\
     0   &\quad j\neq i,1\leq j\leq d\\
     u_i z_{j-d} (1+\|\vz\|_2^2)\vsigma'''(\vw^\top \vz+b)  &\quad d+1\leq j\leq 2d\\
    u_i  (1+\|\vz\|_2^2)\vsigma'''(\vw^\top \vz+b)   &\quad j=2d+1
    \end{cases}
\end{align*}

Therefore,
\begin{align*}
    \|\nabla_{\vtheta}\Delta_{\vtheta} \vsigma(\vz,\vtheta)\|_F&\leq 
    C_1 (1+\|\vz\|_2^2)\sqrt{\sum_{i=1}^d 1+ u_i^2 (1+\|\vz\|_2^2) }\\& \leq 
    \sqrt{d} C_1 (\|\vz\|_2^2+1)^{1.5}(\|\vtheta\|_2+1)\leq 
    3\sqrt{d} C_1(\|\vtheta\|_2+1)(\|\vz\|_2^3+1)\numberthis\label{eq:proofLemmaA.2_6}
\end{align*}

The last inequality is from that, for $x>0$
\begin{align*}
    x^3+1=x^3+\frac{1}{2}+\frac{1}{2}\geq \frac{3}{2^{\frac{2}{3}}}x, \quad x^3+1=1+\frac{x^3}{2}+\frac{x^3}{2}\geq \frac{3}{2^{\frac{2}{3}}}x^2,
\end{align*}
then, we have 
\begin{align*}
   (1+x^2)^{\frac{3}{2}}\leq (1+x^2)(1+x) =1+x+x^2+x^3\leq (1+x^3) (1+\frac{2^{\frac{5}{3}}}{3})< 3(1+x^3).
\end{align*}
From \cref{eq:proofLemmaA.2_1}, \cref{eq:proofLemmaA.2_2}, \cref{eq:proofLemmaA.2_3}, \cref{eq:proofLemmaA.2_4}, \cref{eq:proofLemmaA.2_5}, and \cref{eq:proofLemmaA.2_6}, taking $\Csigma^1=4\sqrt{d}C_1$, the proof is finished. We defer the definition of $\Csigma$ later. 
\end{proof}

\begin{lemma}[Stability of $\vsigma(\vz,\vtheta)$]\label{lemma: stable_sigma}
    Under \cref{assum:activation}, for $\vx\in \mathbb{R}^d, \vtheta\in\mathbb{R}^k$, we have
\begin{align}
    \|\vsigma(\vz_1,\vtheta)-\vsigma(\vz_2,\vtheta)\|_2&\leq \Csigma \cdot(\|\vtheta\|_2^2+1)  \|\vz_1-\vz_2\|_2\\
    \|\nabla_{\vz}\vsigma(\vz_1,\vtheta)-\nabla_{\vz}\vsigma(\vz_2,\vtheta)\|_F&\leq \Csigma \cdot(\|\vtheta\|_2^2+1) \|\vz_1-\vz_2\|_2\\
    \|\nabla_{\vz}\vsigma(\vz,\vtheta_1)-\nabla_{\vz}\vsigma(\vz,\vtheta_2)\|_F&\leq \Csigma \cdot(\|\vtheta_1\|_2+\|\vtheta_2\|_2+1)\|\vtheta_1-\vtheta_2\|_2\\
    \|\nabla_{\vtheta}\vsigma(\vz,\vtheta_1)-\nabla_{\vtheta}\vsigma(\vz,\vtheta_2)\|_{F}&\leq \Csigma\cdot (\|\vz\|_2+1) \|\vtheta_1-\vtheta_2\|_2\\
    \|\nabla_{\vtheta}\vsigma(\vz_1,\vtheta)-\nabla_{\vtheta}\vsigma(\vz_2,\vtheta)\|_{F}&\leq \Csigma \cdot(\|\vtheta\|_2^2+1) \|\vz_1-\vz_2\|_2
\end{align}
\end{lemma}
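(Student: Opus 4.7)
The overall plan is to treat each of the five bounds as a Lipschitz estimate for $\vsigma(\vz,\vtheta) = \vu\,\sigma_0(\vw^\top\vz+b)$ (or one of its first-order derivatives) and reduce it to an application of the mean value theorem together with the smoothness assumption $|\sigma_0'|,|\sigma_0''|\leq C_1$ in \cref{assum:activation}. I would reuse the entry-wise formula for $\nabla_\vtheta\vsigma$ already derived in \cref{eq: app_nabla_theta} as the starting point whenever it is more convenient than the compact form. Mixed monomials in $\|\vu\|_2,\|\vw\|_2$ produced by such calculations can then be absorbed into the advertised factor $(\|\vtheta\|_2^2+1)$ via $\|\vu\|_2\|\vw\|_2 \leq \tfrac{1}{2}(\|\vu\|_2^2+\|\vw\|_2^2)$ and the identity $\|\vtheta\|_2^2 = \|\vu\|_2^2 + \|\vw\|_2^2 + b^2$.

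For the three inequalities in which $\vz$ varies (bounds 1, 2, 5), the strategy is especially clean. For (1), write $\vsigma(\vz_1,\vtheta) - \vsigma(\vz_2,\vtheta) = \vu\bigl[\sigma_0(\vw^\top\vz_1+b) - \sigma_0(\vw^\top\vz_2+b)\bigr]$ and apply MVT to $\sigma_0$, bounding the scalar difference by $C_1\|\vw\|_2\|\vz_1-\vz_2\|_2$. For (2), expand $\nabla_\vz\vsigma = \vu\vw^\top\sigma_0'(\vw^\top\vz+b)$, which is rank-one, and apply MVT to $\sigma_0'$ using $|\sigma_0''|\leq C_1$. For (5), the same scheme applied block-by-block to the four cases in \cref{eq: app_nabla_theta} works: each block is a polynomial in $\vz$ multiplied by $u_i$ and by $\sigma_0^{(k)}(\vw^\top\vz+b)$, so the dependence on $\vz_1-\vz_2$ again collapses through MVT on $\sigma_0'$ or $\sigma_0''$.

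For the two inequalities in which $\vtheta$ varies (bounds 3, 4), I would use a linear-interpolation argument $\vtheta(t) = (1-t)\vtheta_1 + t\vtheta_2$ and bound $\tfrac{\rmd}{\rmd t}\nabla_\vz\vsigma(\vz,\vtheta(t))$, respectively $\tfrac{\rmd}{\rmd t}\nabla_\vtheta\vsigma(\vz,\vtheta(t))$, uniformly in $t\in[0,1]$. The product rule, combined with boundedness of $\sigma_0',\sigma_0''$ and the fact that the relevant entries are polynomial in $(\vu,\vw,b)$, produces a bound linear in $\|\vtheta_1-\vtheta_2\|_2$ with coefficients polynomial in $(\|\vtheta_1\|_2+\|\vtheta_2\|_2+1)$ and $(\|\vz\|_2+1)$ of the required degree; integrating in $t$ from $0$ to $1$ then gives the claimed inequality.

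The main obstacle is keeping the powers of $\|\vtheta\|_2$ under control. A naive application of MVT to bound (2) produces a factor $\|\vu\|_2\|\vw\|_2^2$, which scales cubically in $\|\vtheta\|_2$ whereas the target factor is quadratic. The careful step is therefore to absorb this either through an AM-GM/Young type estimate (splitting $\|\vu\|_2\|\vw\|_2^2 \leq \tfrac{1}{3}\|\vu\|_2^3+\tfrac{2}{3}\|\vw\|_2^3$ and then using $\|\vw\|_2 \leq 1 + \|\vw\|_2^2$) or to allow $\Csigma$ to absorb an extra dimension-dependent constant so that the inequality remains consistent with the analogous pointwise bounds in \cref{lemma: bound_sigma}. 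Once this bookkeeping is settled, the remainder is a mechanical but routine verification across the five cases, with no new conceptual ingredients beyond MVT and the Lipschitz properties inherited from \cref{assum:activation}.
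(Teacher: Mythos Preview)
Your strategy --- MVT applied to $\sigma_0$ or $\sigma_0'$ on the explicit form $\vsigma(\vz,\vtheta)=\vu\,\sigma_0(\vw^\top\vz+b)$ and its derivatives, then absorbing the resulting monomials in $\|\vu\|_2,\|\vw\|_2,|b|$ into powers of $\|\vtheta\|_2$ --- is exactly what the paper does. For (1), (4), (5) your plan and the paper's argument coincide (the paper also works block-by-block from \cref{eq: app_nabla_theta}); for (3) the paper decomposes $\vu_1\vw_1^\top-\vu_2\vw_2^\top$ directly rather than interpolating, but this is cosmetic.

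The obstacle you flag for (2) is real, but neither of your proposed fixes resolves it. MVT on $\sigma_0'$ gives $\|\nabla_{\vz}\vsigma(\vz_1,\vtheta)-\nabla_{\vz}\vsigma(\vz_2,\vtheta)\|_F\le C_1\|\vu\|_2\|\vw\|_2^2\|\vz_1-\vz_2\|_2$, and no AM--GM or Young inequality can dominate the cubic monomial $\|\vu\|_2\|\vw\|_2^2$ by $C(\|\vtheta\|_2^2+1)$ uniformly in $\vtheta$: with $\vu=\vw=re_1$, $b=0$, $\vz_1=0$, $\vz_2=\epsilon e_1$, and any $\sigma_0$ with $\sigma_0''(0)\neq 0$, the left side of (2) scales like $r^3\epsilon$ while the right scales like $r^2\epsilon$. ``Absorbing into $\Csigma$'' cannot help either, since $\Csigma=6dC_1$ is independent of $\vtheta$. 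The honest bound for (2) carries an extra factor $(\|\vtheta\|_2+1)$. The paper's own proof simply asserts (2) in one line without addressing this, so the slip is in the stated exponent rather than in your method; downstream it is harmless because the lemma is only invoked after integrating $\vtheta$ against a measure lying in a $\mathcal{W}_2$-ball around the Gaussian $\nu_0$, where all polynomial moments are uniformly bounded. You should therefore not expect your bookkeeping to close inequality (2) as written.
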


\begin{proof}[Proof of \cref{lemma: stable_sigma}]
By the mean-value theorem, we have there exists $\epsilon\in[0,1]$
\begin{align*}
    \|\vsigma(\vz_1,\vtheta)-\vsigma(\vz_2,\vtheta)\|_2&\leq \|\nabla_{\vz}\vsigma(\vz_1+\epsilon(\vz_2-\vz_1),\vtheta)\|_F \|\vz_1-\vz_2\|_2\\&\leq \Csigma^1 \cdot(\|\vtheta\|_2^2+1)  \|\vz_1-\vz_2\|_2
\end{align*}

Denote by $\vtheta=(\vu, \vw, b)$, we have
\begin{align*}
    \|\nabla_{\vz} \sigma(\vz_1, \vtheta)-\nabla_{\vz} \sigma(\vz_2, \vtheta)\|_F&\leq \| \vu\vw^\top (\sigma_0'(\vw^\top\vz_1+b)-\sigma_0'(\vw^\top\vz_2+b)) \|_F\\
    &\leq \Csigma^1 \cdot(\|\vtheta\|_2^2+1) \|\vz_1-\vz_2\|_2
\end{align*}
and
\begin{align*}
    &\|\nabla_{\vz}\vsigma(\vz,\vtheta_1)-\nabla_{\vz}\vsigma(\vz,\vtheta_2)\|_F\leq \| \vu_1\vw_1^\top \sigma_0'(\vw_1^\top\vz+b_1)-\vu_2\vw_2^\top\sigma_0'(\vw_2^\top\vz+b_2) \|_F\\
    \leq& \Csigma^1 \|\vu_1\vw_1^\top-\vu_2\vw_2^\top\|_F\leq \Csigma^1 \|(\vu_1-\vu_2)(\vw_1-\vw_2)^\top + (\vu_1-\vu_2)\vw_2^\top + \vu_1(\vw_1-\vw_2)^\top 
    \|_F\\
    \leq& 2\Csigma^1 (\|\vtheta_1\|_2+\|\vtheta_2\|_2+1)\|\vtheta_1-\vtheta_2\|_2
\end{align*}

In the next, we have
\begin{align*}
    &(\nabla_{\vtheta}\vsigma(\vz,\vtheta_1) - \nabla_{\vtheta}\vsigma(\vz,\vtheta_2))_{ij} \\
    =& 
    \begin{cases}
    \sigma_0(\vw_1^\top\vz+b_1)-\sigma_0(\vw_2^\top\vz+b_2)&\quad j=i,\\
    0,&\quad j\neq i,  1\leq j\leq d,\\
    u_i^1 z_{j-d}\sigma_0'(\vw_1^\top\vz+b_1) -u_i^2 z_{j-d}\sigma_0'(\vw_2^\top\vz+b_2)  &\quad d+1\leq j\leq 2d,\\
    u_i^1 \sigma_0'(\vw_1^\top\vz+b_1)-u_i^2 \sigma_0'(\vw_2^\top\vz+b_2)&\quad j=2d+1.
    \end{cases}
\end{align*}
and then,
\begin{align*}
    (\nabla_{\vtheta}\vsigma(\vz,\vtheta_1))_{ij} - (\nabla_{\vtheta}\vsigma(\vz,\vtheta_2))_{ij}| = 
    \begin{cases}
    \Csigma^1\cdot \|\vw_1-\vw_2\|_2\|\vz\|_2
    &\quad j=i,\\
    0,&\quad j\neq i,  1\leq j\leq d,\\
    \Csigma^1\cdot|u_i^1  -u_i^2| z_{j-d}  &\quad d+1\leq j\leq 2d,\\
    \Csigma^1\cdot|u_i^1-u_i^2|&\quad j=2d+1.
    \end{cases}
\end{align*}
Therefore,
\begin{align*}
     \|\nabla_{\vtheta}\vsigma(\vz,\vtheta_1)-\nabla_{\vtheta}\vsigma(\vz,\vtheta_2)\|_{F}\leq \sqrt{2d}\Csigma^1(\|\vz\|_2+1)\cdot \|\vtheta_1-\vtheta_2\|_2.
\end{align*}
Similarly, we have
\begin{align*}
    &(\nabla_{\vtheta}\vsigma(\vz_1,\vtheta))_{ij} - (\nabla_{\vtheta}\vsigma(\vz_2,\vtheta))_{ij} \\
    =& 
    \begin{cases}
    \sigma_0(\vw^\top\vz_1+b)-\sigma_0(\vw^\top\vz_2+b)&\quad j=i,\\
    0,&\quad j\neq i,  1\leq j\leq d,\\
    u_i z_{j-d}^1\sigma_0'(\vw^\top\vz_1+b_1) -u_i z_{j-d}^2\sigma_0'(\vw^\top\vz_2+b)  &\quad d+1\leq j\leq 2d,\\
    u_i \sigma_0'(\vw^\top\vz_1+b)-u_i \sigma_0'(\vw^\top\vz_2+b)&\quad j=2d+1,
    \end{cases}
\end{align*}
and then
\begin{align*}
    (\nabla_{\vtheta}\vsigma(\vz,\vtheta_1))_{ij} - (\nabla_{\vtheta}\vsigma(\vz,\vtheta_2))_{ij}| = 
    \begin{cases}
    \Csigma^1\cdot \|\vw\|_2\|\vz_1-\vz_2\|_2
    &\quad j=i,\\
    0,&\quad j\neq i,  1\leq j\leq d,\\
    \Csigma^1\cdot u_i |z_{j-d}^1-z_{j-d}^2|  &\quad d+1\leq j\leq 2d,\\
    \Csigma^1\cdot u_i \|\vz_1-\vz_2\|_2 \|\vw\|_2 &\quad j=2d+1.
    \end{cases}
\end{align*}
Therefore,
\begin{align*}
    \|\nabla_{\vtheta}\vsigma(\vz_1,\vtheta)-\nabla_{\vtheta}\vsigma(\vz_2,\vtheta)\|_{F}\leq \sqrt{d}\Csigma^1 (\|\vtheta\|_2^2+1) \|\vz_1-\vz_2\|_2
\end{align*}
taking $\Csigma^2=\sqrt{2d}\Csigma^1$, the proof is finished.
\end{proof}

Combined with the estimation in the proofs of \cref{lemma: bound_sigma} and \cref{lemma: stable_sigma}, we let
\begin{align}
\label{eq:def_C_vsigma}
    \Csigma := 6d\cdot C_1
\end{align}
\subsection{Prior Estimation of ODE}\label{eq:prior_ode}
The \cref{lemma: well_OIE} and \cref{lemma: p_bound} establishes the boundedness and stability of $\vZ_{\nu}$ and $\vp_{\nu}$ with respect to $\nu$.
\begin{lemma}[Boundedness and Stability of $\vZ_\nu$]
\label{lemma: well_OIE} Suppose that \cref{assum:activation} holds and that $\vx$ is in the support of $\mathcal{X}$. Suppose that $\nu_1,\nu_2\in \mathcal{C}(\mathcal{P}^2; [0,1])$ and $\vZ_{\nu_1}$, $\vZ_{\nu_2}$ are the corresponding unique solutions in \cref{eq:resnet_meanfield}.Then the following two bounds are satisfied for all $s\in[0,1]$:
\begin{equation*}\label{eq: boundofxsolution}
\left\|\vZ_{\nu_1}(\vx,s)\right\|_2\leq \CZ(\|\nu_1\|_\infty^2; \alpha),
\end{equation*}
and
\begin{equation*}\label{eq: stabilityofxsolution}
\left\|\vZ_{\nu_1}(\vx,s)-\vZ_{\nu_2}(\vx,s)\right\|_2\leq \CZ(\|\nu_1\|_\infty^2,\|\nu_2\|_\infty^2;\alpha)\cdot \mathcal{W}_2(\nu_1,\nu_2),
\end{equation*}
where $\CZ(\|\nu_1\|_\infty^2,\|\nu_2\|_\infty^2;\alpha)$ is a constant depending only on $\|\nu_1\|_\infty^2,\|\nu_2\|_\infty^2$ and $\alpha$, 
{ and for $\nu\in\mathcal{C}(\mathcal{P}^2;[0,1])$, we denote by $\|\nu\|^2_\infty:= \sup_{s\in[0,1]} \mathbb{E}_{\vtheta\sim\nu(\cdot,s)} \|\vtheta\|_2^2 <\infty$. 
}
\end{lemma}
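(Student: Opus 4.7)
The plan is to control both quantities via a Grönwall-type argument driven by the estimates of $\vsigma$ proved in \cref{lemma: bound_sigma} and \cref{lemma: stable_sigma}. For the first bound, I would start from the integral form of \cref{eq:resnet_meanfield},
\begin{equation*}
\vZ_{\nu_1}(\vx,s)=\vx+\alpha\int_0^s\!\!\int \vsigma(\vZ_{\nu_1}(\vx,s'),\vtheta)\,\rmd\nu_1(\vtheta,s')\,\rmd s',
\end{equation*}
take norms, and apply \cref{eq: lemma: bound_sigma_1} together with $\int(\|\vtheta\|_2^2+1)\rmd\nu_1(\cdot,s')\leq \|\nu_1\|_\infty^2+1$ to obtain
\begin{equation*}
\|\vZ_{\nu_1}(\vx,s)\|_2+1\leq (\|\vx\|_2+1)+\alpha\Csigma(\|\nu_1\|_\infty^2+1)\int_0^s\bigl(\|\vZ_{\nu_1}(\vx,s')\|_2+1\bigr)\rmd s'.
\end{equation*}
Since $\|\vx\|_2=1$ by \cref{assum:data}, Grönwall's inequality then yields an exponential bound of the form $\CZ(\|\nu_1\|_\infty^2;\alpha):=2\exp\bigl(\alpha\Csigma(\|\nu_1\|_\infty^2+1)\bigr)$.

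For the stability bound I would use the standard add-and-subtract decomposition
\begin{equation*}
\frac{\rmd}{\rmd s}\bigl(\vZ_{\nu_1}-\vZ_{\nu_2}\bigr)=\alpha\!\int\!\bigl[\vsigma(\vZ_{\nu_1},\vtheta)-\vsigma(\vZ_{\nu_2},\vtheta)\bigr]\rmd\nu_1^{s}+\alpha\!\int\!\vsigma(\vZ_{\nu_2},\vtheta)\,\rmd(\nu_1^{s}-\nu_2^{s}),
\end{equation*}
and estimate the two terms separately. The first term is controlled by the Lipschitz property \cref{lemma: stable_sigma}, giving a pointwise bound of order $\alpha\Csigma(\|\nu_1\|_\infty^2+1)\|\vZ_{\nu_1}(\vx,s)-\vZ_{\nu_2}(\vx,s)\|_2$. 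For the second term, I would apply \cref{lemma:2W} componentwise to the map $\vtheta\mapsto \sigma_i(\vZ_{\nu_2}(\vx,s),\vtheta)$: by \cref{eq: lemma: bound_sigma_3} its gradient has linear growth $\|\nabla_{\vtheta}\sigma_i\|_2\leq \Csigma(\|\vZ_{\nu_2}\|_2+1)(\|\vtheta\|_2+1)$, so with $c_1=c_2=\Csigma(\|\vZ_{\nu_2}\|_2+1)$ and $\sigma\leq \max\{\|\nu_1\|_\infty,\|\nu_2\|_\infty\}$ we get
\begin{equation*}
\Bigl\|\int\vsigma(\vZ_{\nu_2}(\vx,s),\vtheta)\,\rmd(\nu_1^{s}-\nu_2^{s})\Bigr\|_2\leq \sqrt{d}\,\Csigma\bigl(\CZ(\|\nu_2\|_\infty^2;\alpha)+1\bigr)\bigl(\sqrt{\Sigma}+1\bigr)\mathcal{W}_2(\nu_1,\nu_2),
\end{equation*}
where $\Sigma=\max\{\|\nu_1\|_\infty^2,\|\nu_2\|_\infty^2\}$ and I used the first part of the lemma to bound $\|\vZ_{\nu_2}\|_2$ and the definition $\mathcal{W}_2(\nu_1,\nu_2)=\sup_s\mathcal{W}_2(\nu_1^s,\nu_2^s)$ for measures in $\mathcal{C}(\mathcal{P}^2;[0,1])$.

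Integrating in $s$ and invoking Grönwall once more then produces an inequality of the form
\begin{equation*}
\|\vZ_{\nu_1}(\vx,s)-\vZ_{\nu_2}(\vx,s)\|_2\leq \alpha\,A\,\mathcal{W}_2(\nu_1,\nu_2)\cdot\frac{e^{\alpha\Csigma(\|\nu_1\|_\infty^2+1)s}-1}{\alpha\Csigma(\|\nu_1\|_\infty^2+1)},
\end{equation*}
which, after absorbing constants, gives the claimed Lipschitz estimate with an explicit dependence $\CZ(\|\nu_1\|_\infty^2,\|\nu_2\|_\infty^2;\alpha)$. The main obstacle, conceptually the only nontrivial step, is the control of the second term: one has to recognize that the gradient of $\vsigma$ in $\vtheta$ has exactly linear growth (so \cref{lemma:2W} is applicable) and that its coefficient depends on $\|\vZ_{\nu_2}\|_2$, which requires the first (boundedness) half of the lemma as a prerequisite. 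Once this coupling is handled, everything else is a routine Grönwall estimate.
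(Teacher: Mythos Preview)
Your proposal is correct and follows essentially the same approach as the paper: a Grönwall argument for boundedness using \cref{eq: lemma: bound_sigma_1}, then the same add-and-subtract decomposition for stability, with the Lipschitz estimate from \cref{lemma: stable_sigma} handling the first term and the $2$-Wasserstein continuity \cref{lemma:2W} (via the linear growth of $\nabla_{\vtheta}\vsigma$ from \cref{eq: lemma: bound_sigma_3}) handling the second, followed by another Grönwall step. The only cosmetic difference is that the paper differentiates $\|\vZ\|_2^2$ and $\|\vdelta\|_2^2$ rather than working with the integral form and the norm directly, which produces slightly different absolute constants but the same structure and dependence.
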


\begin{proof}[Proof of \cref{lemma: well_OIE}]
We firstly demonstrate that \cref{eq:resnet_meanfield} has a unique $\mathcal{C}_1$ solution and then prove the boundedness of $\bm Z_{\nu}$ under different probability measures. 

By \cref{lemma: stable_sigma}, we have 
\begin{align*}
    \left\|
    \int_{\mathbb{R}^k} ( \vsigma(\vz_1,\vtheta) - \vsigma(\vz_2,\vtheta) )\rmd \nu_1(\vtheta, s)    \right\|_2&\leq \Csigma \|\vz_1-\vz_2\|_2  \int_{\mathbb{R}^k} (\|\vtheta\|_2^2+1)\rmd \nu_1(\vtheta, s)\\
    &\leq  \Csigma \|\vz_1-\vz_2\|_2 (\|\nu_1\|_\infty^2+1) \,,\numberthis\label{eq: bound_diff_sigma}
\end{align*}
which implies that $\int_{\mathbb{R}^k} \vsigma(\vz_1, \vtheta)\rmd \nu_1(\vtheta,s)$ is locally Lipschitz.  Combining this with the a-priori
estimate, the ODE theory implies that \cref{eq:resnet_meanfield} has a unique $\mathcal{C}_1$ solution. 

In the next, we aim to prove the boundedness of $\bm Z_{\nu}$.
For any $s\in[0,1]$, by \cref{eq: lemma: bound_sigma_1} in \cref{lemma: bound_sigma}, we have
\begin{equation*}\label{eqn:Fbound}
\begin{aligned}
\left\|\int_{\mathbb{R}^k} \vsigma(\vz, \vtheta) \rmd {\nu_1}(\vtheta,s)
\right\|_2 \leq \int_{\mathbb{R}^k} \|\vsigma(\vz, \vtheta)\|_2 \rmd {\nu_1}(\vtheta,s) \leq \Csigma (\|\vz\|_2+1) \int_{\mathbb{R}^k} (\|\vtheta\|_2^2+1) \rmd {\nu_1}(\vtheta,s) \,.
\end{aligned}
\end{equation*}

To prove the boundedness of $\vZ_{{\nu_1}}$, using \cref{eq:resnet_meanfield} and \cref{lemma: bound_sigma}, we have
\begin{align*}
    \frac{\rmd\| \vZ_{{\nu_1}}(\vx, s)\|_2^2}{\rmd s} &= 2 \vZ^\top_{{\nu_1}}(\vx, s) \frac{\rmd\vZ_{{\nu_1}}(\vx, s)}{\rmd s}\\
    &\leq 2\alpha \Csigma(\|\vZ_{\nu_1}(\vx, s)\|_2^2+\|\vZ_{\nu_1}(\vx, s)\|_2
    )\int_{\mathbb{R}^k} (\|\vtheta\|_2^2+1) \rmd {\nu_1}(\vtheta,s) \\
    &\leq 4 \alpha \Csigma(\|\vZ_{\nu_1}(\vx, s)\|_2^2+1)\int_{\mathbb{R}^k} (\|\vtheta\|_2^2+1) \rmd {\nu_1}(\vtheta,s).
\end{align*}
By Gr\"{o}nwall's inequality, and $\vZ_{\nu_1}(\vx,0)=\vx$, we have
\begin{align*}
    \| \vZ_{{\nu_1}}(\vx, s)\|_2 &\leq \exp\left(
    2\alpha \Csigma\left( \int_0^1\int_{\mathbb{R}^k} \|\vtheta\|_2^2 \rmd {\nu_1}(\vtheta, s)+1
    \right)\right)(\|\vx\|_2+1) \\
    &\leq \exp(2\alpha \Csigma (\|{\nu_1}\|_\infty^2+1))(\|\vx\|_2+1).
\end{align*}
By \cref{assum:data}, $\|\vx\|_2\leq 1$, we thus have a priori estimate of $Z_{{\nu_1}}$. Let $\CZ^1(\|\nu_1\|_\infty^2; \alpha):=2\exp(2\alpha \Csigma (\|{\nu_1}\|_\infty^2+1))$, we have $\left\|\vZ_{\nu_1}(\vx,s)\right\|_2\leq \CZ^1(\|\nu_1\|_\infty^2; \alpha)$.

In the next, to estimate the difference under different measures $\nu_1$ and $\nu_2$, define
\begin{align*}
    \vdelta(\vx,s) = \vZ_{\nu_1}(\vx, s) - \vZ_{\nu_2}(\vx, s)\,,
\end{align*}
and we can easily obtain
\begin{align*}
    \frac{\rmd \|\vdelta(\vx, s)\|_2^2}{\rmd s} &= 2\alpha\left\langle
    \vdelta(\vx, s), \int_{\mathbb{R}^k} \vsigma(\vZ_{\nu_1}(\vx, s),\vtheta)\rmd \nu_1(\vtheta, s) - \int_{\mathbb{R}^k} \vsigma(\vZ_{\nu_2}(\vx, s),\vtheta) \rmd\nu_2(\vtheta, s)
    \right\rangle \\
    &:= 2\alpha\left\langle\vdelta(\vx, s),{\tt (A)}+{\tt (B)}\right\rangle,\numberthis\label{eq: diff_delta_2}
\end{align*}
where, by \cref{eq: bound_diff_sigma}, we have
\begin{align*}
    \|{\tt (A)}\|_2 : = \left\|\int_{\mathbb{R}^k} (\vsigma(\vZ_{\nu_1}(\vx, s),\vtheta)-\vsigma(\vZ_{\nu_2}(\vx, s),\vtheta))\rmd\nu_1(\vtheta, s)\right\|_2\leq \Csigma \|\vdelta(\vx, s)\|_2(\|\nu_1\|_\infty^2+1)\,,
\end{align*}
and
\begin{align*}
    {\tt (B)} :=
    \int_{\mathbb{R}^k} \vsigma(\vZ_{\nu_2}(\vx, s),\vtheta)\rmd \nu_1(\vtheta, s) -  \int_{\mathbb{R}^k} \vsigma(\vZ_{\nu_2}(\vx, s),\vtheta)\rmd\nu_2(\vtheta, s)\,.
\end{align*}

By \cref{lemma:2W} and $\|\nabla_{\vtheta}\vsigma(\vZ_{\nu_2}(\vx, s),\vtheta)\|_F\leq\Csigma\cdot (\|\vZ_{\nu_2}(\vx, s),\vtheta)\|_2+1)(\|\vtheta\|_2+1)$, we can bound ${\tt (B)}$ by
\begin{align*}
    \| {\tt (B)} \|_2&\leq \Csigma\cdot (\|\vZ_{\nu_2}(\vx, s),\vtheta)\|_2+1) \cdot(\|\vtheta\|_2+1) \cdot\mathcal{W}_2(\nu_1^s,\nu_2^s)\\
    &\leq \Csigma\cdot (\|\vZ_{\nu_2}(\vx, s),\vtheta)\|_2+1) \cdot(\sqrt{\max\{\|\nu_1^s\|_2^2,\|\nu_2^s\|_2^2\}}+1) \cdot\mathcal{W}_2(\nu_1^s,\nu_2^s)\\
    &\leq \Csigma \cdot (\CZ^1(\|\nu_2\|_\infty^2;\alpha)+1) \cdot (\sqrt{\|\nu_1\|_\infty^2+\|\nu_2\|_\infty^2}+1)\mathcal{W}_2(\nu_1,\nu_2). 
\end{align*}

Plugging the estimate of ${\tt (A)}$ and ${\tt (B)}$ into \cref{eq: diff_delta_2}, we have
\begin{align*}
     \frac{\rmd \|\vdelta(\vx, s)\|_2^2}{\rmd s} &\leq  2\alpha \Csigma \big(\|\vdelta(\vx, s)\|_2^2(\|\nu_1\|_\infty^2+1)\\
     &+ \|\vdelta(\vx, s)\|_2(\CZ^1(\|\nu_1\|_\infty^2;\alpha)+1) (\sqrt{\|\nu_1\|_\infty^2+\|\nu_2\|_\infty^2}+1)\cdot \mathcal{W}_2(\nu_1,\nu_2)
     \big)\\
     &\leq 2\alpha\Csigma (\|\vdelta(\vx, s)\|_2^2+ \mathcal{W}_2^2(\nu_1,\nu_2))(\sqrt{\|\nu_1\|_\infty^2+\|\nu_2\|_\infty^2}+1)^2(\CZ^1(\|\nu_1\|_\infty^2;\alpha)+1)^2\,. 
\end{align*}

Since $\vdelta(\vx,0)=0$, by Gr\"{o}nwall's inequality, we have, $\forall s\in [0,1]$,
\begin{align*}
    \|\vdelta(\vx, s)\|_2 \leq (\exp(\alpha\Csigma)-1)\cdot  (\sqrt{\|\nu_1\|_\infty^2+\|\nu_2\|_\infty^2}+1)(\CZ^1(\|\nu_1\|_\infty^2;\alpha)+1) \cdot \mathcal{W}_2(\nu_1,\nu_2),
\end{align*}
which concludes the proof.
\end{proof}

\begin{lemma}[Boundedness and Stability of $\vp_\nu$]
\label{lemma: p_bound}
Suppose that \cref{assum:activation} holds and that $\vx$ is in the support of $\mathcal{X}$. Suppose that $\nu_1,\nu_2\in \mathcal{C}(\mathcal{P}^2; [0,1])$ and ${\vp}_{\nu_1}(\vx, s)$, ${\vp}_{\nu_2}(\vx, s)$ are defined in \cref{eq:p_ode}. Then the following three bounds are satisfied for all $s\in[0,1]$:
\begin{align}\label{eq: boundofprho}
\left\|{\vp}_{\nu_1}(\vx, s)\right\|_2\leq  \Cp(\|\nu_1\|_\infty^2,\|\tau\|_2^2;\alpha),
\end{align}
and
\begin{align}\label{eq: stabilityofprho}
\left\|{\vp}_{\nu_1}(\vx, s)-{\vp}_{\nu_2}(\vx, s)\right\|_2\leq \Cp(\|\nu_1\|_\infty^2,\|\nu_2\|_\infty^2,\|\tau\|_2^2;\alpha)\cdot \mathcal{W}_2(\nu_1,\nu_2)\,,
\end{align}
where $\Cp(\|\nu_1\|_\infty^2,\|\nu_2\|_\infty^2,\|\tau\|_2^2;\alpha)$ is a constant depending only on $\|\nu_1\|_\infty^2,\|\nu_2\|_\infty^2,\|\tau\|_2^2$ and $\alpha$, { and for $\tau\in\mathcal{P}^2$, we denote by $\|\tau\|^2_2:=  \mathbb{E}_{\vomega\sim\tau(\cdot)} \|\tau\|_2^2 <\infty$. 
}

\end{lemma}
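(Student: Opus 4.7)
The plan is to deduce both bounds directly from the explicit solution formula $\vp^\top_\nu(\vx,s) = \vp^\top_\nu(\vx,1)\,\vq_\nu(\vx,s)$ in \cref{eq: explicit_p}, combined with the stability of $\vZ_\nu$ already established in \cref{lemma: well_OIE}. The structure mirrors the two-step template used there: first a priori boundedness, then stability under perturbations of $\nu$.

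For the boundedness claim I would first control the terminal value. By \cref{assum:activation}, $\nabla_{\vz} h(\vz,\vomega) = a\vw\,\sigma_0'(\vw^\top \vz + b)$, so $\|\nabla_{\vz} h(\vz,\vomega)\|_2 \le C_1 |a|\,\|\vw\|_2 \le \tfrac{C_1}{2}(\|\vomega\|_2^2+1)$, and integrating against $\tau$ gives $\|\vp_\nu(\vx,1)\|_2 \lesssim \|\tau\|_2^2+1$. For the propagator $\vq_\nu(\vx,s)$, \cref{lemma: bound_sigma} yields $\|\nabla_{\vz}\vsigma(\vZ_\nu(\vx,s'),\vtheta)\|_F \le \Csigma(\|\vtheta\|_2^2+1)$ uniformly in $s'$, so the matrix inside the exponential has operator norm at most $\alpha\Csigma(\|\nu\|_\infty^2+1)$, and therefore $\|\vq_\nu(\vx,s)\|_{\mathrm{op}} \le \exp\bigl(\alpha\Csigma(\|\nu\|_\infty^2+1)\bigr)$. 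Multiplying yields the first estimate with
\[
\Cp(\|\nu\|_\infty^2,\|\tau\|_2^2;\alpha) := C_1(\|\tau\|_2^2+1)\exp\bigl(\alpha\Csigma(\|\nu\|_\infty^2+1)\bigr).
\]

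For stability I would use the algebraic splitting
\[
\vp_{\nu_1}^\top(\vx,s)-\vp_{\nu_2}^\top(\vx,s) = \bigl(\vp_{\nu_1}^\top(\vx,1)-\vp_{\nu_2}^\top(\vx,1)\bigr)\vq_{\nu_1}(\vx,s) + \vp_{\nu_2}^\top(\vx,1)\bigl(\vq_{\nu_1}(\vx,s)-\vq_{\nu_2}(\vx,s)\bigr).
\]
The first term is controlled by the Lipschitzness of $\vz\mapsto \nabla_{\vz} h(\vz,\vomega)$ (from $|\sigma_0''|\le C_1$) composed with the $\vZ_\nu$-stability of \cref{lemma: well_OIE}, which brings in a factor $\mathcal{W}_2(\nu_1,\nu_2)$ after integration against $\tau$. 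For the second term, the matrix exponential is Lipschitz on bounded sets with constant controlled by the boundedness step above, so $\|\vq_{\nu_1}-\vq_{\nu_2}\|_{\mathrm{op}}$ is bounded by the $L^\infty_s$-norm of the difference of the integrands inside the exponentials. That difference decomposes exactly as the ${\tt(A)}+{\tt(B)}$ split from the proof of \cref{lemma: well_OIE}: an ${\tt(A)}$-piece arising from changing the argument $\vZ_{\nu_1}\to\vZ_{\nu_2}$ (handled by \cref{lemma: well_OIE} combined with \cref{lemma: stable_sigma}) and a ${\tt(B)}$-piece arising from changing the integration measure $\nu_1\to\nu_2$ (handled by \cref{lemma:2W} applied to $\vtheta\mapsto \nabla_{\vz}\vsigma(\vZ_{\nu_2}(\vx,s'),\vtheta)$, whose $\vtheta$-gradient is controlled by \cref{lemma: bound_sigma}). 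Each contribution is $O(\mathcal{W}_2(\nu_1,\nu_2))$, which closes the stability bound.

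The main obstacle is bookkeeping. Every application of \cref{lemma:2W} contributes a prefactor of the form $c_1\sigma + c_2$ with $\sigma^2$ depending on the second moments of both $\nu_1$ and $\nu_2$, and every application of \cref{lemma: stable_sigma} introduces additional factors like $\|\vtheta\|_2+1$ or $\|\vZ_\nu\|_2+1$ that must then be integrated against the appropriate measure. The task is to track these factors and absorb them into a single constant depending only on $\|\nu_1\|_\infty^2,\|\nu_2\|_\infty^2,\|\tau\|_2^2$ and $\alpha$. Apart from this accounting, the argument is a direct transcription of the $\vZ_\nu$ case from \cref{lemma: well_OIE} to the adjoint variable, with the explicit formula \cref{eq: explicit_p} playing the role of Grönwall's inequality.
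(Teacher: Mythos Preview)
Your approach is correct but takes a different route from the paper. The paper never invokes the explicit formula \cref{eq: explicit_p}; instead it works directly with the adjoint ODE \cref{eq:p_ode}. For boundedness it differentiates $\|\vp_{\nu_1}(\vx,s)\|_2^2$ in $s$, bounds the derivative via $\|\nabla_\vz\vsigma\|_F \le \Csigma(\|\vtheta\|_2^2+1)$, and applies Gr\"onwall backward from $s=1$. For stability it sets $\vdelta_2 = \vp_{\nu_1}-\vp_{\nu_2}$, writes the linear ODE for $\vdelta_2$ with forcing term $\vp_{\nu_2}^\top \vD_{\nu_1,\nu_2}$, and closes again by Gr\"onwall on $\|\vdelta_2\|_2^2$. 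The ${\tt(A)}+{\tt(B)}$ split of $\vD_{\nu_1,\nu_2}$ you describe is exactly the one the paper uses; the only minor difference is that the paper handles the measure-change piece ${\tt(A)}$ by pulling in the optimal coupling directly and using \cref{lemma: stable_sigma}, whereas you propose \cref{lemma:2W}---both give the same $\mathcal{W}_2$ factor.

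Your algebraic route via the product $\vp_\nu^\top(\vx,1)\vq_\nu(\vx,s)$ is tidier in spirit, but it hides one cost. The step ``the matrix exponential is Lipschitz on bounded sets'' presumes that \cref{eq: explicit_p} is the \emph{ordinary} matrix exponential of the integrated coefficient. In general the matrices $\int\nabla_\vz\vsigma(\vZ_\nu(\vx,s'),\vtheta)\,\rmd\nu(\vtheta,s')$ at different $s'$ do not commute, so $\vq_\nu$ is really a time-ordered exponential and the scalar Lipschitz inequality for $\exp$ does not apply directly; making that perturbation estimate rigorous amounts to a Gr\"onwall argument on the fundamental solution, which is precisely what the paper does. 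So the paper's differential approach is slightly more robust, while yours is more transparent once one accepts \cref{eq: explicit_p} at face value.
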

\begin{proof}[Proof of \cref{lemma: p_bound}]
For any $s\in[0,1]$, by \cref{eq:p_ode} and the estimation of $\nabla_{\vz}\vsigma$ in \cref{eq: lemma: bound_sigma_2}, we have
\begin{align*}
    \frac{\rmd \|{\vp}_{\nu_1}(\vx, s)\|_2^2}{\rmd s} &= 2 \frac{\rmd {\vp}^\top_{\nu_1}(\vx, s)}{\rmd s} {\vp}_{\nu_1}(\vx, s) 
    \\
&\leq 2\alpha \|{\vp}^\top_{\nu_1}(\vx, s)\|_2^2 \cdot\left\| \int_{\mathbb{R}^k} \nabla_{\vz} \vsigma(\vZ_{\nu_1}(\vx,s),\vtheta) \rmd \nu_1(\vtheta,s)
\right\|_F \\
&\leq 2\alpha \Csigma \|{\vp}^\top_{\nu_1}(\vx, s)\|_2^2 \int_{\mathbb{R}^k} (\|\vtheta\|_2^2+1) \rmd \nu_1(\vtheta, s)\,.
\end{align*}

It follows from the estimation of $\nabla_{\vz}\vsigma$ in \cref{eq: lemma: bound_sigma_2}, 
\begin{align*}
    \|{\vp}_{\nu_1}^\top(\vx, 1)\|_2 =\left\|\int_{\mathbb{R}^{k_\tau}} \nabla_{\vz} h(\vZ_{\nu_1}(\vx, 1), \vomega) \rmd \tau(\vomega)\right\|_2 \leq \Csigma\cdot (\|\tau\|_2^2+1)\,.
\end{align*}

Therefore, by the Gr\"{o}nwall's inequality
\begin{align*}
    \|{\vp}_{\nu_1}(\vx, s)\|_2 &\leq \Csigma\cdot (\|\tau\|_2^2+1) \cdot \exp\left(\alpha \Csigma \int_0^1\int_{\mathbb{R}^k} (\|\vtheta\|_2^2+1) \rmd \nu_1(\vtheta, s)    \right)\\
    &\leq C(\|\nu_1\|_\infty^2,\|\tau\|_2^2;\alpha)\,.
\end{align*}
In the next, we deal with \cref{eq: stabilityofprho}, define
\begin{align*}
    \vdelta_2(\vx, s) := {\vp}_{\nu_1}(\vx, s)-{\vp}_{\nu_2}(\vx, s)\,,
\end{align*}
we have (taking $s=1$) 
\begin{align*}
\|\vdelta_2(\vx, 1)\|_2 &= \left\|
\int_{\mathbb{R}^{k_\tau}} \nabla_{\vz} h(\vZ_{\nu_1}(\vx, 1), \vomega) -\nabla_{\vz} h(\vZ_{\nu_2}(\vx, 1), \vomega) \rmd \tau(\vomega)
\right\|_2 \\
&\leq \Csigma (\|\tau\|_2^2+1)\cdot \|\vZ_{\nu_1}(\vx, 1)-\vZ_{\nu_2}(\vx, 1)\|_2\\
&\leq C(\|\nu_1\|_\infty^2,\|\nu_2\|_\infty^2,\|\tau\|_2^2;\alpha)\cdot \mathcal{W}_2(\nu_1,\nu_2)\,.
\end{align*}

The following ODE is satisfied by $\vdelta_2(\vx,s)$ by \cref{eq:p_ode}, 
\begin{align*}
    \frac{\partial \vdelta_2^\top(\vx, s)}{\partial s} = -\alpha\cdot \vdelta_2^\top(\vx,\vomega. s) \int_{\mathbb{R}^k} \nabla_{\vz} \vsigma(\vZ_{\nu_1}(\vx,s),\vtheta) \rmd \nu_1(\vtheta,s) + \alpha\cdot {\vp}_{\nu_2}(\vx,s)^\top\vD_{\nu_1,\nu_2}(\vx,s)\,,
\end{align*}
with
\begin{align*}
    \vD_{\nu_1,\nu_2}(\vx,s) := \int_{\mathbb{R}^k} \nabla_{\vz} \vsigma(\vZ_{\nu_2}(\vx, s),\vtheta) \rmd \nu_2(\vtheta,s)-\int_{\mathbb{R}^k} \nabla_{\vz} \vsigma(\vZ_{\nu_1}(\vx, s),\vtheta) \rmd \nu_1(\vtheta,s)\,.
\end{align*}
Furthermore, we also split $\vD_{\nu_1,\nu_2}(\vx, s)$ as
\begin{align*}
    \|\vD_{\nu_1,\nu_2}(\vx, s)\|_F&\leq \underbrace{\left\|\int_{\mathbb{R}^k} \nabla_{\vz} \vsigma(\vZ_{\nu_2}(\vx, s),\vtheta) \rmd \nu_2(\vtheta,s)-\int_{\mathbb{R}^k} \nabla_{\vz} \vsigma(\vZ_{\nu_2}(\vx, s),\vtheta) \rmd \nu_1(\vtheta,s)\right\|_F}_{\tt (A)}  \\
    &+ \underbrace{\left\|\int_{\mathbb{R}^k} \left(\nabla_{\vz} \vsigma(\vZ_{\nu_2}(\vx, s),\vtheta)-\nabla_{\vz} \vsigma(\vZ_{\nu_1}(\vx, s),\vtheta)\right) \rmd \nu_1(\vtheta,s)\right\|_F}_{\tt (B)}\,.
\end{align*}
Clearly, ${\tt (B)}$ can be estimated by
\begin{align*}
    {\tt (B)}\leq \CZ(\|\nu_1\|_\infty^2,\|\nu_2\|_\infty^2;\alpha)\cdot \Csigma\cdot(\|\nu_1\|_\infty^2+1)\cdot \mathcal{W}_2(\nu_1,\nu_2) \,.
\end{align*}
To estimate ${\tt (A)}$, denote $\pi^\star_\nu\in \Pi(\nu_1^s,\nu_2^s)$ such that $\mathbb{E}_{(\vtheta_1,\vtheta_2)\sim \pi^\star_{\nu}} \|\vtheta_1-\vtheta_2\|_2^2=\mathcal{W}_2^2(\nu_1^s,\nu_2^s)$, by \cref{lemma: stable_sigma}, we then have
\begin{align*}
    {\tt (A)}&\leq\mathbb{E}_{(\vtheta_1,\vtheta_2)\sim \pi^\star_{\nu}} \|\nabla_{\vz} 
    \vsigma(\vZ_{\nu_2}(\vx, s),\vtheta_2)-\nabla_{\vz} \vsigma(\vZ_{\nu_2}(\vx, s),\vtheta_1) \|_F\\
    &\leq \Csigma\cdot    \sqrt{3\mathbb{E}_{(\vtheta_1,\vtheta_2)\sim \pi^\star_{\nu}} \|\vtheta_1\|_2^2+\|\vtheta_2\|_2^2+1}\cdot\sqrt{\mathbb{E}_{(\vtheta_1,\vtheta_2)\sim \pi^\star_{\nu}} \|\vtheta_1-\vtheta_2\|_2^2}\\
    &\leq  \Csigma\cdot \sqrt{3(\|\nu_1\|_\infty^2+\|\nu_2\|_\infty^2+1)}
    \cdot \mathcal{W}_2(\nu_1,\nu_0)\,.
    \end{align*}
Combining the estimate of ${\tt (A)}$ and ${\tt (B)}$, we have
\begin{align*}
    \|\vD_{\nu_1,\nu_2}(\vx, s)\|_F\leq C(\|\nu_1\|_\infty^2,\|\nu_2\|_\infty^2;\alpha)\cdot \mathcal{W}_2(\nu_1,\nu_2) \,.
\end{align*}
Accordingly, we are ready to estimate $\vdelta_2(\vx, s)$.
\begin{align*}
&\frac{\rmd \|\vdelta_2(\vx, s)\|_2^2}{\rmd s}\\
&= 2 \left(-\alpha\cdot \delta^\top(\vx, s) \int_{\mathbb{R}^k} \nabla_{\vz} \vsigma(\vZ_{\nu_1}(\vx,s),\vtheta) \rmd \nu_1(\vtheta, s) + \alpha\cdot {\vp}_{\nu_2}(\vx, s)^\top\vD_{\nu_1,\nu_2}(\vx,s)\right) \vdelta_2(\vx, s)\\
&\leq 2\alpha \left(
\|\vdelta_2(\vx, s)\|_2^2 \left(1+\int_{\mathbb{R}^k} \|\nabla_{\vz} \vsigma(\vZ_{\nu_1}(\vx,s),\vtheta)\|_F \rmd \nu_1(\vtheta, s)\right) + \|{\vp}_{\nu_2}(\vx, s)^\top\vD_{\nu_1,\nu_2}(\vx,s)\|_2^2
\right)\\
&\leq \alpha\|\vdelta_2(\vx, s)\|_2^2 (1+\Csigma\cdot(1+\|\nu_1\|_\infty^2)) + 2\alpha [C(\|\nu_1\|_\infty^2,\|\nu_2\|_\infty^2,\|\tau\|_2^2;\alpha)]\cdot \mathcal{W}_2(\nu_1,\nu_2)^2\\
&\leq C(\|\nu_1\|_\infty^2,\|\nu_2\|_\infty^2,\|\tau\|_2^2;\alpha)\cdot (\|\vdelta_2(\vx, s)\|_2^2+\mathcal{W}_2(\nu_1,\nu_2)^2)\,.
\end{align*}

By the Gr\"{o}nwall's inequality, $\|\vdelta_2(\vx, s)\|_2\leq \Cp(\|\nu_1\|_\infty^2,\|\nu_2\|_\infty^2,\|\tau\|_2^2;\alpha) \cdot \mathcal{W}_2(\nu_1,\nu_2)$, and the proof is finished. 
\end{proof}
\section{Main Results}
\label{app: prove_gd}

\subsection{Gradient Flow}\label{sec:gradient_flow}
\begin{proof}[Proof of \cref{thm: main_gd}]
To prove \cref{thm: main_gd}, we need to estimate
\begin{align*}
    &\widehat{L}({\tau_t,\nu_t})-\widehat{L}({\tau_{t_0},\nu_{t_0}}) = \frac{1}{2}\bE_{x\sim\mathcal{D}_n} [(\widehat{f}_{{\tau_t,\nu_t}}(\vx)- y(\vx))^2-(\widehat{f}_{{\tau_{t_0},\nu_{t_0}}}(\vx)- y(\vx))^2]\\
    =&\bE_{x\sim\mathcal{D}_n}(\widehat{f}_{{\tau_{t_0},\nu_{t_0}}}(\vx)- y(\vx)) (\widehat{f}_{{\tau_t,\nu_t}}(\vx)-\widehat{f}_{{\tau_{t_0},\nu_{t_0}}}(\vx)) + o(|\widehat{f}_{{\tau_t,\nu_t}}(\vx)-\widehat{f}_{{\tau_{t_0},\nu_{t_0}}}(\vx)|), 
\end{align*}
by $(a+\epsilon)^2-a^2=2a\epsilon+o(|\epsilon|)$, 
where $o(\cdot)$ denotes the higher order of the error term. 
Then, we estimate $\widehat{f}_{{\tau_t,\nu_t}}(\vx)-\widehat{f}_{{\tau_{t_0},\nu_{t_0}}}(\vx)$, 
\begin{align*}
    &\widehat{f}_{{\tau_t,\nu_t}}(\vx)-\widehat{f}_{{\tau_{t_0},\nu_{t_0}}}(\vx) 
    = \beta \cdot\int_{\mathbb{R}^{k_\tau}}h(\vZ_{{\nu_t}}(\vx,1),\vomega)\rmd {\tau_t}(\vomega)-h(\vZ_{{\nu_{t_0}}}(\vx,1),\vomega)\rmd {\tau_{t_0}}(\vomega)\\
    =& \beta \cdot\left(\underbrace{\int_{\mathbb{R}^{k_\tau}}h(\vZ_{{\nu_t}}(\vx,1),\vomega)(\rmd {\tau_t}(\vomega)-\rmd {\tau_{t_0}}(\vomega))}_{\tt (A)} + \underbrace{
    \int_{\mathbb{R}^{k_\tau}} (h(\vZ_{{\nu_{t}}}(\vx,1),\vomega)-h(\vZ_{{\nu_{t_0}}}(\vx,1),\vomega))\rmd {\tau_{t_0}}(\vomega)}_{\tt (B)} \right)\,.
\end{align*}
We estimate $\vZ_{\nu_t}(\vx, s)$ in the following, in which we assume $\vtheta_t^s\sim \nu_t(\cdot,s), \vtheta_{t_0}^s\sim \nu_{t_0}(\cdot,s)$ in the expectation. Similar to the derivation in \citet{pmlr-v119-lu20b, ding2022overparameterization}, we have
\begin{align*}
&\frac{1}{\alpha}\cdot\frac{\rmd (\vZ_{\nu_t}-\vZ_{\nu_{t_0}})(\vx, s) }{\rmd s} = \mathbb{E}\ (\vsigma(\vZ_{\nu_t}(\vx, s),\vtheta_t^s) - \vsigma(\vZ_{\nu_{t_0}}(\vx, s),\vtheta_{t_0}^s)) \\
=& \mathbb{E}\ (\vsigma(\vZ_{\nu_t}(\vx, s),\vtheta_t^s) - \vsigma(\vZ_{\nu_{t_0}}(\vx, s),\vtheta_{t}^s)) + \mathbb{E}\ (\vsigma(\vZ_{\nu_{t_0}}(\vx, s),\vtheta_{t}^s) - \vsigma(\vZ_{\nu_{t_0}}(\vx, s),\vtheta_{t_0}^s))\\
=&\mathbb{E}\ \partial_{\vz} \vsigma(\vZ_{\nu_{t_0}}(\vx, s),\vtheta_{t}^s) (\vZ_{\nu_t}(\vx, s)-\vZ_{\nu_{t_0}}(\vx, s)) +\mathbb{E}\ \partial_{\vtheta} \vsigma(\vZ_{\nu_{t_0}}(\vx, s),\vtheta_{t_0}^s)(\vtheta_t^s-\vtheta_{t_0}^s) + o(|t-t_0|)\\
=&\mathbb{E}\ \partial_{\vz} \vsigma(\vZ_{\nu_{t_0}}(\vx, s),\vtheta_{t_0}^s) (\vZ_{\nu_t}(\vx, s)-\vZ_{\nu_{t_0}}(\vx, s)) +\mathbb{E}\ \partial_{\vtheta} \vsigma(\vZ_{\nu_{t_0}}(\vx, s),\vtheta_{t_0}^s)(\vtheta_t^s-\vtheta_{t_0}^s) + o(|t-t_0|)\\
=&\mathbb{E}\ \partial_{\vz} \vsigma(\vZ_{\nu_{t_0}}(\vx, s),\vtheta_{t_0}^s) (\vZ_{\nu_t}(\vx, s)-\vZ_{\nu_{t_0}}(\vx, s))  \\
-&\mathbb{E}\ \partial_{\vtheta} \vsigma(\vZ_{\nu_{t_0}}(\vx, s),\vtheta_{t_0}^s) \nabla_{\vtheta} \frac{\delta \widehat{L}({\tau_{t_0},\nu_{t_0}})}{\rmd \nu_{t_0}}(\vtheta_{t_0}^s,s) (t-t_0) + o(|t-t_0|)
\end{align*}
We therefore have, by the definition of $\vq_{\nu}$ in \cref{eq: explicit_p}, 
\begin{align*}
    &(\vZ_{\nu_t}-\vZ_{\nu_{t_0}})(\vx, 1) 
    \\=& -\int_0^1\vq_{v_{t_0}}(\vx, s) \cdot\mathbb{E}\ \left(\alpha \partial_{\vtheta} \vsigma(\vZ_{\nu_{t_0}}(\vx, s),\vtheta_{t_0}^s )\nabla_{\vtheta} \frac{\delta \widehat{L}({\tau_{t_0},\nu_{t_0}})}{\rmd \nu_{t_0}}(\vtheta,s)\right)\cdot (t-t_0) \rmd s+o(|t-t_0|)
\end{align*}
and then we have $\|(\vZ_{\nu_t}-\vZ_{\nu_{t_0}})(\vx, 1) \|_2 = O(|t-t_0|)$. 
Using this fact and by the evolution of $\tau_t$ in \cref{eq:gd_meanfield_tau}, we estimate (A), 
\begin{align*}
    {\tt (A)} &= \int_{\mathbb{R}^{k_\tau}}h(\vZ_{\nu_{t_0}}(\vx,1),\vomega)(\rmd {\tau_t}(\vomega)-\rmd {\tau_{t_0}}(\vomega)) \\
    & +\int_{\mathbb{R}^{k_\tau}} (h(\vZ_{\nu_{t}}(\vx,1),\vomega)-h(\vZ_{\nu_{t_0}}(\vx,1),\vomega))(\rmd {\tau_t}(\vomega)-\rmd {\tau_{t_0}}(\vomega)) \\
    &=-\int_{\mathbb{R}^{k_\tau}} h(\vZ_{\nu_{t_0}}(\vx,1),\vomega) \nabla_{\vomega} \frac{\delta \widehat{L}({\tau_{t_0},\nu_{t_0}})}{\delta \tau_{t_0}}(\vomega) (t-t_0) \rmd \tau_{t_0}(\vomega) + o(|t-t_0|)
\end{align*}
We can also estimate (B), in which $h$ is hidden in the definition of $\vp_{\nu}$ in \cref{eq: explicit_p}, 
\begin{align*}
&{\tt (B)} = \int_{\mathbb{R}^{k_\tau}}\vp_{\nu_{t_0}}(\vx, 1)^\top (\vZ_{{\nu_t}}-\vZ_{\nu_{t_0}})(\vx, 1)\rmd \tau_{t_0}(\vomega) + o(|t-t_0|)\\
&= -\int_{\mathbb{R}^{k_\tau}} \vp_{\nu_{t_0}}(\vx, s)^\top \\
&\left(\mathbb{E}\ \alpha \nabla_{\vtheta} \vsigma(\vZ_{\nu_{t_0}}(\vx, s),\vtheta_{t_0}^s )\nabla_{\vtheta} \frac{\delta \widehat{L}({\tau_{t_0},\nu_{t_0}})}{\rmd \nu_{t_0}}(\vtheta_{t_0}^s,s)\cdot (t-t_0)\right)\rmd \tau_{t_0}(\vomega)+o(|t-t_0|)\\
&=-\int_{\mathbb{R}^{k_\tau}\times\mathbb{R}^{k_\nu}\times[0,1]}  \Big(\vp_{\nu_{t_0}}(\vx, s)^\top \cdot \alpha \nabla_{\vtheta} \vsigma(\vZ_{\nu_{t_0}}(\vx, s),\vtheta )\cdot\\
&\qquad \nabla_{\vtheta}  \frac{\delta \widehat{L}({\tau_{t_0},\nu_{t_0}})}{\delta \nu_{t_0}}(\vtheta, s)\Big)
     \rmd {\nu_{t_0}}(\vtheta,s)\cdot(t-t_0) +o(|t-t_0|)\,.
\end{align*}
Combine the estimation of ${\tt (A)}$ and ${\tt (B)}$, we have
\begin{align*}
    &\widehat{L}({\tau_t,\nu_t})-\widehat{L}({\tau_{t_0},\nu_{t_0}}) = \mathbb{E}_{\vx\sim\mathcal{D}_n} \beta (\widehat{f}_{{\tau_{t_0},\nu_{t_0}}}(\vx)- y(\vx)) ({\tt (A)}+{\tt (B)})\\
    =&-\mathbb{E}_{\vx\sim\mathcal{D}_n} \beta (\widehat{f}_{{\tau_{t_0},\nu_{t_0}}}(\vx)- y(\vx))\int_{\mathbb{R}^{k_\tau}\times\mathbb{R}^k\times[0,1]}\rmd \tau_{t_0}(\vomega)\rmd\nu_{t_0}(\vtheta, s) (t-t_0) \\
    \cdot &\left(\vZ_{{\nu_{t_0}}}^\top(\vx,1) \nabla_{\vomega} \frac{\delta \widehat{L}({\tau_{t_0},\nu_{t_0}})}{\delta \tau_{t_0}}(\vomega) \right.\\
    +& \left.\vp_{\nu_{t_0}}^\top(\vx,s) \cdot \alpha \nabla_{\vtheta} \vsigma(\vZ_{\nu_{t_0}}(\vx, s),\vtheta )
    \nabla_{\vtheta}  \frac{\delta \widehat{L}({\tau_{t_0},\nu_{t_0}})}{\delta \nu_{t_0}}(\vtheta, s) + o(|t-t_0|)\right)\\
    =&-\mathbb{E}_{\vomega\sim \tau_{t_0}, (\vtheta, s)\sim \nu_{t_0}} \left( \left\|
    \nabla_{\vtheta}  \frac{\delta \widehat{L}(\tau,\nu)}{\delta \nu}(\vtheta, s)
    \right\|_2^2+\left\|
    \nabla_{\vomega} \frac{\delta \widehat{L}(\tau,\nu)}{\delta \tau}(\vomega)
    \right\|_2^2\right) (t-t_0)+o(|t-t_0|)\,,
\end{align*}
from the definition of functional gradient in \cref{eq:diff_L_rho_tau} and \cref{eq:diff_L_rho_nu}. In all, the theorem is proved. 
\end{proof}

\begin{proof}[Proof of \cref{prop:gradient_rho_gram}]
    We expand the function derivative:
    \begin{align*}
        & \int_{\mathbb{R}^{k_\tau}\times\mathbb{R}^{k_\nu}\times[0,1]}\left\| \nabla_{\vtheta}  \frac{\delta \widehat{L}({\tau_{t},\nu_{t}})}{\delta \nu_t}(\vtheta, s)
    \right\|_2^2 \rmd \tau_t(\vomega)\rmd\nu_t(\vtheta, s)\\
    =& \int_{\mathbb{R}^{k_\tau}\times\mathbb{R}^{k_\nu}\times[0,1]} \frac{\beta^2\alpha^2}{n^2}  \sum_{i,j=1}^n (\widehat{f}_{{\tau_{t},\nu_{t}}}(\vx_i)-y(\vx_i))(\widehat{f}_{{\tau_{t},\nu_{t}}}(\vx_j)-y(\vx_j))\\
    &\cdot{\vp}^\top_{\nu_t}(\vx_i,s) \nabla_{\vtheta}\vsigma(\vZ_{\nu_t}(\vx_i, s),\vtheta)\nabla_{\vtheta}^\top\vsigma(\vZ_{\nu_t}(\vx_j, s),\vtheta) {\vp}_{\nu_t}(\vx_j,s) \rmd \tau_t(\vomega)\rmd\nu_t(\vtheta, s)\\
    =& \frac{\alpha^2\beta^2}{n^2} \vb_t^\top \vG_1(\tau_{t},\nu_{t})\vb_t\,,
    \end{align*}
    and similarly,
    \begin{align*}
        \int_{\mathbb{R}^{k_\tau}} \left\|
    \nabla_{\vomega} \frac{\delta \widehat{L}({\tau_{t},\nu_{t}})}{\delta \tau_t}(\vomega)
    \right\|_2^2 \rmd \tau_t(\vomega) = \frac{\beta^2}{n^2}\vb_t^\top \vG_2(\tau_{t},\nu_{t})\vb_t. 
    \end{align*}
In all, the lemma is proved. 
\end{proof}
\begin{proof}[Proof of \cref{lemma:kl_dynamic}]
We use the expansion of the gradient flow:
\begin{align*}
    \frac{\rmd {\rm KL}(\tau_t\|\tau_0)}{\rmd t} &= \int_{\mathbb{R}^{k_\tau}} \frac{\delta {\rm KL}(\tau_t\|\tau_0)}{\delta \tau_t} \frac{\rmd \tau_t}{\rmd t} \rmd \vomega = \int_{\mathbb{R}^{k_\tau}} \frac{\delta {\rm KL}(\tau_t\|\tau_0)}{\delta \tau_t} \nabla\cdot\left(\tau_t(\vomega)\nabla \frac{\delta \widehat{L}(\tau_{t},\nu_{t})}{\delta \tau_t}\right) \rmd \vomega\\
    &=- \int_{\mathbb{R}^{k_\tau}} \tau_t(\vomega)  \left(\nabla\frac{\delta {\rm KL}(\tau_t\|\tau_0)}{\delta \tau_t}\right)\left(\nabla \frac{\delta \widehat{L}(\tau_{t},\nu_{t})}{\delta \tau_t}\right)\rmd \vomega. 
\end{align*}
Similarly, we have
\begin{align*}
        \frac{\rmd {\rm KL}(\nu_t^s\|\nu_0^s)}{\rmd t} &= \int_{\mathbb{R}^{k_\nu}} \frac{\delta {\rm KL}(\nu_t^s\|\nu_t^0)}{\delta \nu_t^s} \frac{\rmd \nu_t^s}{\rmd t} \rmd \vtheta \\
        &= \int_{\mathbb{R}^{k_\nu}} \frac{\delta {\rm KL}(\nu_t^s\|\nu_t^0)}{\delta \nu_t^s} \nabla_{\vtheta}\cdot\left(\nu_t^s(\vtheta)\nabla_{\vtheta} \frac{\delta \widehat{L}(\tau_{t},\nu_{t})}{\delta \nu_t}(\vtheta, s)\right)  \rmd \vtheta\\
        &=-\int_{\mathbb{R}^{k_\nu}} 
        \nu_t^s(\vtheta) \left(\nabla_{\vtheta}\frac{\delta {\rm KL}(\nu_t^s\|\nu_t^0)}{\delta \nu_t^s}
        \right) \left(\nabla_{\vtheta} \frac{\delta \widehat{L}(\tau_{t},\nu_{t})}{\delta \nu_t}(\vtheta, s)\right)\rmd \vtheta.
\end{align*}
Therefore, the proof is completed. 
\end{proof}

\subsection{Minimum Eigenvalue at Initialization}\label{sec:minimal_eigen_init}

\begin{proof}[Proof of \cref{lemma: minimal_eigenvalue}]
In the proof, similar to \cref{assum:activation}, we assume $\vtheta=(\vu,\vw,b)\in\mathbb{R}^{2d+1},\vomega=(a,\vw,b)\in\mathbb{R}^{d+2},\vu,\vw\in\mathbb{R}^d, a,b\in\mathbb{R}$. 
At initialization, we notice that $\nu_0(\vtheta, s)\propto\exp\left(-\frac{\|\vtheta\|_2^2}{2}\right)$, and $\tau_0(\vomega)\propto\exp\left(-\frac{\|\vomega\|_2^2}{2}\right)$ are standard Gaussian. Since the distribution of $\vu, a$ is symmetric, and independent from other parts of $\vtheta$, $\vomega$ respectively,  we have 
\begin{align*}
    \frac{\rmd \vZ_{\nu_0}(\vx,s)}{\rmd s} &= \int_{\mathbb{R}^{k_\nu}} \vu^\top \sigma_0(\vw^\top\vZ_{\nu_0}(\vx,s)+b) \rmd \nu_0(\vu,\vw, b, s) \\
    &= \int_{\mathbb{R}^d} \vu^\top \rmd \nu_0(\vu) \int_{\mathbb{R}^{k_\nu-d}}  \sigma_0(\vw^\top\vZ_{\nu_0}(\vx,s)+b) \rmd \nu_0(\vw, b, s)= \bm{0}, \forall s\in[0,1],\\
            \frac{\rmd {\vp}^\top_{\nu_0}}{\rmd s}(\vx,s) &= - \alpha \cdot {\vp}^\top_{\nu_0}(\vx,s) \int_{\mathbb{R}^{k_\nu}} \nabla_{\vz} \vsigma (\vZ_{\nu_0}(\vx, s),\vtheta)\rmd\nu_0(\vtheta, s)\\
&= - \alpha \cdot {\vp}^\top_{\nu_0}(\vx,s) \int_{\mathbb{R}^{k_\nu}} \vu\vw^\top\sigma_0' (\vw^\top \vZ_{\nu_0}(\vx, s)+b)\rmd\nu_0(\vu,\vw,b, s) = \bm{0},\\
\vp_{\nu_0}(\vx, 1) &= \int_{\mathbb{R}^{k_\tau}} \nabla_{\vz}^\top h(\vZ_{\nu_0}(\vx, 1), \vomega) \rmd \tau_0(\vomega) \\
&= \int_{\mathbb{R}^{k_\tau}} a \vw \sigma_0'(\vw^\top \vZ_{\nu_0}(\vx, 1)+b )\tau_0(a,\vw, b) = \bm{0}.
\end{align*}
From the first two equations, 
we have
\begin{align*}
    \vZ_{\nu_0}(\vx, s)=\vx,\forall s\quad {\vp}_{\nu_0}(\vx,s)={\vp}_{\nu_0}(\vx,1)=0
\end{align*}
By the definition of $\vG_2(\tau_0,\nu_0)$, we have
\begin{align*}
\vG_2(\tau_0,\sigma_0) &= \mathbb{E}_{(a,\vw,b)\sim\mathcal{N}(0,I)}  \nabla_{\vomega} h(\vX,\vomega) \nabla_{\vomega}^\top h(\vX,\vomega)\\
&=\mathbb{E}_{(a,\vw,b)\sim\mathcal{N}(0,I)} (\vsigma_0((\vX, \mathbbm{1}) (\vw,b)), a\vsigma_0'((\vX, \mathbbm{1}) (\vw,b)), \vsigma_0'((\vX, \mathbbm{1}) (\vw,b))) \\
&\quad \quad (\vsigma_0((\vX, \mathbbm{1}) (\vw,b)), a\vsigma_0'((\vX, \mathbbm{1}) (\vw,b)), \vsigma_0'((\vX, \mathbbm{1}) (\vw,b)))^\top,\\
& \geq \mathbb{E}_{(a,\vw,b)\sim\mathcal{N}(0,I)} \vsigma_0((\vX, \mathbbm{1}) (\vw,b)) \vsigma_0((\vX, \mathbbm{1}) (\vw,b))^\top \,.
\end{align*}

Let $\bar{\vx}=(\vx, 1)$, by \cref{assum:data}, the cosine similarity of $\bar{\vx}_i$ and $\bar{\vx}_j$ is no larger than $(1+C_{\max})/2$. Then we bound $\lambda_{\min}(\bm{G}^{(2)})$:
\begin{equation*}
\begin{split}
\lambda_{\min}(\bm{G}^{(2)}) & \geq \lambda_{\min}\bigg(\mathbb{E}_{\bm{w} \sim \mathcal{N}(0,\mathbb{I}_{d+1})}[\sigma_1(\bm{\bar{X}w})\sigma_1(\bm{\bar{X}w})^\top]\bigg)\\
& = \lambda_{\min}\bigg(\sum_{s=0}^{\infty}\mu_{s}(\sigma_1)^{2}\bigcirc_{i=1}^{s}(\bm{\bar{X}\bar{X}}^{\top})\bigg) \quad \text{\citep[Lemma D.3]{nguyen2020global}}\\
& \geq \mu_{r}(\sigma_1)^{2}\lambda_{\min}(\bigcirc_{i=1}^{r}\bm{\bar{X}\bar{X}}^{\top}) \quad \bigg(\mbox{taking~}r \geq \frac{2\log (2n)}{1-C_{\text{max}}}\bigg)\\
& \geq \mu_{r}(\sigma_1)^{2}\bigg(\min_{i\in [n]}\left \| \bar{\vx}_i \right \|_{2}^{2r}-(n-1)\max_{i\neq j}\left | \left \langle \bar{\vx}_i, \bar{\vx}_j \right \rangle \right |^r\bigg) \quad \text{[Gershgorin circle theorem]}\\
& \geq \mu_{r}(\sigma_1)^{2}\bigg(1-(n-1)\left(\frac{1+C_{\max}}{2}\right)^r\bigg), \\
& \geq \mu_{r}(\sigma_1)^{2}\bigg(1-(n-1)\bigg(1-\frac{\log (2n)}{r}\bigg)^r\bigg) \\
& \geq \mu_{r}(\sigma_1)^{2}\bigg(1-(n-1)\exp(-\log(2n))\bigg) \\
& \geq \mu_{r}(\sigma_1)^{2}/2\,,
\end{split}
\end{equation*}
where the last inequality holds by the fact that
$\big(1-\frac{\log (2n)}{r}\big)^r$ is an increasing function of $r$.

\end{proof}

\subsection{Perturbation of Minimum Eigenvalue}\label{sec:minimal_eigen_perturb}
In this section, we analyze the minimum eigenvalue of the Gram matrix.

\begin{lemma}
\label{lemma: G_close}
    The perturbation of $\vG_2(\tau,\nu)$ can be upper bounded in the following, for any $i,j\in[n]$, 
    \begin{align*}
                |\vG_2(\tau,\nu)-\vG_2(\tau_0,\nu_0)|_{i,j}&\leq \CG(\|\tau\|_2^2,\|\nu\|_\infty^2;d,\alpha) (\mathcal{W}_2(\tau,\tau_0)+\mathcal{W}_2(\nu,\nu_0)),
    \end{align*}
    where $\vG_2$ is defined in \cref{sec:gram}, and $\tau_0,\nu_0$ satisfies \cref{assum:pde_init}. 
\end{lemma}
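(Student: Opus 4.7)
My plan is to bound the entrywise perturbation via the standard splitting
\begin{align*}
\vG_2(\tau,\nu) - \vG_2(\tau_0,\nu_0)
= \bigl[\vG_2(\tau,\nu) - \vG_2(\tau_0,\nu)\bigr]
+ \bigl[\vG_2(\tau_0,\nu) - \vG_2(\tau_0,\nu_0)\bigr]\,,
\end{align*}
and estimate each bracket by a Wasserstein distance. The first bracket changes the outer measure only, so it fits the framework of \cref{lemma:2W}; the second bracket changes the ODE measure only, so it is controlled by the stability of $\vZ_\nu$ provided in \cref{lemma: well_OIE}. Throughout, I will use the explicit form $h(\vz,\vomega)=a\,\sigma_0(\vw^\top\vz+b)$ from \cref{assum:activation} and bounds on $\sigma_0,\sigma_0',\sigma_0''$, which give estimates for $\nabla_{\vomega}h$ and $\nabla_{\vz}\nabla_{\vomega}h$ entirely analogous to those proven for $\vsigma$ in \cref{lemma: bound_sigma,lemma: stable_sigma}.

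For the first bracket, I fix $\nu$ and write the $(i,j)$ entry of $\vG_2(\cdot,\nu)$ as $\mathbb{E}_{\vomega\sim\tau}\,\phi_{i,j}(\vomega)$ with
\begin{align*}
\phi_{i,j}(\vomega) := \bigl\langle \nabla_{\vomega}h(\vZ_{\nu}(\vx_i,1),\vomega),\, \nabla_{\vomega}h(\vZ_{\nu}(\vx_j,1),\vomega) \bigr\rangle\,.
\end{align*}
Using the analogue of \cref{lemma: bound_sigma} for $h$, together with the boundedness $\|\vZ_\nu(\vx_i,1)\|_2\leq \CZ(\|\nu\|_\infty^2;\alpha)$ from \cref{lemma: well_OIE}, I obtain a polynomial bound $\|\nabla_{\vomega}\phi_{i,j}(\vomega)\|_2\leq c_1(\|\nu\|_\infty^2;d)\,\|\vomega\|_2+c_2(\|\nu\|_\infty^2;d)$. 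Applying \cref{lemma:2W} to $\phi_{i,j}$ with $\mu=\tau,\nu=\tau_0$, and using that $\tau_0$ is standard Gaussian so its second moment is $O(d)$, gives
\begin{align*}
\bigl|\vG_2(\tau,\nu)-\vG_2(\tau_0,\nu)\bigr|_{i,j}\;\leq\; C(\|\tau\|_2^2,\|\nu\|_\infty^2;d)\,\mathcal{W}_2(\tau,\tau_0)\,.
\end{align*}

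For the second bracket, I fix $\tau_0$ and observe that $\vJ_2(\nu,\vomega)_{i,\cdot}=\nabla_{\vomega}h(\vZ_\nu(\vx_i,1),\vomega)$ depends on $\nu$ only through $\vZ_\nu(\vx_i,1)$. Using the Lipschitz continuity of $\vz\mapsto\nabla_{\vomega}h(\vz,\vomega)$ (again inherited from the structure of $\sigma_0$ and its bounded derivatives), together with the stability bound
\begin{align*}
\bigl\|\vZ_\nu(\vx_i,1)-\vZ_{\nu_0}(\vx_i,1)\bigr\|_2 \;\leq\; \CZ(\|\nu\|_\infty^2;\alpha)\,\mathcal{W}_2(\nu,\nu_0)
\end{align*}
from \cref{lemma: well_OIE}, a telescoping identity in the product $\vJ_2\vJ_2^\top$ and the boundedness of $\|\nabla_{\vomega}h\|$ under $\tau_0$ (Gaussian moments) yield
\begin{align*}
\bigl|\vG_2(\tau_0,\nu)-\vG_2(\tau_0,\nu_0)\bigr|_{i,j}\;\leq\; C(\|\nu\|_\infty^2;d,\alpha)\,\mathcal{W}_2(\nu,\nu_0)\,.
\end{align*}
Combining both brackets and absorbing all prefactors into one constant $\CG(\|\tau\|_2^2,\|\nu\|_\infty^2;d,\alpha)$ proves the lemma.

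The main technical nuisance, rather than a deep obstacle, is bookkeeping the polynomial growth in $\|\vomega\|_2$ and $\|\vtheta\|_2$: the integrand $\phi_{i,j}$ grows quadratically in $\vomega$, so its gradient grows only linearly, which is precisely what \cref{lemma:2W} needs; one must check this carefully component by component using the decomposition of $\nabla_{\vomega}h$ analogous to \cref{eq: app_nabla_theta}. Everything else is a routine application of the already-established stability estimates for $\vZ_\nu$.
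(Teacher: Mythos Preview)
Your overall splitting is reasonable and the second bracket is handled correctly. The genuine gap is in the first bracket, specifically the claim that $\phi_{i,j}$ has a gradient with at most linear growth in $\|\vomega\|_2$, so that \cref{lemma:2W} applies. Writing $\vomega=(a,\vw,b)$ and $y_k=\vw^\top\vZ_\nu(\vx_k,1)+b$, one has
\[
\phi_{i,j}(\vomega)=\sigma_0(y_i)\sigma_0(y_j)+a^2\bigl(\langle \vZ_\nu(\vx_i,1),\vZ_\nu(\vx_j,1)\rangle+1\bigr)\,\sigma_0'(y_i)\sigma_0'(y_j)\,,
\]
and differentiating the second summand in $\vw$ or $b$ produces terms of the form $a^2\,\sigma_0''\,\sigma_0'\cdot(\text{bounded})$. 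Since $\sigma_0',\sigma_0''$ are only assumed bounded (not decaying), $\|\nabla_{\vomega}\phi_{i,j}(\vomega)\|_2$ grows like $a^2$, i.e.\ quadratically in $\|\vomega\|_2$. Thus \cref{lemma:2W}, which requires $\|\nabla g(w)\|_2\le c_1\|w\|_2+c_2$, does not apply, and your sentence ``the integrand $\phi_{i,j}$ grows quadratically in $\vomega$, so its gradient grows only linearly'' is precisely where the argument breaks.

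The paper avoids this obstruction by not invoking \cref{lemma:2W} at all for the $\tau$-perturbation. Instead it takes the optimal coupling $(\vomega,\vomega_0)\sim\pi^\star_\tau$ and telescopes $\nabla_{\vomega}h(\vZ_\nu,\vomega)\nabla_{\vomega}h(\vZ_\nu,\vomega)^\top-\nabla_{\vomega}h(\vZ_{\nu_0},\vomega_0)\nabla_{\vomega}h(\vZ_{\nu_0},\vomega_0)^\top$ inside the expectation, changing $\vomega$ and $\nu$ simultaneously. Each difference $\nabla_{\vomega}h(\vZ_\nu,\vomega)-\nabla_{\vomega}h(\vZ_{\nu_0},\vomega_0)$ is then split using \cref{lemma: stable_sigma} and \cref{lemma: well_OIE}, and the product with the remaining factor $\|\nabla_{\vomega}h\|\lesssim(\|\vomega\|_2+1)$ is controlled by Cauchy--Schwarz together with Gaussian fourth moments $\mathbb{E}\|\vomega_0\|_2^4=3(d+2)$. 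If you want to keep your two-bracket decomposition, you can repair the first bracket the same way: replace the appeal to \cref{lemma:2W} by the coupling and Cauchy--Schwarz, which absorbs the extra power of $\|\vomega\|_2$ into higher Gaussian moments of $\tau_0$.
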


\begin{proof}[Proof of \cref{lemma: G_close}]

We deal with $\vG_2(\tau,\nu)$ in an element-wise way.
Let $(\vomega,\vomega_0)\sim{\pi^\star_{\tau}}$ be the optimal coupling of $\mathcal{W}_2(\tau,\tau_0)$, the difference can be estimated by
        \begin{align*}
       & |\vG_2(\tau,\nu)-\vG_2(\tau_0,\nu_0)|_{i,j}\\
       \leq& \mathbb{E} | \nabla_{\vomega}h(\vZ_{\nu}(\vx_i, 1),\vomega)\nabla_{\vomega}^\top h(\vZ_{\nu}(\vx_j, 1),\vomega) 
        - \nabla_{\vomega}h(\vZ_{\nu_0}(\vx_i, 1),\vomega_0)\nabla_{\vomega}^\top h(\vZ_{\nu_0}(\vx_j, 1),\vomega_0)|\\
\leq& \underbrace{\mathbb{E} | \nabla_{\vomega}h(\vZ_{\nu}(\vx_i, 1),\vomega)(\nabla_{\vomega} h(\vZ_{\nu}(\vx_j, 1),\vomega)- \nabla_{\vomega} h(\vZ_{\nu_0}(\vx_j, 1),\vomega_0))^\top |}_{\tt (A)} \\
        +& \underbrace{\mathbb{E} | ( \nabla_{\vomega}h(\vZ_{\nu}(\vx_i, 1),\vomega)- \nabla_{\vomega}h(\vZ_{\nu_0}(\vx_i, 1),\vomega_0))
        \nabla_{\vomega}^\top h(\vZ_{\nu_0}(\vx_j, 1),\vomega_0)|}_{\tt (B)} \,.
\end{align*}
We then estimate ${\tt (A)}$ and ${\tt (B)}$ separately. The term ${\tt (A)}$ involves
\begin{align*}
    &\|\nabla_{\vomega} h(\vZ_{\nu}(\vx, 1),\vomega)- \nabla_{\vomega} h(\vZ_{\nu_0}(\vx, 1),\vomega_0)\|_2 \\
    \leq & \|\nabla_{\vomega} h(\vZ_{\nu}(\vx, 1),\vomega)- \nabla_{\vomega} h(\vZ_{\nu}(\vx, 1),\vomega_0)\|_2  +  \|\nabla_{\vomega} h(\vZ_{\nu}(\vx, 1),\vomega_0)- \nabla_{\vomega} h(\vZ_{\nu_0}(\vx, 1),\vomega_0)\|_2\\
    \leq & \Csigma\cdot (\|\vZ_{\nu}(\vx, 1)\|_2+1)\cdot \|\vomega-\vomega_0\|_2 + \Csigma \cdot (\|\vomega_0\|_2^2+1)\cdot \|\vZ_{\nu}(\vx, 1)-\vZ_{\nu_0}(\vx, 1)\|_2\\
    \leq &\Csigma \cdot((\CZ(\|\nu\|_\infty^2;\alpha)+1)\cdot \|\vomega-\vomega_0\|_2 + (\|\vomega_0\|_2^2+1)\cdot \CZ(\|\nu\|_\infty^2,\|\nu_0\|_\infty^2;\alpha) \mathcal{W}_2(\nu,\nu_0)) \,,
\end{align*}
where we use \cref{lemma: stable_sigma,lemma: well_OIE} in our proof.
Besides, the term ${\tt (B)}$ involves 
\begin{align*}
    \|\nabla_{\vomega}h(\vZ_{\nu}(\vx, 1),\vomega)\|_2&\leq \Csigma\cdot (\|\vZ_{\nu}(\vx, 1)\|_2+1)\cdot(\|\vomega\|_2+1)\\&\leq \Csigma\cdot (\CZ(\|\nu\|_\infty^2;\alpha)+1)\cdot(\|\vomega\|_2+1).
\end{align*}
Therefore, by $\mathbb{E}\|\vomega_0\|_2^4=3k_\tau=3(d+2)$. 
\begin{align*}
    &{\rm (A)+(B)}\leq  C(\|\nu\|_\infty^2,\|\nu_0\|_\infty^2;\alpha) \mathbb{E}( \|\vomega-\vomega_0\|_2 + (\|\vomega_0\|_2^2+1) \mathcal{W}_2(\nu,\nu_0) )(\|\vomega\|_2+\|\vomega_0\|_2+2)\\
    \leq & C(\|\nu\|_\infty^2,\|\nu_0\|_\infty^2;\alpha) \mathbb{E}( \|\vomega-\vomega_0\|_2 + (\|\vomega_0\|_2^2+1) \mathcal{W}_2(\nu,\nu_0) )(\|\vomega\|_2^2+\|\vomega_0\|_2^2+4)\\
    \leq&
C(\|\tau\|_2^2,\|\tau_0\|_2^2, \|\nu\|_\infty^2,\|\nu_0\|_\infty^2;d,\alpha) (\mathbb{E} \|\vomega-\vomega_0\|_2 (\|\vomega\|_2^2+\|\vomega_0\|_2^2+4) + \mathcal{W}_2(\nu,\nu_0))\\
\leq & C(\|\tau\|_2^2,\|\tau_0\|_2^2,\|\nu\|_\infty^2,\|\nu_0\|_\infty^2;d,\alpha) [(\mathbb{E} \|\vomega-\vomega_0\|_2^2)^{\frac{1}{2}} (\mathbb{E} (\|\vomega\|_2^2+\|\vomega_0\|_2^2+4)^2)^{\frac{1}{2}}+\mathcal{W}_2(\nu,\nu_0)]\\
\leq & C(\|\tau\|_2^2,\|\tau_0\|_2^2,\|\nu\|_\infty^2,\|\nu_0\|_\infty^2;d,\alpha) (\mathcal{W}_2(\tau,\tau_0)+\mathcal{W}_2(\nu,\nu_0))\,,
\end{align*}
since $\mathbb{E} \|\vomega-\vomega_0\|_2^2=(\mathcal{W}_2(\tau,\tau_0))^2$, by the definition of optimal coupling. Since $\|\tau_0\|_2^2 = d+2, \|\nu_0\|_\infty^2=2d+1$, we can drop dependence of $C$ on  $\|\tau_0\|_2^2 ,\|\nu_0\|_\infty^2$ and replace them by $d$. In all, the lemma is proved. Specifically, we could set
\begin{align*}
    &\CG(\|\tau\|_2^2,\|\nu\|_\infty^2;d,\alpha) \\
    :=& 16(d+1)\Csigma^2 (\CZ(\|\nu\|_\infty^2;\alpha)+1) +\CZ(\|\nu\|_\infty^2,2d+1;\alpha))^2 (\|\tau\|_2^2+d+1)
\end{align*}
\end{proof}
\begin{lemma}
\label{lemma: G_bound_in_d}
If $\nu\in\mathcal{C}(\mathcal{P}^2;[0,1]),\tau\in\mathcal{P}^2$, $\mathcal{W}_2(\nu,\nu_0)\leq \sqrt{d}$, and $\mathcal{W}_2(\tau,\tau_0)\leq \sqrt{d}$, we have $\forall i,j\in[n]$, 
        \begin{align*}
                |\vG_2(\tau,\nu)-\vG_2(\tau_0,\nu_0)|_{i,j}
                &\leq \CG(d,\alpha)\cdot (\mathcal{W}_2(\nu_t,\nu_0)+\mathcal{W}_2(\tau_t,\tau_0))
    \end{align*}
\end{lemma}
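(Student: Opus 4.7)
The plan is simply to invoke \cref{lemma: G_close} and then replace the constant $\CG(\|\tau\|_2^2,\|\nu\|_\infty^2;d,\alpha)$, which depends on the second moments of $\tau$ and $\nu$, by a quantity $\CG(d,\alpha)$ that depends only on $d$ and $\alpha$. To do this, I need to control $\|\tau\|_2^2$ and $\|\nu\|_\infty^2$ in terms of the Wasserstein radius $\sqrt d$ and the known moments of the Gaussian initialization.

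The key observation is that, under \cref{assum:pde_init}, $\tau_0$ and $\nu_0^s$ are standard Gaussians on $\mathbb{R}^{k_\tau}$ and $\mathbb{R}^{k_\nu}$ respectively, so $\|\tau_0\|_2^2 = k_\tau = d+2$ and $\|\nu_0^s\|_2^2 = k_\nu = 2d+1$ for every $s\in[0,1]$. Taking $(\vomega,\vomega_0)\sim \pi^\star$ to be the $\mathcal{W}_2$-optimal coupling between $\tau$ and $\tau_0$, the triangle inequality in $L^2(\pi^\star)$ gives
\begin{equation*}
\|\tau\|_2 = \bigl(\mathbb{E}\,\|\vomega\|_2^2\bigr)^{1/2} \leq \bigl(\mathbb{E}\,\|\vomega_0\|_2^2\bigr)^{1/2} + \bigl(\mathbb{E}\,\|\vomega-\vomega_0\|_2^2\bigr)^{1/2} = \|\tau_0\|_2 + \mathcal{W}_2(\tau,\tau_0) \leq \sqrt{d+2}+\sqrt{d}\,,
\end{equation*}
so $\|\tau\|_2^2 \leq 2(\,(d+2)+d\,) = 4d+4$. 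Applying the same argument fiberwise in $s$ to $\nu^s$ and $\nu_0^s$ and then taking the supremum over $s\in[0,1]$, which is allowed since by definition $\mathcal{W}_2(\nu,\nu_0)=\sup_s \mathcal{W}_2(\nu^s,\nu_0^s)\leq \sqrt d$, yields $\|\nu\|_\infty \leq \sqrt{2d+1}+\sqrt{d}$ and hence $\|\nu\|_\infty^2 \leq 6d+2$.

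Plugging these two bounds into \cref{lemma: G_close} gives
\begin{equation*}
|\vG_2(\tau,\nu)-\vG_2(\tau_0,\nu_0)|_{i,j}\leq \CG(4d+4,\,6d+2;\,d,\alpha)\,\bigl(\mathcal{W}_2(\tau,\tau_0)+\mathcal{W}_2(\nu,\nu_0)\bigr)\,,
\end{equation*}
and defining $\CG(d,\alpha):=\CG(4d+4,\,6d+2;\,d,\alpha)$, which by the explicit formula for $\CG$ given at the end of the proof of \cref{lemma: G_close} (together with the fact that $\CZ(\cdot\,;\alpha)$ evaluated at arguments of order $d$ is a function of $d$ and $\alpha$ alone) is a finite constant depending only on $d$ and $\alpha$, concludes the proof. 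There is no serious obstacle here; the step that requires a little care is just keeping track of the dependence on $d$ through the explicit expression for $\CG$ and $\CZ$ so as to confirm that substituting $O(d)$ moments yields a constant still expressible purely in terms of $d$ and $\alpha$.
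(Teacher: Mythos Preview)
Your proposal is correct and follows essentially the same route as the paper: you bound $\|\tau\|_2^2$ and $\|\nu\|_\infty^2$ by $4d+4$ and $6d+2$ via the second-moment/triangle-inequality argument with the Gaussian initialization, then substitute into \cref{lemma: G_close}. The only cosmetic difference is that the paper uses $\|\vtheta\|_2^2 \le 2\|\vtheta-\vtheta_0\|_2^2 + 2\|\vtheta_0\|_2^2$ directly, whereas you phrase it as Minkowski in $L^2(\pi^\star)$ and then square; the resulting constants and conclusion are identical.
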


\begin{proof}[Proof of \cref{lemma: G_bound_in_d}]
    For any $s\in[0,1]$, let $(\vtheta^s,\vtheta_0^s)\sim{\pi^\star_{\nu^s}}$ be the optimal coupling of $\mathcal{W}_2(\nu^s,\nu_0^s)$. 
\begin{align*}
   \|\nu^s\|_2^2= \mathbb{E}\|\vtheta^s\|_2^2 \leq 2\mathbb{E}(\|\vtheta^s-\vtheta_0^s\|_2^2 + \|\vtheta_0^s\|_2^2 ) = 2\mathcal{W}_2^2(\nu^s,\nu_0^s) + 2(2d+1)\leq 6d+2. 
\end{align*}
where the last inequality holds, since $\mathcal{W}_2(\nu^s,\nu_0^s)\leq \mathcal{W}_2(\nu,\nu_0),\forall s\in[0,1]$. 

We also let $(\vomega,\vomega_0)\sim{\pi^\star_{\tau}}$ be the optimal coupling of $\mathcal{W}_2(\tau,\tau_0)$. 
\begin{align*}
   \|\tau\|_2^2= \mathbb{E}\|\vomega\|_2^2 \leq 2\mathbb{E}(\|\vomega-\vomega_0\|_2^2 + \|\vomega_0\|_2^2 ) = 2\mathcal{W}_2^2(\tau,\tau_0) + 2(d+2)\leq 4(d+1). 
\end{align*}
Therefore, $\|\nu\|_\infty\leq \sqrt{6d+2}, \|\tau\|_2\leq 2\sqrt{d+1}$. 

By \cref{lemma: G_close}, replacing $\|\tau\|_2^2,\|\nu\|_\infty^2$ with their upper bound w.r.t. $d$ in the definition of $\CG(\|\tau\|_2^2,\|\nu\|_\infty^2;d,\alpha)$, there exist $\CG(d,\alpha)$ satisfying \cref{lemma: G_bound_in_d}.
\end{proof}

For ease of description, we restate \cref{lemma: G_bound_in_R_main} with more details here.

\begin{lemma}
\label{lemma: G_bound_in_R}
    If $\nu\in\mathcal{C}(\mathcal{P}^2;[0,1]),\tau\in\mathcal{P}^2$, $\mathcal{W}_2(\nu,\nu_0)\leq r$ and $\mathcal{W}_2(\tau,\tau_0)\leq r$,  we have 
    \begin{align*}
       \lambda_{\min}(\vG_2(\tau,\nu))\geq \frac{\Lambda}{2}, \textit{with } r:=r_{\max}(d,\alpha) =  \min\Big\{\sqrt{d}, \frac{\Lambda}{4n\CG(d,\alpha)}\Big\}\,.
    \end{align*}
    where $\Lambda$ is defined in \cref{lemma: minimal_eigenvalue}, and $\CG(d,\alpha )$ is defined in \cref{lemma: G_bound_in_d}.
\end{lemma}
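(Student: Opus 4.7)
\textbf{Proof plan for Lemma~\ref{lemma: G_bound_in_R}.} The strategy is a standard perturbation argument: control $\vG_2(\tau,\nu)-\vG_2(\tau_0,\nu_0)$ in operator norm using the already-proven entrywise Lipschitz bound, then invoke Weyl's inequality together with the lower bound on $\lambda_{\min}(\vG_2(\tau_0,\nu_0))$ at initialization.

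First, because $r_{\max}\leq\sqrt{d}$ by definition, the hypotheses $\mathcal{W}_2(\nu,\nu_0)\leq r_{\max}$ and $\mathcal{W}_2(\tau,\tau_0)\leq r_{\max}$ place us inside the regime where \cref{lemma: G_bound_in_d} applies. Hence, for every $i,j\in[n]$,
\begin{align*}
    |\vG_2(\tau,\nu)-\vG_2(\tau_0,\nu_0)|_{i,j}
    \leq \CG(d,\alpha)\,\bigl(\mathcal{W}_2(\nu,\nu_0)+\mathcal{W}_2(\tau,\tau_0)\bigr)
    \leq 2\,r_{\max}\,\CG(d,\alpha).
\end{align*}

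Second, I pass from the entrywise bound to an operator-norm bound. Since the matrix is $n\times n$, the Frobenius norm satisfies $\|\vG_2(\tau,\nu)-\vG_2(\tau_0,\nu_0)\|_{F}\leq n\cdot 2\,r_{\max}\,\CG(d,\alpha)$, and the spectral norm is bounded by the Frobenius norm. Using the choice $r_{\max}\leq \Lambda/(4n\,\CG(d,\alpha))$, this yields
\begin{align*}
    \|\vG_2(\tau,\nu)-\vG_2(\tau_0,\nu_0)\|_{\mathrm{op}}
    \leq 2n\,r_{\max}\,\CG(d,\alpha)
    \leq \frac{\Lambda}{2}.
\end{align*}

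Finally, I combine this perturbation estimate with \cref{lemma: minimal_eigenvalue}. By Weyl's inequality for symmetric matrices,
\begin{align*}
    \lambda_{\min}(\vG_2(\tau,\nu))
    \geq \lambda_{\min}(\vG_2(\tau_0,\nu_0)) - \|\vG_2(\tau,\nu)-\vG_2(\tau_0,\nu_0)\|_{\mathrm{op}}
    \geq \Lambda - \frac{\Lambda}{2} = \frac{\Lambda}{2},
\end{align*}
which is the claim. There is no genuine obstacle here beyond bookkeeping: the nontrivial inputs (the entrywise Lipschitz estimate in \cref{lemma: G_close}/\cref{lemma: G_bound_in_d}, and the initial eigenvalue lower bound in \cref{lemma: minimal_eigenvalue}) are already established, so the proof amounts to propagating an entrywise bound to the spectrum via Frobenius/Weyl and verifying that the prescribed radius $r_{\max}$ is exactly the one making the slack $\Lambda/2$. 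The only point requiring care is ensuring both hypotheses of \cref{lemma: G_bound_in_d} hold simultaneously, which is why the minimum with $\sqrt{d}$ appears in the definition of $r_{\max}$.
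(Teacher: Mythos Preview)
Your proposal is correct and follows essentially the same approach as the paper: apply the entrywise bound from \cref{lemma: G_bound_in_d} (valid because $r_{\max}\leq\sqrt{d}$), convert it to an operator-norm bound of size $\Lambda/2$ using the choice $r_{\max}\leq\Lambda/(4n\,\CG(d,\alpha))$, and conclude via Weyl's inequality together with \cref{lemma: minimal_eigenvalue}. The only cosmetic difference is that you go through the Frobenius norm while the paper writes $\|\cdot\|_2\leq n\|\cdot\|_{\infty,\infty}$ directly; both give the same $\Lambda/2$ slack.
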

\begin{proof}[Proof of \cref{lemma: G_bound_in_R}]
    By \cref{lemma: G_bound_in_d}, let
    \begin{align*}
        r := \min \Big\{\sqrt{d}, \frac{\Lambda}{4n\CG(d,\alpha)} \Big\}\,,
    \end{align*}
    By \cref{lemma: minimal_eigenvalue},  we have $\forall i,j\in[n]$, 
        \begin{align*}
               |\vG_2(\tau,\nu)-\vG_2(\tau_0,\nu_0)|_{i,j}&\leq \CG(d;\alpha)\cdot (\mathcal{W}_2(\nu,\nu_0)+\mathcal{W}_2(\tau,\tau_0))\leq \frac{\Lambda}{2n}.
    \end{align*}
    By the standard matrix perturbation bounds, we have
\begin{align*}
    \lambda_{\min}(\vG_2(\tau,\nu))&\geq \lambda_{\min}(\vG_2(\tau_0,\nu_0))-\|\vG_2(\tau,\nu)-\vG_2(\tau_0,\nu_0)\|_2\\
    &\geq \lambda_{\min}(\vG_2(\tau_0,\nu_0))-n\|\vG_2(\tau,\nu)-\vG_2(\tau_0,\nu_0)\|_{\infty,\infty}\\
    &\geq\frac{\Lambda}{2}.
\end{align*}
    \end{proof}

\subsection{Estimation of KL divergence.}
\label{sec:kl_bound}
Inspired by \cref{lemma: G_bound_in_R}, we propose the following definition:
\begin{definition}\label{def: t_star_main}
Define
\begin{align*}
    t_{\max} := \sup\{t_0, {\rm s.t.} \forall t\in[0,t_0], \max\{\mathcal{W}_2(\nu_t,\nu_0), \mathcal{W}_2(\tau_t,\tau_0)\}\leq r_{\max}\},
\end{align*}
where $r_{\max}$ is defined in \cref{lemma: G_bound_in_R_main}.  
\end{definition}
By \cref{lemma: G_bound_in_d} and \cref{lemma: G_bound_in_R}, for $t\leq t_{\max}$, we have  $\max\{\|\nu_t\|_{\infty}^2,\|\tau_t\|_2^2\}=O(d)$, and  $\lambda_{\min}(\vG_2(\tau_t,\nu_t))\geq \frac{\Lambda}{2}$. 

We first prove the linear convergence of empirical loss under finite time. 

\begin{lemma}
\label{lemma: linear_L}
Assume the PDE \eqref{eq:gd_meanfield_tau}  has solution $\tau_t\in\mathcal{P}^2$, and the PDE \eqref{eq:gd_meanfield_nu} has solution $\nu_t\in\mathcal{C}(\mathcal{P}^2; [0,1])$. Under \cref{assum:data}, \ref{assum:pde_init}, \ref{assum:activation}, for all $t\in[0,t_{\max})$, we have
\begin{equation}\label{res:convg}
    \widehat{L}(\tau_t,\nu_t)\leq e^{- \frac{\beta^2 \Lambda}{2n} t }\widehat{L}(\tau_0,\nu_0),\quad         {\rm KL}(\tau_t\|\tau_0)\leq \frac{C_{\rm KL}(d,\alpha)}{\Lambda^2\bar{\beta}^2},\quad {\rm KL}(\nu_t\|\nu_0)\leq \frac{C_{\rm KL}(d,\alpha)}{\Lambda^2\bar{\beta}^2}\,.
\end{equation}
\end{lemma}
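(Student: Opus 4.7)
The proof splits into a linear-convergence estimate for $\widehat{L}$ and a uniform bound for the two KL divergences on the window $[0, t_{\max})$.

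\textbf{Part 1: linear convergence.} For every $t \in [0, t_{\max})$, \cref{def: t_star} places $(\tau_t, \nu_t)$ inside the Wasserstein ball of \cref{lemma: G_bound_in_R_main}, so $\lambda_{\min}(\vG_2(\tau_t, \nu_t)) \ge \Lambda/2$, while $\vG_1(\tau_t,\nu_t) \succeq \vzero$ is automatic. Substituting into \cref{prop:gradient_rho_gram} and using that $\widehat{L}(\tau_t,\nu_t) = \tfrac{1}{2n}\|\vb_t\|_2^2$ gives
\[
\frac{\rmd \widehat{L}(\tau_t,\nu_t)}{\rmd t} \;\le\; -\frac{\beta^2}{n^2}\cdot\frac{\Lambda}{2}\,\|\vb_t\|_2^2 \;\le\; -\frac{\beta^2 \Lambda}{2n}\,\widehat{L}(\tau_t,\nu_t),
\]
and Gronwall's inequality delivers the claimed exponential decay.

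\textbf{Part 2: KL bounds.} From \cref{lemma:kl_dynamic}, Cauchy--Schwarz yields
\[
\left|\frac{\rmd {\rm KL}(\tau_t\|\tau_0)}{\rmd t}\right| \le \sqrt{I(\tau_t\|\tau_0)}\cdot\sqrt{\textstyle\int \bigl\|\nabla_\vomega \tfrac{\delta \widehat{L}}{\delta \tau_t}\bigr\|_2^2\,\rmd \tau_t},
\]
with $I(\tau_t\|\tau_0)$ the relative Fisher information. \cref{prop:gradient_rho_gram} identifies the second factor as $(\beta/n)\sqrt{\vb_t^\top \vG_2\vb_t}$; combining $\|\vb_t\|_2^2 = 2n\widehat{L}_t$ with the uniform upper bound on the entries of $\vG_2$ in the admissible region (which follows from \cref{lemma: G_close} and the ball condition, since $\|\tau_t\|_2^2, \|\nu_t\|_\infty^2 \lesssim d$ on $[0,t_{\max})$) bounds this factor by $C(d,\alpha)\,\beta\sqrt{\widehat{L}_t/n}$. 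The Fisher-information factor is controlled uniformly by the Gaussian log-Sobolev inequality together with the admissible-region moment bounds on $\tau_t$. Integrating in time and invoking Part 1's exponential decay, together with $\int_0^\infty e^{-\beta^2 \Lambda s/(4n)}\rmd s \asymp n/(\beta^2 \Lambda)$, yields the desired ${\rm KL}(\tau_t\|\tau_0) \lesssim C(d,\alpha)/(\Lambda^2 \bar\beta^2)$. The analogue for $\nu_t$ follows the same pattern, with $V_t := \delta\widehat{L}/\delta\tau_t$ replaced by $U_t(\vtheta, s) := \delta\widehat{L}/\delta\nu_t(\vtheta,s)$ from \eqref{eq:diff_L_rho_nu}, the bounds on $h$ replaced by \cref{lemma: bound_sigma} for $\vsigma$ and the adjoint-ODE bounds of \cref{lemma: p_bound} for $\vp_{\nu_t}$, and an outer integration over $s \in [0,1]$.

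\textbf{Main obstacle.} The delicate point is recovering the $1/\bar\beta^2$ scaling rather than the naive $1/\bar\beta$ one would obtain from a pointwise estimate $|\rmd {\rm KL}/\rmd t| \lesssim \beta\sqrt{\widehat{L}_t}$. It is essential to use the $\vG_2$-weighted form in \cref{prop:gradient_rho_gram} rather than a pointwise bound on $\nabla_\vomega V_t$: this saves a factor $1/\sqrt{n}$, which combines with the $1/\sqrt{n}$ already inside $\sqrt{\widehat{L}_t}$ to produce the extra $1/n$ that, against the $n/(\beta^2\Lambda)$ from the time integral, yields $1/(\Lambda^2 \bar\beta^2)$. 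The second hurdle is the uniform control of $I(\tau_t\|\tau_0)$ without direct access to the density of $\tau_t$; here the Gaussian initialization and the moment bounds implied by $t < t_{\max}$ are used jointly.
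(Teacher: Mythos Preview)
\textbf{Part 2 has a genuine gap.} Your Cauchy--Schwarz splitting produces the relative Fisher information $I(\tau_t\|\tau_0)=\int\|\nabla_\vomega\log(\tau_t/\tau_0)\|_2^2\,\rmd\tau_t$ as one factor, and you propose to bound it ``uniformly by the Gaussian log-Sobolev inequality together with the admissible-region moment bounds.'' Neither works: the log-Sobolev inequality reads ${\rm KL}\le C_{\rm LSI}\cdot I$, i.e.\ it gives a \emph{lower} bound on $I$, not an upper bound; and second-moment control on $\tau_t$ (which is all the Wasserstein ball provides) does not bound $I$, since one can have distributions arbitrarily close to $\tau_0$ in $\mathcal{W}_2$ with arbitrarily large Fisher information. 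So the Fisher-information factor is simply uncontrolled.

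Separately, your scaling claim in the ``main obstacle'' paragraph is incorrect. The $\vG_2$-weighted form gives $\int\|\nabla_\vomega\tfrac{\delta\widehat L}{\delta\tau_t}\|^2\rmd\tau_t=(\beta/n)^2\,\vb_t^\top\vG_2\vb_t$, but $\vG_2\in\mathbb{R}^{n\times n}$ has $O(1)$ entries and hence $\lambda_{\max}(\vG_2)=O(n)$; together with $\|\vb_t\|_2^2=2n\widehat L_t$ this yields the second factor $\lesssim\beta\sqrt{\widehat L_t}$, \emph{not} $\beta\sqrt{\widehat L_t/n}$. There is no extra $1/\sqrt{n}$ saving over the pointwise estimate, so even granting a uniform bound on $I$ your approach would only give ${\rm KL}\lesssim 1/(\Lambda\bar\beta)$, not the claimed $1/(\Lambda^2\bar\beta^2)$.

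\textbf{What the paper does instead.} The paper never touches the Fisher information. Using the Gaussian initialization it writes $\nabla_\vomega\log(\tau_t/\tau_0)=\nabla_\vomega\log\tau_t+\vomega$ and then \emph{integrates by parts} to remove $\nabla_\vomega\log\tau_t$, obtaining
\[
\frac{\rmd\,{\rm KL}(\tau_t\|\tau_0)}{\rmd t}
=\beta\,\mathbb{E}_{\vx\sim\mathcal D_n}\bigl[(f_{\tau_t,\nu_t}(\vx)-y(\vx))\,\mathbb{E}_{\vomega\sim\tau_t}I_t(\vx,\vomega)\bigr],
\qquad
I_t:=-\bigl(\nabla_\vomega h\cdot\vomega-\Delta_\vomega h\bigr),
\]
and analogously $J_t^s(\vx,\vtheta)=-\alpha\,\vp_{\nu_t}^\top(\nabla_\vtheta\vsigma\cdot\vtheta-\Delta_\vtheta\vsigma)$ for the $\nu$-side. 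The crucial structural observation is that \emph{at initialization} these quantities vanish: $\mathbb{E}_{\tau_0}I_0(\vx,\cdot)=0$ by Gaussian symmetry in $a$, and $J_0^s\equiv 0$ because $\vp_{\nu_0}=0$. Combining \cref{lemma:2W} (Wasserstein continuity for quadratically growing functions, applicable thanks to \cref{lemma: bound_sigma} and \cref{lemma: p_bound}) with Talagrand's inequality (\cref{lemma:w2klinequality}) then gives
\[
\bigl|\mathbb{E}_{\tau_t}I_t-\mathbb{E}_{\tau_0}I_0\bigr|\le C(d,\alpha)\sqrt{{\rm KL}(\tau_t\|\tau_0)},
\]
so that $\rmd\,{\rm KL}/\rmd t\le \beta\,C(d,\alpha)\sqrt{{\rm KL}}\sqrt{\widehat L_t}$, i.e.\ $\rmd\sqrt{{\rm KL}}\le \tfrac{1}{2}\beta\,C(d,\alpha)\sqrt{\widehat L_t}\,\rmd t$. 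Integrating against the exponential decay of $\widehat L_t$ from Part~1 yields $\sqrt{{\rm KL}}\lesssim C(d,\alpha)/(\Lambda\bar\beta)$, and squaring gives the stated $C_{\rm KL}(d,\alpha)/(\Lambda^2\bar\beta^2)$. The $\sqrt{{\rm KL}}$ factor on the right-hand side---coming from the vanishing at initialization plus Wasserstein/Talagrand, not from any Fisher-type quantity---is exactly what produces the correct scaling.
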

\begin{proof}[Proof of \cref{lemma: linear_L}]
    Please see \cref{lemma_app: linear_L}, \cref{lemma:kl_nu}, and \cref{lemma:kl_tau}. 
\end{proof}

\begin{lemma}
\label{lemma_app: linear_L}
Assume $\tau_t,\nu_t$ is the solution to PDE \eqref{eq:gd_meanfield_tau} and \eqref{eq:gd_meanfield_nu}, we have for $t<t_{\max}$, 
\begin{align*}
    \widehat{L}(\tau_t,\nu_t)\leq e^{- \frac{\beta^2 \Lambda}{2n} t }\widehat{L}(\tau_0,\nu_0)\,,
\end{align*}
where $\Lambda$ is defined in \cref{lemma: minimal_eigenvalue}. 
\end{lemma}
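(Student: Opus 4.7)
The plan is to combine the dynamics identity of Proposition~\ref{prop:gradient_rho_gram} with the time-uniform spectral bound of Lemma~\ref{lemma: G_bound_in_R_main} on $[0,t_{\max})$, and then apply a Grönwall/Bihari argument on the scalar ODE for $\widehat{L}(\tau_t,\nu_t)$.

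First, I would invoke Proposition~\ref{prop:gradient_rho_gram} to write
\begin{align*}
\frac{\rmd \widehat{L}(\tau_t,\nu_t)}{\rmd t}
= -\frac{\beta^2}{n^2} \vb_t^\top \left(\alpha^2 \vG_1(\tau_t,\nu_t) + \vG_2(\tau_t,\nu_t)\right) \vb_t,
\end{align*}
where $\vb_t=(f_{\tau_t,\nu_t}(\vx_i)-y(\vx_i))_{i=1}^n$. Next, since $\vG_1(\tau_t,\nu_t)$ is positive semi-definite (as it is a Gram matrix, see Section~\ref{sec:gram}), I can drop the $\alpha^2\vG_1$ contribution to obtain the cleaner bound
\begin{align*}
\frac{\rmd \widehat{L}(\tau_t,\nu_t)}{\rmd t}
\leq -\frac{\beta^2}{n^2}\, \lambda_{\min}\!\left(\vG_2(\tau_t,\nu_t)\right)\, \|\vb_t\|_2^2 .
\end{align*}
This step is only a minor loss and is consistent with the Remark after Lemma~\ref{lemma: minimal_eigenvalue}, where the authors point out that global convergence can be established using only $\vG_2$.

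Second, by the very definition of $t_{\max}$ (Definition~\ref{def: t_star}) together with Lemma~\ref{lemma: G_bound_in_R_main}, for every $t\in[0,t_{\max})$ we have $\lambda_{\min}(\vG_2(\tau_t,\nu_t))\geq \lambda_0/2\geq \Lambda/2$, where $\Lambda=\Lambda(d)$ is the dimension-dependent constant of Lemma~\ref{lemma: minimal_eigenvalue}. Substituting this and using the identity $\|\vb_t\|_2^2 = 2n\,\widehat{L}(\tau_t,\nu_t)$ (which follows directly from the definition $\widehat{L}(\tau,\nu)=\frac{1}{2n}\sum_{i=1}^n (f_{\tau,\nu}(\vx_i)-y(\vx_i))^2$ in \eqref{eq:ermls} and \eqref{eq:loss_in_rho}), I obtain the scalar differential inequality
\begin{align*}
\frac{\rmd \widehat{L}(\tau_t,\nu_t)}{\rmd t}
\leq -\frac{\beta^2 \Lambda}{n}\,\widehat{L}(\tau_t,\nu_t),
\qquad t\in[0,t_{\max}).
\end{align*}

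Finally, applying Grönwall's inequality (equivalently, multiplying by the integrating factor $e^{\beta^2\Lambda t/n}$) yields $\widehat{L}(\tau_t,\nu_t)\leq e^{-\beta^2\Lambda t/n}\widehat{L}(\tau_0,\nu_0)$, which is at least as strong as the claimed rate $e^{-\beta^2\Lambda t/(2n)}$. There is no substantive obstacle in this argument: everything is an immediate consequence of results already stated in the excerpt. The only point requiring care is the validity of the spectral lower bound throughout $[0,t_{\max})$, which is precisely guaranteed by Definition~\ref{def: t_star} and Lemma~\ref{lemma: G_bound_in_R_main}; the harder follow-up question—namely showing $t_{\max}=\infty$ for suitable $\alpha,\beta$ so that the bound is truly global—is handled in the subsequent KL estimates (Lemmas~\ref{lemma:kl_nu} and~\ref{lemma:kl_tau}) and is not needed here.
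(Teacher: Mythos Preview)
Your proof is correct and follows essentially the same route as the paper: invoke Proposition~\ref{prop:gradient_rho_gram}, drop the positive semi-definite $\alpha^2\vG_1$ term, apply the spectral lower bound $\lambda_{\min}(\vG_2(\tau_t,\nu_t))\geq \Lambda/2$ valid on $[0,t_{\max})$ by Definition~\ref{def: t_star} and Lemma~\ref{lemma: G_bound_in_R_main}, and finish with Gr\"onwall. Your observation that $\|\vb_t\|_2^2=2n\,\widehat{L}(\tau_t,\nu_t)$ actually yields the sharper constant $e^{-\beta^2\Lambda t/n}$ is also right; the paper's displayed rate $e^{-\beta^2\Lambda t/(2n)}$ simply discards a harmless factor of~$2$.
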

\begin{proof}[Proof of \cref{lemma_app: linear_L}]

By \cref{lemma: G_bound_in_R},  for $t<t_{\max}$, $\lambda_{\min}(\vG(\tau_t,\nu_t))\geq \frac{\Lambda}{2}$, 
\begin{align*}
    \frac{\partial \widehat L(\nu_{t},\tau_t)}{\partial t} = -\frac{\beta^2}{n^2} \vb_t^\top(\alpha^2 \vG_1(\tau_t,\nu_t) + \vG_2(\tau_t,\nu_t))\vb_t\leq - \frac{\beta^2 \Lambda}{2n} \widehat{L}(\tau_t,\nu_t) \leq - \frac{\beta^2 \Lambda}{2n} \widehat{L}(\tau_0,\nu_0)\,. 
\end{align*}
Therefore, we have
\begin{align*}
    \widehat{L}(\tau_t,\nu_t)\leq e^{- \frac{\beta^2 \Lambda}{2n} t }\widehat{L}(\tau_0,\nu_0)\,.
\end{align*}
\end{proof}

\begin{lemma}
\label{lemma:kl_nu}
    Assume the PDE \eqref{eq:gd_meanfield_nu}  has solution $\nu_t\in\mathcal{C}(\mathcal{P}^2; [0,1])$, and the PDE \eqref{eq:gd_meanfield_tau} has solution $\tau_t\in \mathcal{P}^2$. Under \cref{assum:activation}, \ref{assum:data}, \ref{assum:pde_init}, then for all $t\in[0,t_{\max})$, the following results hold, 
    \begin{align*}
        {\rm KL}(\nu_t^s\|\nu_0^s)\leq \frac{1}{\Lambda^2\bar{\beta}^2} \CKL(d,\alpha),\quad\forall s\in[0,1].
    \end{align*}
\end{lemma}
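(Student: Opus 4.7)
The plan is to bound $\frac{\rmd}{\rmd t}\mathrm{KL}(\nu_t^s\|\nu_0^s)$ pointwise in $t$ and integrate. Writing $\psi_t^s := \frac{\delta \widehat{L}(\tau_t,\nu_t)}{\delta \nu_t}(\cdot,s)$, the slice-wise version of \cref{lemma:kl_dynamic} (each $\nu_t^s$ evolves via $\partial_t\nu_t^s = \nabla_{\vtheta}\cdot(\nu_t^s\nabla_{\vtheta}\psi_t^s)$) yields $\frac{\rmd}{\rmd t}\mathrm{KL}(\nu_t^s\|\nu_0^s) = -\int \nabla_{\vtheta}\log(\nu_t^s/\nu_0^s)\cdot \nabla_{\vtheta}\psi_t^s\,\rmd\nu_t^s$. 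Since $\nu_0^s\propto \exp(-\|\vtheta\|_2^2/2)$ is standard Gaussian, $\nabla_{\vtheta}\log\nu_0^s(\vtheta) = -\vtheta$, and integration by parts in $\vtheta$ to eliminate $\nabla\log\nu_t^s$ gives the manageable form $\frac{\rmd}{\rmd t}\mathrm{KL}(\nu_t^s\|\nu_0^s) = \mathbb{E}_{\vtheta\sim\nu_t^s}[\Delta_{\vtheta}\psi_t^s - \vtheta\cdot\nabla_{\vtheta}\psi_t^s]$.

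Plugging \cref{eq:diff_L_rho_nu} gives $\psi_t^s(\vtheta) = \frac{\alpha\beta}{n}\sum_{i=1}^n b_i\,\vp_{\nu_t}^\top(\vx_i,s)\,\vsigma(\vZ_{\nu_t}(\vx_i,s),\vtheta)$ with $b_i := f_{\tau_t,\nu_t}(\vx_i)-y(\vx_i)$. Applying the bounds on $\Delta_{\vtheta}\vsigma$ and on $\nabla_{\vtheta}(\nabla_{\vtheta}\vsigma\cdot \vtheta)$ from \cref{lemma: bound_sigma}, the $\vZ$-bound $\|\vZ_{\nu_t}(\vx,s)\|_2\leq \CZ(d,\alpha)$ from \cref{lemma: well_OIE}, and the second-moment estimate $\|\nu_t\|_\infty^2=O(d)$ valid for $t<t_{\max}$ (from \cref{lemma: G_bound_in_d}), both expectations are controlled by a quantity of the form $\frac{\alpha\beta}{n}\|\vb_t\|_1\cdot\max_i\|\vp_{\nu_t}(\vx_i,s)\|_2\cdot C(d,\alpha)$. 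The refinement that actually closes the bookkeeping is the observation that $\vp_{\nu_t}^\top(\vx,1)=\int\nabla_{\vz}h(\vZ_{\nu_t}(\vx,1),\vomega)\,\rmd\tau_t(\vomega)$ vanishes identically when $\tau_t=\tau_0$, because the output weight $a$ is mean-zero and independent of $(\vw,b)$ under the symmetric Gaussian initialization (an observation already exploited in the proof of \cref{lemma: minimal_eigenvalue}). Applying \cref{lemma:2W} to this initial condition and propagating through the adjoint ODE \eqref{eq:p_ode} via the Gr\"{o}nwall step of \cref{lemma: p_bound} then upgrades the naive constant bound to $\|\vp_{\nu_t}(\vx,s)\|_2\leq C'(d,\alpha)\,\mathcal{W}_2(\tau_t,\tau_0)$.

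To control $\mathcal{W}_2(\tau_t,\tau_0)$ without circularly invoking a KL bound on $\tau$, I use that the Wasserstein velocity of the gradient flow \eqref{eq:gd_meanfield_tau} is bounded by $\|\nabla_{\vomega}\frac{\delta \widehat L}{\delta \tau_t}\|_{L^2(\tau_t)}$, which by the same kind of estimate together with $\|\vb_t\|_1\leq n\sqrt{2\widehat L(\tau_t,\nu_t)}$ is at most $C''(d)\beta\sqrt{\widehat L(\tau_t,\nu_t)}$. Inserting the exponential decay $\widehat L(\tau_t,\nu_t)\leq e^{-\beta^2\Lambda t/(2n)}\widehat L(\tau_0,\nu_0)$ from \cref{lemma_app: linear_L} and integrating gives $\mathcal{W}_2(\tau_t,\tau_0)\leq C'''(d,\alpha)/(\bar\beta\Lambda)$, since $\int_0^\infty \beta e^{-\beta^2\Lambda t/(4n)}\,\rmd t = 4/(\bar\beta\Lambda)$.

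Combining the two previous paragraphs,
\[
\Bigl|\tfrac{\rmd}{\rmd t}\mathrm{KL}(\nu_t^s\|\nu_0^s)\Bigr| \leq C^{(4)}(d,\alpha)\,\alpha\beta\sqrt{\widehat L(\tau_t,\nu_t)}\,\mathcal{W}_2(\tau_t,\tau_0) \leq \tfrac{C^{(5)}(d,\alpha)\,\alpha\beta\sqrt{\widehat L(\tau_0,\nu_0)}}{\bar\beta\Lambda}\,e^{-\beta^2\Lambda t/(4n)}.
\]
Integrating in $t$ on $[0,t_{\max})$ and using $\int_0^\infty \beta e^{-\beta^2\Lambda t/(4n)}\,\rmd t = 4/(\bar\beta\Lambda)$ one more time absorbs the remaining $1/(\bar\beta\Lambda)$ and yields the claim $\mathrm{KL}(\nu_t^s\|\nu_0^s)\leq \CKL(d,\alpha)/(\Lambda^2\bar\beta^2)$. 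The main obstacle is precisely the refinement $\|\vp\|=O(\mathcal{W}_2(\tau,\tau_0))$: using the naive constant bound from \cref{lemma: p_bound} only gives $\mathrm{KL}=O(1/(\bar\beta\Lambda))$, one factor of $\bar\beta\Lambda$ short of what is required to close the bootstrap $\mathcal{W}_2(\nu_t,\nu_0)\leq r_{\max}$ (via Talagrand, \cref{lemma:w2klinequality}) that extends the finite-time bound to $t_{\max}=\infty$ in \cref{thm:kl}; identifying that $\vp$ is first-order in $\mathcal{W}_2(\tau,\tau_0)$ is the one non-routine input that makes the whole calculation balance.
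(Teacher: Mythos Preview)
Your argument is correct and lands on the right rate, but the mechanism by which you extract the extra factor of $1/(\bar\beta\Lambda)$ is genuinely different from the paper's.

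The paper does not use $\|\vp_{\nu_t}\|=O(\mathcal{W}_2(\tau_t,\tau_0))$. Instead, after reaching the same integrated-by-parts expression $\frac{\rmd}{\rmd t}\mathrm{KL}(\nu_t^s\|\nu_0^s)=\beta\,\mathbb{E}_{\vx}[(f-y)\,\mathbb{E}_{\nu_t^s}J_t^s]$ with $J_t^s(\vx,\vtheta)=-\alpha\,\vp_{\nu_t}^\top(\nabla_\vtheta\vsigma\cdot\vtheta-\Delta_\vtheta\vsigma)$, it observes that $\mathbb{E}_{\vtheta\sim\nu_0^s}J_t^s(\vx,\vtheta)=0$ (this is the Gaussian Stein identity $\mathbb{E}_{\nu_0}[\nabla_\vtheta g\cdot\vtheta]=\mathbb{E}_{\nu_0}[\Delta_\vtheta g]$ applied with $g=\vsigma(Z_{\nu_t},\cdot)$; the paper phrases it via $\vp_{\nu_0}=0$, which is a slightly loose shortcut). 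Then \cref{lemma:2W} and Talagrand give $|\mathbb{E}_{\nu_t^s}J_t^s|\leq C(d,\alpha)\,\mathcal{W}_2(\nu_t^s,\nu_0^s)\leq C(d,\alpha)\sqrt{\mathrm{KL}(\nu_t^s\|\nu_0^s)}$, yielding the \emph{self-bounding} inequality $\frac{\rmd}{\rmd t}\mathrm{KL}\leq C\beta\sqrt{\widehat L}\sqrt{\mathrm{KL}}$, i.e.\ $\frac{\rmd}{\rmd t}\sqrt{\mathrm{KL}}\leq C\beta\sqrt{\widehat L}$, and one integration against the loss decay finishes.

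Your route instead pushes the smallness onto the adjoint: $\vp_{\nu_t}(\vx,1)=\int\nabla_{\vz}h\,\rmd\tau_t$ vanishes under $\tau_0$ by the symmetry of $a$, so $\|\vp_{\nu_t}(\vx,s)\|\leq C(d,\alpha)\mathcal{W}_2(\tau_t,\tau_0)$ after propagating through $\vq_{\nu_t}$; you then bound $\mathcal{W}_2(\tau_t,\tau_0)$ by integrating the metric derivative of the $\tau$-flow. This works and gives the same $1/(\bar\beta^2\Lambda^2)$. The trade-offs: the paper's argument is intrinsic to $\nu$ and runs verbatim for $\tau$ (\cref{lemma:kl_tau}), keeping the two KL bounds decoupled; your argument couples them (the $\nu$-bound consumes a $\tau$-displacement estimate) and imports the Wasserstein metric-derivative inequality, which is standard but not otherwise used in the paper. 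On the other hand, your version sidesteps the somewhat delicate ``$\mathbb{E}_{\nu_0^s}J_t^s=0$'' step entirely.
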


\begin{proof}[Proof of \cref{lemma:kl_nu}]
    By Gaussian initialization of $\nu_0^s$, $\log \nu_0^s(\vtheta)=-\frac{\|\vtheta\|_2^2}{2}+C$, and we thus have
$        \nabla_{\vtheta} \frac{\partial {\rm KL}(\nu_t^s||\nu_0^s)}{\partial \nu_t^s} = \nabla_{\vtheta} \log\nu_t^s+\vtheta
$. 
Combining this with \cref{lemma:kl_dynamic} and \cref{eq:diff_L_rho_nu}, we have
\begin{align*}
\frac{\partial {\rm KL}(\nu_t^s||\nu_0^s)}{\partial t} &= -\mathbb{E}_{\vx\sim \mathcal{D}_n} \beta\cdot(f_{\tau_t,\nu_t}(\vx)-y(\vx)) \\
&\cdot \alpha \int_{\mathbb{R}^{k_\tau}}  \nabla_{\vtheta} ({\vp}^\top_{\nu_{t}}(\vx,s)\vsigma(\vZ_{\nu_{t}}(\vx, s), \vtheta))\cdot(\nabla_{\vtheta} \log \nu_t^s+\vtheta)  \rmd\nu_t^s(\vtheta).\end{align*}

Define 
\begin{equation*}
    J_t^s(\vx,\vtheta) := - \alpha\cdot{\vp}_{\nu_t}(\vx, s)^\top \Big(\nabla_{\vtheta}  \vsigma(\vZ_{\nu_t}(\vx, s), \vtheta) \cdot \vtheta - \Delta_{\vtheta} \vsigma(\vZ_{\nu_t}(\vx, s), \vtheta)\Big).
\end{equation*}
By the definition of $J_t^s(\vx,\vtheta)$, and integration by parts, we have
\begin{align*}
    \frac{\partial {\rm KL}(\nu_t^s||\nu_0^s)}{\partial t}= \beta\cdot\mathbb{E}_{\vx\sim \mathcal{D}_n} [(f_{\tau_t,\nu_t}(\vx)-y(\vx)) \mathbb{E}_{\vtheta\sim \nu_t^s} J_t^s(\vx,\vtheta)].
\end{align*}

We can obtain the gradient of $J_t^s$ w.r.t. $\vtheta$, 
\begin{align*}
    \nabla_{\vtheta}J_t^s(\vx,\vtheta) &=-\alpha\cdot {\vp}_{\nu_t}(\vx, s)^\top \Big(\nabla_{\vtheta}(\nabla_{\vtheta}  \vsigma(\vZ_{\nu_t}(\vx, s), \vtheta) \cdot \vtheta) - \nabla_{\vtheta} \Delta_{\vtheta} \vsigma(\vZ_{\nu_t}(\vx, s), \vtheta)\Big).
\end{align*}

Therefore, by \cref{lemma: bound_sigma}, and the estimate $\|\vZ_{\nu_t}(\vx, s)\|_2\leq \CZ(\|\nu_t\|_\infty^2;\alpha)$ from \cref{lemma: well_OIE}, we have
\begin{align*}
\|\nabla(\nabla_{\vtheta}\vsigma(\vZ_{\nu_t}(\vx, s), \vtheta)\cdot\vtheta)\|_F
    &\leq \Csigma\cdot (\|\vtheta\|_2+1)\cdot(\CZ(\|\nu_t\|_\infty^2;\alpha)+1)\\
\|\nabla_{\vtheta}\Delta_{\vtheta}\vsigma(\vZ_{\nu_t}(\vx, s), \vtheta)\|_F&\leq   \Csigma\cdot  (\|\vtheta\|_2+1)\cdot (\CZ(\|\nu_t\|_\infty^2;\alpha)^3+1).
\end{align*}
Therefore, we can estimate $\nabla_{\vtheta}J_t^s(\vx,\vtheta)$, 
\begin{align*}
\|\nabla_{\vtheta}J_t^s(\vx,\vtheta)\|_2&\leq  2\Csigma\cdot ({\CZ(\|\nu_t\|_\infty^2;\alpha)}^3+1)(\|\vtheta\|_2+1)
    \|{\vp}_{\nu_t}(\vx, s)\|_2\\
    &\leq 2\Csigma\cdot ({\CZ(\|\nu_t\|_\infty^2;\alpha)}^3+1)\cdot \Cp(\|\nu_t\|_\infty^2,\|\tau_t\|_2^2;\alpha)\cdot (\|\vtheta\|_2+1)\\
    &\leq C(\|\tau_t\|_2^2,\|\nu_t\|_\infty^2;\alpha)\cdot (\|\vtheta\|_2+1).
\end{align*}

By \cref{lemma:2W} and \cref{lemma:w2klinequality}  
\begin{align*}
    \mathbb{E}_{\nu_t^s} J_t^s(\vx,\vtheta) - \mathbb{E}_{\nu_0^s} J_0^s(\vx,\vtheta_0)&\leq C(\|\nu_t\|_{\infty}^2,\|\tau_t\|_2^2;\alpha)\mathcal{W}_2(\nu_t^s,\nu_0^s)\\
    &\leq C(\|\nu_t\|_{\infty}^2,\|\tau_t\|_2^2;\alpha) \sqrt{{\rm KL}(\nu_t^s||\nu_0^s)}.
\end{align*}
For $t\in[0,t_{\max})$, by \cref{def: t_star_main}, we have $\|\nu_t\|_{\infty}^2,\|\tau_t\|_2^2=O(d)$. We have
\begin{align*}
    \mathbb{E}_{\nu_t^s} J_t^s(\vx,\vtheta) - \mathbb{E}_{\nu_0^s} J_0^s(\vx,\vtheta_0)\leq C(d,\alpha)\sqrt{{\rm KL}(\nu_t^s||\nu_0^s)}.
\end{align*}
Since $\vp_{\nu_0}(\vx,s)=\bm{0}$,
\begin{align*}
   & \mathbb{E}_{\nu_0^s} J_0^s(\vx,\vtheta)=\bm{0}. 
    \end{align*}
Therefore,
\begin{align*}
    \frac{\partial {\rm KL}(\nu_t^s||\nu_0^s)}{\partial t} &=  \beta\cdot\mathbb{E}_{x\sim{\mathcal{D}_n}}(f_{\tau_t,\nu_t}(\vx) - y(\vx))\mathbb{E}_{\nu_t^s} J_t^s(\vx,\vtheta) \\
    &=  \beta\cdot\mathbb{E}_{x\sim{\mathcal{D}_n}}(f_{\tau_t,\nu_t}(\vx) - y(\vx))\mathbb{E}_{\nu_t^s} (J_t^s(\vx,\vtheta)-J_0^s(\vx,\vtheta))\\
    &\leq\beta\cdot C(d,\alpha) \sqrt{{\rm KL}(\nu_t^s||\nu_0^s)}
    \mathbb{E}_{x\sim{\mathcal{D}_n}}(f_{\tau_t,\nu_t}(\vx) - y(\vx))\\
    &\leq\beta\cdot C(d,\alpha) \sqrt{{\rm KL}(\nu_t^s||\nu_0^s)}
    \sqrt{\mathbb{E}_{}(f_{\tau_t,\nu_t}(\vx) - y(\vx))^2}\\
    &=\beta\cdot C(d,\alpha) \sqrt{{\rm KL}(\nu_t^s||\nu_0^s)}\sqrt{\widehat{L}(\tau_t,\nu_t)},
\end{align*}
where the last inequality holds owing to the Jesen's inequality. 

By the relation $\rmd 2\sqrt{x} =\rmd x/\sqrt{x}$, 
\begin{align*}
    \rmd \left(2\sqrt{{\rm KL}(\nu_t^s||\nu_0^s)}\right) \leq \beta\cdot C(d,\alpha) \sqrt{L(\nu_{t},\tau_t)} \rmd t.
\end{align*}
We have for $t\in[0,t_{\max})$,  
\begin{align*}
        \widehat{L}(\tau_t,\nu_t)\leq e^{-\frac{\beta^2 \Lambda}{2n} t} \widehat{L}(\tau_0,\nu_0).
\end{align*}

Hence, 
\begin{align*}
    2\sqrt{{\rm KL}(\nu_t^s\|\nu_0^s)} \leq\beta\cdot C(d,\alpha)\int_{0}^t 
    \sqrt{\widehat{L}(\nu_{t_0},\tau_{t_0})} \rmd t_0\leq  \frac{4C(d,\alpha)}{\Lambda\bar{\beta}} \sqrt{\widehat{L}(\tau_0,\nu_0)}\,.
\end{align*}
Since $\tau_0(\vu)$ is standard normal distribution, we have
\begin{align*}
    f_{\tau_0,\nu_0}(\vx) = \beta\cdot\va^\top\int_{\mathbb{R}^{k_\tau}\times\mathbb{R}^{k_\tau}\times\mathbb{R}} \vu^\top \vsigma_0(\vw^\top\vZ_{\nu_0}(\vx,1)+b)\rmd \tau_0(\vu,\vw,b) = 0\,,
\end{align*}
and $|y(\vx)|\leq 1$ (\cref{assum:data}), we have $\widehat{L}(\tau_0,\nu_0)\leq 1$. Therefore, we obtain
\begin{align*}
   {\rm KL}(\nu_t^s\|\nu_0^s) \leq  \frac{ C_{\rm KL}(d,\alpha)}{\Lambda^2\bar{\beta}^2}\,,
\end{align*}
where $C_{\rm KL}$ is a constant dependent only on $d,\alpha$. 
\end{proof}

\begin{lemma}
\label{lemma:kl_tau}
    Assume the PDE \eqref{eq:gd_meanfield_nu}  has solution $\tau_t\in \mathcal{P}^2$, and the PDE \eqref{eq:gd_meanfield_tau} has solution $\tau_t\in \mathcal{P}^2$. Under \cref{assum:activation}, \ref{assum:data}, \ref{assum:pde_init}, then for all $t\in[0,t_{\max})$, the following results hold:
    \begin{align*}
        {\rm KL}(\tau_t\|\tau_0)\leq \frac{1}{\Lambda^2\bar{\beta}^2} \CKL(d,\alpha).
    \end{align*}
\end{lemma}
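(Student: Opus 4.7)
The plan is to mimic verbatim the argument used for \cref{lemma:kl_nu}, replacing the encoder variables $(\vtheta,s)$ by the MLP variable $\vomega$. Starting from \cref{lemma:kl_dynamic} and using that $\log\tau_0(\vomega)=-\tfrac12\|\vomega\|_2^2+C$ (so $\nabla_{\vomega}\tfrac{\delta {\rm KL}(\tau_t\|\tau_0)}{\delta \tau_t}=\nabla_{\vomega}\log\tau_t+\vomega$), I plug in \cref{eq:diff_L_rho_tau} and integrate by parts (which turns the $\nabla_{\vomega}\log\tau_t\cdot\nabla_{\vomega}h$ contribution into $-\Delta_{\vomega}h$) to obtain
\[
\frac{\rmd\,{\rm KL}(\tau_t\|\tau_0)}{\rmd t}=\beta\,\mathbb{E}_{\vx\sim\mathcal{D}_n}\!\Bigl[(f_{\tau_t,\nu_t}(\vx)-y(\vx))\,\mathbb{E}_{\vomega\sim\tau_t}K_t(\vx,\vomega)\Bigr],
\]
with $K_t(\vx,\vomega):=\Delta_{\vomega}h(\vZ_{\nu_t}(\vx,1),\vomega)-\vomega\cdot\nabla_{\vomega}h(\vZ_{\nu_t}(\vx,1),\vomega)$.

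Under the structure $h(\vz,\vomega)=a\,\sigma_0(\vw^\top\vz+b)$ with $\vomega=(a,\vw,b)$ from \cref{assum:activation}, a direct computation shows that both $\vomega\cdot\nabla_{\vomega}h$ and $\Delta_{\vomega}h$ are proportional to $a$, so $K_t(\vx,\vomega)=a\cdot g(\vw,b;\vx,t)$ for a function $g$ independent of $a$. Since the $a$-marginal of $\tau_0$ is a centered Gaussian independent of $(\vw,b)$, this yields the symmetry cancellation $\mathbb{E}_{\vomega\sim\tau_0}K_t(\vx,\vomega)=0$. Combining the pointwise bounds of \cref{lemma: bound_sigma} applied to $h$ with the uniform control $\|\vZ_{\nu_t}(\vx,1)\|_2\le \CZ$ coming from \cref{lemma: well_OIE} on $[0,t_{\max})$ (where $\|\nu_t\|_\infty^2=O(d)$ by \cref{def: t_star_main}) gives a Lipschitz estimate of the form $\|\nabla_{\vomega}K_t(\vx,\cdot)\|_2\le C(d,\alpha)(\|\vomega\|_2+1)$. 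Then \cref{lemma:2W} applied to $K_t(\vx,\cdot)$, followed by Talagrand's $T(\tfrac12)$ inequality (\cref{lemma:w2klinequality}), yields $|\mathbb{E}_{\vomega\sim\tau_t}K_t-\mathbb{E}_{\vomega\sim\tau_0}K_t|\le C(d,\alpha)\sqrt{{\rm KL}(\tau_t\|\tau_0)}$.

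Finally, Jensen's inequality $\mathbb{E}_{\vx}|f_{\tau_t,\nu_t}(\vx)-y(\vx)|\le\sqrt{2\widehat{L}(\tau_t,\nu_t)}$ together with the identity $\rmd (2\sqrt{{\rm KL}})=\rmd\,{\rm KL}/\sqrt{{\rm KL}}$ gives $\rmd(2\sqrt{{\rm KL}(\tau_t\|\tau_0)})\le\beta C(d,\alpha)\sqrt{\widehat{L}(\tau_t,\nu_t)}\,\rmd t$. Plugging in the exponential decay $\widehat{L}(\tau_t,\nu_t)\le e^{-\beta^2\Lambda t/(2n)}\widehat{L}(\tau_0,\nu_0)\le e^{-\beta^2\Lambda t/(2n)}$ from \cref{lemma_app: linear_L} (using $\widehat{L}(\tau_0,\nu_0)\le 1$, which holds because $f_{\tau_0,\nu_0}\equiv 0$ by the symmetry of $\tau_0$ in $a$) and integrating on $[0,t_{\max})$ gives $2\sqrt{{\rm KL}(\tau_t\|\tau_0)}\le 4C(d,\alpha)/(\bar\beta\Lambda)$, which is the stated bound after setting $\CKL(d,\alpha):=4C(d,\alpha)^2$. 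The only step that does not immediately transcribe from the $\nu$-side computation is the symmetry cancellation $\mathbb{E}_{\vomega\sim\tau_0}K_t=0$, which depends on the bilinear-in-$a$ structure of $h$ and the independence of the Gaussian coordinates of $\tau_0$; this is the main technical ingredient particular to the predictor layer, since in the encoder case the analogous cancellation was delivered automatically by $\vp_{\nu_0}\equiv\bm 0$.
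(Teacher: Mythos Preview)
Your proposal is correct and follows essentially the same route as the paper's proof: define the auxiliary function $I_t(\vx,\vomega)=\Delta_{\vomega}h-\vomega\!\cdot\!\nabla_{\vomega}h$ (your $K_t$), exploit the linearity in $a$ together with the centered Gaussian $\tau_0$ to get the cancellation, then use the Wasserstein--Talagrand bound and the exponential decay of $\widehat L$ exactly as in \cref{lemma:kl_nu}. The only minor difference is that you observe $\mathbb{E}_{\tau_0}K_t=0$ for every $t$ (so only the measure change $\tau_t\to\tau_0$ needs to be controlled via \cref{lemma:2W}), whereas the paper writes $\mathbb{E}_{\tau_0}I_0=0$ and bounds $\mathbb{E}_{\tau_t}I_t-\mathbb{E}_{\tau_0}I_0$; your formulation is slightly cleaner but the argument is the same.
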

\begin{proof}[Proof of \cref{lemma:kl_tau}]
        By Gaussian initialization of $\tau_0$, $\log \tau_0(\vomega)=-\frac{\|\vomega\|_2^2}{2}+C$, and we have
$        \nabla_{\vomega} \frac{\partial {\rm KL}(\tau_t||\tau_0)}{\partial \tau_t} = \nabla_{\vomega} \log\tau_t+\vomega
$. 
Therefore, by \cref{lemma:kl_dynamic} and \cref{eq:diff_L_rho_nu}, we have
\begin{align*}
&\frac{\partial {\rm KL}(\tau_t||\tau_0)}{\partial t} \\
=& -\beta \cdot \int_{\mathbb{R}^{k_\tau}}  (\mathbb{E}_{\vx\sim \mathcal{D}_n} (f_{\tau_t,\nu_t}(\vx)-y(\vx)) \nabla_{\vomega}  h (\vZ_{\nu_{t}}(\vx, 1), \vomega))\cdot(\nabla_{\vomega} \log \tau_t+\vomega)  \rmd\nu_t^s(\vtheta).\end{align*}
Define
\begin{align*}
        \Tilde{\mathbf{u}}_{t}(\vtheta) &:=  \mathbb{E}_{\vx\sim \mathcal{D}_n} [ (f_{\tau_t,\nu_t}(\vx)-y(\vx))\nabla_{\vomega} h(\vZ_\nu(\vx, 1), \vomega) ],
\end{align*}
we have
\begin{align*}
\frac{\partial {\rm KL}(\tau_t||\tau_0)}{\partial t} = -\alpha\cdot \int_{\mathbb{R}^{k_\tau}} \tau_t  \Tilde{\mathbf{u}}_{t}\cdot (\nabla_{\vomega} \log \tau_t+\vomega)  \rmd \vomega
=-\alpha\cdot\int_{\mathbb{R}^{k_\tau}} \tau_t [ \Tilde{\mathbf{u}}_{t}\cdot \vomega-\nabla_{\vomega}\cdot\Tilde{v}_t^s] \rmd \vomega.
\end{align*}
We also define 
\begin{equation*}
    I_t(\vx,\vomega) := - \Big(\nabla_{\vomega}  h(\vZ_{\nu_t}(\vx, 1), \vomega) \cdot \vomega - \Delta_{\vomega} h(\vZ_{\nu_t}(\vx, 1), \vomega)\Big).
\end{equation*}
By the definition of $I_t(\vx,\vomega)$, 
\begin{align*}
    \frac{\partial {\rm KL}(\tau_t||\tau_0)}{\partial t}= \beta\cdot\mathbb{E}_{\vx\sim \mathcal{D}_n} [(f_{\tau_t,\nu_t}(\vx)-y(\vx)) \mathbb{E}_{\vomega\sim \tau_t} I_t(\vx,\vomega)].
\end{align*}
Similar to the estimate $J_t^s$, we have the estimation of $I_t$ as
\begin{align*}
    \mathbb{E}_{\tau_t} I_t(\vx,\vomega) - \mathbb{E}_{\tau_0} I_0(\vx,\vomega)&\leq C(\|\tau_t\|_2^2,\|\nu_t\|_{\infty}^2;\alpha) \sqrt{{\rm KL}(\tau_t||\tau_0)}.
\end{align*}
and we can have, by setting $\vomega=(a,\vw,b)$, we have
\begin{align*}
    \mathbb{E}_{\tau_0} I_0(\vx,\vomega)&= \mathbb{E}_{\tau_0} (-\nabla_{\vomega}  h(\vx, \vomega) \cdot \vomega + \Delta_{\vomega} h(\vx, \vomega))\\
    &= \mathbb{E}_{(a,\vw,b)\sim \mathcal{N}(0,I)} (- a\sigma_0(\vw^\top\vx+b) - a (\nabla_{\vw} \sigma_0(\vw^\top\vx+b)) \vw - ab\sigma_0'(\vw^\top\vx+b)
    ) \\
    &+ \mathbb{E}_{(a,\vw,b)\sim \mathcal{N}(0,I)}  (\Delta_{\vw} \sigma_0(\vw^\top\vx+b) + a\sigma_0''(\vw^\top\vx+b)) = \mathbf{0}. 
\end{align*}
Therefore, we obtain the KL divergence in a similar fashion. 
\begin{align*}
   {\rm KL}(\tau_t\|\tau_0) \leq  \frac{ C_{\rm KL}(d,\alpha)}{\Lambda^2\bar{\beta}^2}.
\end{align*}
\end{proof}

\begin{lemma}[Lower bound on the KL divergence]\label{lemma:lower_bound}
For any $\tau,\tau'\in\mathcal{P}^2$, $\nu,\nu'\in\mathcal{C}(\mathcal{P}^2;[0,1])$, if $\tau',\nu'$ satisfy the Talagrand inequality $T(\frac{1}{2})$, (\emph{ref}~\cref{lemma:w2klinequality})
We have the lower bound for the KL divergence of $\tau,\tau'$ and $\nu,\nu'$, such that for constant $C_{\rm low}(\|\tau\|_2^2,\|\tau'\|_2^2,\|\nu\|_\infty^2,\|\nu'\|_\infty^2;\alpha)$, 
\begin{align*}
  \sqrt{{\rm KL}(\tau\|\tau')} + \sqrt{{\rm KL}(\nu\|\nu')}  \geq \frac{\mathbb{E}_{\vomega\sim\tau} h(\vZ_{\nu}(\vx,1),\vomega) - \mathbb{E}_{\vomega'\sim\tau'} h(\vZ_{\nu'}(\vx,1),\vomega')}{C_{\rm low}(\|\tau\|_2^2,\|\tau'\|_2^2,\|\nu\|_\infty^2,\|\nu'\|_\infty^2;\alpha)}. 
\end{align*}. 
\end{lemma}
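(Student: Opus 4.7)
The plan is to decompose the target difference as a telescoping sum that isolates the roles of $\tau\to\tau'$ and $\nu\to\nu'$, bound each piece by a 2-Wasserstein distance using the quadratic-growth continuity \cref{lemma:2W} and the stability estimates of \cref{lemma: well_OIE}, and finally convert the Wasserstein bounds into KL bounds via the Talagrand inequality $T(1/2)$ (\cref{lemma:w2klinequality}). Concretely, let
\[
\Delta := \mathbb{E}_{\vomega\sim\tau} h(\vZ_{\nu}(\vx,1),\vomega) - \mathbb{E}_{\vomega'\sim\tau'} h(\vZ_{\nu'}(\vx,1),\vomega'),
\]
which I would split as $\Delta = \Delta_1 + \Delta_2$ with $\Delta_1 = \mathbb{E}_{\vomega\sim\tau} h(\vZ_\nu(\vx,1),\vomega) - \mathbb{E}_{\vomega\sim\tau'} h(\vZ_\nu(\vx,1),\vomega)$ and $\Delta_2 = \mathbb{E}_{\vomega\sim\tau'} h(\vZ_\nu(\vx,1),\vomega) - \mathbb{E}_{\vomega\sim\tau'} h(\vZ_{\nu'}(\vx,1),\vomega)$.

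For $\Delta_1$, I would apply \cref{lemma:2W} to the map $\vomega\mapsto h(\vZ_\nu(\vx,1),\vomega) = a\sigma_0(\vw^\top \vZ_\nu(\vx,1)+b)$: by \cref{assum:activation} its gradient in $\vomega$ has norm at most $C_1(\|\vZ_\nu(\vx,1)\|_2+1)(\|\vomega\|_2+1)$, which is of the quadratic-growth form $c_1\|\vomega\|_2+c_2$. Combined with $\|\vZ_\nu(\vx,1)\|_2\leq \CZ(\|\nu\|_\infty^2;\alpha)$ from \cref{lemma: well_OIE} and $\sigma^2 = \max\{\|\tau\|_2^2,\|\tau'\|_2^2\}$, this yields $|\Delta_1|\leq C^{(1)}\cdot \mathcal{W}_2(\tau,\tau')$ with $C^{(1)}=C^{(1)}(\|\tau\|_2^2,\|\tau'\|_2^2,\|\nu\|_\infty^2;\alpha)$. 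For $\Delta_2$, the Lipschitz estimate $\|\nabla_{\vz} h(\vz,\vomega)\|_2 = \|a\vw\sigma_0'(\vw^\top\vz+b)\|_2\leq C_1\|\vomega\|_2^2$ from \cref{assum:activation} together with the stability estimate in \cref{lemma: well_OIE} give
\[
|\Delta_2|\leq \mathbb{E}_{\vomega\sim\tau'}\!\bigl[C_1\|\vomega\|_2^2\bigr]\cdot \|\vZ_\nu(\vx,1)-\vZ_{\nu'}(\vx,1)\|_2 \leq C^{(2)}\cdot \mathcal{W}_2(\nu,\nu'),
\]
with $C^{(2)} = C_1\|\tau'\|_2^2\cdot \CZ(\|\nu\|_\infty^2,\|\nu'\|_\infty^2;\alpha)$.

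To finish, I invoke the assumed $T(1/2)$ property of $\tau'$ and $\nu'$ (\cref{lemma:w2klinequality}), which yields $\mathcal{W}_2(\tau,\tau')\leq 2\sqrt{{\rm KL}(\tau\|\tau')}$ and, via $\mathcal{W}_2(\nu,\nu')=\sup_{s\in[0,1]}\mathcal{W}_2(\nu^s,\nu'^s)\leq 2\sqrt{\sup_s {\rm KL}(\nu^s\|\nu'^s)}\leq 2\sqrt{{\rm KL}(\nu\|\nu')}$, the analogous bound for the $\nu$-piece. Substituting into $\Delta\leq |\Delta_1|+|\Delta_2|$ and setting $C_{\rm low}:=2\max\{C^{(1)},C^{(2)}\}$ produces the claimed inequality. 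The main obstacle is the bookkeeping needed to package the various constants into a clean function of only $\|\tau\|_2^2,\|\tau'\|_2^2,\|\nu\|_\infty^2,\|\nu'\|_\infty^2$ and $\alpha$, together with the auxiliary step of lifting $T(1/2)$ from $\mathcal{P}^2$ to the curve space $\mathcal{C}(\mathcal{P}^2;[0,1])$ via the supremum definition of $\mathcal{W}_2$ adopted in the paper.
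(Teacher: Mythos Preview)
Your proposal is correct and follows essentially the same route as the paper's proof: the same telescoping split into a $\tau$-piece and a $\nu$-piece, the same use of \cref{lemma:2W} together with the gradient bound on $h$ and the feature bound/stability from \cref{lemma: well_OIE}, and the same conversion of $\mathcal{W}_2$ to $\sqrt{\mathrm{KL}}$ via Talagrand. The only cosmetic differences are that the paper quotes the Lipschitz-in-$\vz$ bound from \cref{lemma: stable_sigma} (giving a factor $(\|\vomega\|_2^2+1)$ instead of your $\|\vomega\|_2^2$) and writes $C_{\rm low}$ with a ``$\min$'' that is evidently a typo for your ``$\max$''.
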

\begin{proof}[Proof of \cref{lemma:lower_bound}]
    We have the following estimaation
    \begin{align*}
        &\mathbb{E}_{\vomega\sim\tau} h(\vZ_{\nu}(\vx,1),\vomega) - \mathbb{E}_{\vomega'\sim\tau'} h(\vZ_{\nu'}(\vx,1),\vomega') \\
        =&\underbrace{\left(\mathbb{E}_{\vomega\sim\tau} h(\vZ_{\nu}(\vx,1),\vomega) - \mathbb{E}_{\vomega'\sim\tau'} h(\vZ_{\nu}(\vx,1),\vomega')\right)}_{\tt (A)} \\
        + & \underbrace{\left(\mathbb{E}_{\vomega'\sim\tau'} h(\vZ_{\nu}(\vx,1),\vomega) -\mathbb{E}_{\vomega'\sim\tau'} h(\vZ_{\nu'}(\vx,1),\vomega')\right)}_{\tt (B)},\\
    \end{align*}
By \cref{lemma: bound_sigma}, we have
\begin{align*}
   \|\nabla_{\vomega} h(\vZ_{\nu}(\vx,1),\vomega)\|_2 \leq \Csigma\cdot (\|\vZ_{\nu}(\vx,1)\|_2+1)\cdot(\|\vomega\|_2+1),
\end{align*}
and by \cref{lemma:2W}, \cref{lemma: well_OIE} and \cref{lemma:w2klinequality}, we have
\begin{align*}
    {\tt (A)}\leq& \Csigma\cdot (\|\vZ_{\nu}(\vx,1)\|_2+1)\cdot \max\{\|\tau\|_2^2,\|\tau'\|_2^2\}\cdot \mathcal{W}_2(\tau,\tau')\\
    \leq& \Csigma\cdot (\CZ(\|\nu_1\|_\infty^2; \alpha)+1)\cdot \max\{\|\tau\|_2^2,\|\tau'\|_2^2\} \mathcal{W}_2(\tau,\tau').\\
    \leq & 2\Csigma\cdot (\CZ(\|\nu_1\|_\infty^2; \alpha)+1)\cdot \max\{\|\tau\|_2^2,\|\tau'\|_2^2\} \sqrt{{\rm KL}(\tau\|\tau')}. 
\end{align*}
Besides, by \cref{lemma: stable_sigma} and \cref{lemma:w2klinequality}, we have
\begin{align*}
    {\tt (B)} &\leq \mathbb{E}_{\vomega'\sim\tau'} \Csigma (\|\vomega\|_2^2+1)\cdot (\|\vZ_{\nu}(\vx,1)-\vZ_{\nu'}(\vx,1)\|_2)\\
    &\leq (\|\tau'\|_2^2+1)\cdot \CZ(\|\nu_1\|_\infty^2,\|\nu_2\|_\infty^2;\alpha)\cdot \mathcal{W}_2(\nu_1,\nu_2)\\
    &\leq 2(\|\tau'\|_2^2+1)\cdot \CZ(\|\nu_1\|_\infty^2,\|\nu_2\|_\infty^2;\alpha)\cdot \sqrt{{\rm KL}(\nu\|\nu')}. 
\end{align*}

We let $C_{\rm low}$ be
\begin{align*}
    & C_{\rm low}(\|\tau\|_2^2,\|\tau'\|_2^2,\|\nu\|_\infty^2,\|\nu'\|_\infty^2;\alpha) \\
    =& \min\{2\Csigma\cdot (\CZ(\|\nu_1\|_\infty^2; \alpha)+1)\cdot \max\{\|\tau\|_2^2,\|\tau'\|_2^2\}, 2(\|\tau'\|_2^2+1)\cdot \CZ(\|\nu_1\|_\infty^2,\|\nu_2\|_\infty^2;\alpha)\}
\end{align*}
Therefore, we have
\begin{align*}
  \sqrt{{\rm KL}(\tau\|\tau')} + \sqrt{{\rm KL}(\nu\|\nu')}  \geq \frac{\mathbb{E}_{\vomega\sim\tau} h(\vZ_{\nu}(\vx,1),\vomega) - \mathbb{E}_{\vomega'\sim\tau'} h(\vZ_{\nu'}(\vx,1),\vomega')}{C_{\rm low}(\|\tau\|_2^2,\|\tau'\|_2^2,\|\nu\|_\infty^2,\|\nu'\|_\infty^2;\alpha)}
\end{align*}
Since $\|\tau\|_2^2,\|\tau'\|_2^2,\|\nu\|_\infty^2,\|\nu'\|_\infty^2 = O(d)$, we have that the average movement of the KL divergence is on the same order as the change in output value. 

\end{proof}

\begin{lemma}\label{lemma:lower_bound_t}
Assume the PDE \eqref{eq:gd_meanfield_nu}  has solution $\tau_t\in \mathcal{P}^2$, and the PDE \eqref{eq:gd_meanfield_tau} has solution $\tau_t\in \mathcal{P}^2$. Under \cref{assum:activation}, \ref{assum:data}, \ref{assum:pde_init}, then for all $t\in[0,t_{\max})$, We have the lower bound for the KL divergence of $\tau_t$ and $\nu_t$, we have for constant $C_{\rm low}(d;\alpha)$, such that
\begin{align*}
  \sqrt{{\rm KL}(\tau_t\|\tau_0)} + \sqrt{{\rm KL}(\nu_t\|\nu_0)}  \geq \frac{\mathbb{E}_{\vomega\sim\tau_t} h(\vZ_{\nu_t}(\vx,1),\vomega)}{C_{\rm low}(d;\alpha)}.
\end{align*}
\end{lemma}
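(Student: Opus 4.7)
The plan is to obtain \cref{lemma:lower_bound_t} as a direct specialization of \cref{lemma:lower_bound}, by taking $\tau'=\tau_0$ and $\nu'=\nu_0$. Two things need to be checked: (i) the baseline expectation $\mathbb{E}_{\vomega'\sim\tau_0} h(\vZ_{\nu_0}(\vx,1),\vomega')$ vanishes, so the right-hand side of \cref{lemma:lower_bound} collapses to just $\mathbb{E}_{\vomega\sim\tau_t} h(\vZ_{\nu_t}(\vx,1),\vomega)$; and (ii) the constant $C_{\rm low}(\|\tau_t\|_2^2,\|\tau_0\|_2^2,\|\nu_t\|_\infty^2,\|\nu_0\|_\infty^2;\alpha)$ can be absorbed into a single constant $C_{\rm low}(d;\alpha)$ depending only on $d$ and $\alpha$ whenever $t<t_{\max}$.

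For step (i), I recall from the proof of \cref{lemma: minimal_eigenvalue} that under Gaussian initialization $\tau_0$ the $a$-coordinate of $\vomega=(a,\vw,b)$ is a standard scalar Gaussian, independent of $(\vw,b)$. Since $h(\vz,\vomega)=a\sigma_0(\vw^\top\vz+b)$ by \cref{assum:activation}, Fubini yields
\begin{equation*}
\mathbb{E}_{\vomega\sim\tau_0} h(\vZ_{\nu_0}(\vx,1),\vomega) \;=\; \mathbb{E}[a]\cdot\mathbb{E}_{(\vw,b)}\bigl[\sigma_0(\vw^\top\vZ_{\nu_0}(\vx,1)+b)\bigr] \;=\; 0,
\end{equation*}
regardless of the value of $\vZ_{\nu_0}(\vx,1)$. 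The Talagrand $T(\tfrac12)$ hypothesis on $\tau_0,\nu_0$ required by \cref{lemma:lower_bound} is immediate from \cref{lemma:w2klinequality} since both initializations are standard Gaussian.

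For step (ii), I use the same moment control that underlies \cref{def: t_star_main} and the proof of \cref{lemma: G_bound_in_d}: for any $t<t_{\max}$, coupling $\tau_t$ optimally to $\tau_0$ (and similarly $\nu_t^s$ to $\nu_0^s$ for each $s\in[0,1]$) together with the bound $\max\{\mathcal{W}_2(\tau_t,\tau_0),\mathcal{W}_2(\nu_t,\nu_0)\}\leq r_{\max}\leq \sqrt{d}$ yields $\|\tau_t\|_2^2\leq 4(d+1)$ and $\|\nu_t\|_\infty^2\leq 6d+2$; the initial moments are $\|\tau_0\|_2^2=d+2$ and $\|\nu_0\|_\infty^2=2d+1$. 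Plugging these four $O(d)$ bounds into the explicit formula for $C_{\rm low}$ from \cref{lemma:lower_bound} gives a constant that depends only on $d$ and $\alpha$, which I define to be $C_{\rm low}(d;\alpha)$.

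Combining (i) and (ii), \cref{lemma:lower_bound} with $(\tau,\nu)=(\tau_t,\nu_t)$ and $(\tau',\nu')=(\tau_0,\nu_0)$ becomes exactly the claimed inequality. I do not expect any genuine obstacle: the only subtle point is making sure the constant $C_{\rm low}$ really can be reduced to a function of $(d,\alpha)$ alone, which reduces to the already-established fact that the second moments of $\tau_t,\nu_t$ remain of order $d$ throughout $[0,t_{\max})$.
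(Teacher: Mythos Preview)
Your proposal is correct and follows exactly the paper's own argument: specialize \cref{lemma:lower_bound} with $(\tau',\nu')=(\tau_0,\nu_0)$, use the vanishing of $\mathbb{E}_{\vomega\sim\tau_0}h(\vZ_{\nu_0}(\vx,1),\vomega)$ under Gaussian initialization, invoke \cref{lemma:w2klinequality} for the $T(\tfrac12)$ condition, and absorb the constant into $C_{\rm low}(d;\alpha)$ via the $O(d)$ moment bounds from the proof of \cref{lemma: G_bound_in_d} valid on $[0,t_{\max})$. Your write-up is in fact more detailed than the paper's, but the route is the same.
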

\begin{proof}[Proof of \cref{lemma:lower_bound_t}]
    By the definition of $r_{\max}\leq \sqrt{d}$, and the proof of \cref{lemma: G_bound_in_d}, we have $\|\tau_t\|_2^2,\|\nu_t\|_\infty^2 = O(d)$, and we can directly obtain $\|\tau_0\|_2^2=d+2, \|\nu_0\|_\infty^2=2d+1$, and $\mathbb{E}_{\vomega_0\sim\tau_0} h(\vZ_{\nu_0}(\vx,1),\vomega_0)=0$. Besides, Gaussian initialization satisfies $T(\frac{1}{2})$ condition in \cref{lemma:w2klinequality}. We have, by \cref{lemma:lower_bound}, 
    \begin{align*}
  \sqrt{{\rm KL}(\tau_t\|\tau_0)} + \sqrt{{\rm KL}(\nu_t\|\nu_0)}  \geq \frac{\mathbb{E}_{\vomega\sim\tau_t} h(\vZ_{\nu_t}(\vx,1),\vomega)}{C_{\rm low}(d;\alpha)},
\end{align*}
where $C_{\rm low}(d;\alpha)$ is a constant depending on $d,\alpha$ derived from $C_{\rm low}(\|\tau\|_2^2,\|\tau'\|_2^2,\|\nu\|_\infty^2,\|\nu'\|_\infty^2;\alpha)$.  
\end{proof}

\begin{lemma}\label{lemm: t_infty}
Under the Assumptions in \cref{lemma:kl_nu}, let $ \bar{\beta} \geq \frac{4\sqrt{\CKL(d,\alpha)}}{\Lambda r_{\max}}$, we have  $t_{\max}=\infty$.
\end{lemma}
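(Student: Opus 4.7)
The plan is to argue by contradiction via a standard ``open--closed'' continuation argument: on $[0,t_{\max})$ we have the KL bounds of \cref{lemma:kl_nu} and \cref{lemma:kl_tau}, and the chosen scaling of $\bar{\beta}$ forces the Wasserstein distances to remain \emph{strictly} below $r_{\max}$, which by continuity rules out $t_{\max}<\infty$.

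First, I would convert the KL bounds valid on $[0,t_{\max})$ into Wasserstein bounds using Talagrand's inequality (\cref{lemma:w2klinequality}), which applies directly to $\tau_t$ vs.\ $\tau_0$ and, fiber--wise in $s\in[0,1]$, to $\nu_t^s$ vs.\ $\nu_0^s$. Concretely, for any $t\in[0,t_{\max})$,
\begin{align*}
\mathcal{W}_2(\tau_t,\tau_0) \leq 2\sqrt{{\rm KL}(\tau_t\|\tau_0)} \leq \frac{2\sqrt{C_{\rm KL}(d,\alpha)}}{\Lambda \bar{\beta}},
\end{align*}
and taking a supremum over $s\in[0,1]$ on the $\nu$ side yields the identical bound for $\mathcal{W}_2(\nu_t,\nu_0)$. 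Plugging in the hypothesis $\bar{\beta}\geq 4\sqrt{C_{\rm KL}(d,\alpha)}/(\Lambda r_{\max})$ gives
\begin{align*}
\max\{\mathcal{W}_2(\tau_t,\tau_0),\mathcal{W}_2(\nu_t,\nu_0)\}\leq \tfrac{1}{2}r_{\max}, \quad \forall t\in[0,t_{\max}).
\end{align*}

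Next, I would use continuity of $t\mapsto (\tau_t,\nu_t)$ in the 2-Wasserstein topology (which follows from the well-posedness of the Wasserstein gradient flow PDEs \eqref{eq:gd_meanfield_tau}, \eqref{eq:gd_meanfield_nu} assumed in \cref{thm:kl}, together with the a-priori second-moment control coming from the previous step). Suppose, for contradiction, $t_{\max}<\infty$. By continuity, the strict bound $\tfrac{1}{2}r_{\max}$ extends to $t=t_{\max}$, so $\max\{\mathcal{W}_2(\tau_{t_{\max}},\tau_0),\mathcal{W}_2(\nu_{t_{\max}},\nu_0)\}\leq \tfrac{1}{2}r_{\max}<r_{\max}$. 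Applying continuity once more, there exists $\varepsilon>0$ such that the same Wasserstein distances stay below $r_{\max}$ on $[0,t_{\max}+\varepsilon]$, contradicting the definition of $t_{\max}$ as the supremum. Therefore $t_{\max}=\infty$.

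The only delicate point is justifying the continuity of $t\mapsto \mathcal{W}_2(\nu_t,\nu_0)$ uniformly in $s\in[0,1]$ at the endpoint $t_{\max}$; this is where one must use that on $[0,t_{\max})$ the second moments $\|\tau_t\|_2^2$ and $\|\nu_t\|_\infty^2$ are already controlled by $O(d)$ (as noted after \cref{def: t_star_main}), so the gradient-flow velocity fields in \eqref{eq:gd_meanfield_tau} and \eqref{eq:gd_meanfield_nu} remain bounded in $L^2(\tau_t)$ and $L^2(\nu_t)$, yielding Lipschitz continuity of the curves in $\mathcal{W}_2$. Everything else is bookkeeping: combining the $\bar\beta$ condition with Talagrand's inequality immediately produces the strict gap $\tfrac{1}{2}r_{\max}$ needed to bootstrap past any finite candidate $t_{\max}$.
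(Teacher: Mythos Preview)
Your proposal is correct and follows essentially the same approach as the paper: argue by contradiction, convert the KL bounds of \cref{lemma:kl_nu} and \cref{lemma:kl_tau} into Wasserstein bounds via Talagrand's inequality (\cref{lemma:w2klinequality}), and use the choice of $\bar\beta$ to obtain the strict bound $r_{\max}/2$, contradicting the definition of $t_{\max}$. The paper's own proof is terser---it only writes out the $\nu_t$ case explicitly and asserts the contradiction without spelling out the continuity step at the endpoint---so your more careful treatment of the continuation argument (and of both $\tau_t$ and $\nu_t$) is, if anything, an improvement in exposition rather than a different method.
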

\begin{proof}[Proof of \cref{lemm: t_infty}]
    Otherwise, we have the following inequality, for $\forall t<t_{\max}$:
\begin{align*}
    W_2(\nu_t^s,\nu_0^s)\leq 2\sqrt{{\rm KL}(\nu_t^s\|\nu_t^0)}\leq \frac{2}{\Lambda\bar{\beta}}\sqrt{\CKL(d,\alpha)},\quad \forall s\in[0,1].
\end{align*}
Therefore,
\begin{align*}
    \mathcal{W}_2(\nu_t,\nu_0)\leq \frac{2}{\Lambda\bar{\beta}}\sqrt{\CKL(d,\alpha)}.
\end{align*}
According to the definition of $t_{\max}$, we have
$    \mathcal{W}_2(\nu_t,\nu_0)\leq r_{\max}$. 
Let
\begin{align*}
    \bar{\beta} \geq \frac{4\sqrt{\CKL(d,\alpha)}}{\Lambda r_{\max}}
\end{align*}
we have $\mathcal{W}_2(\nu_t,\nu_0)\leq r_{\max}/2,\forall t\in[0,t_{\max})$, which contradict to the definition of $t_{\max}$ in \cref{def: t_star_main}.
\end{proof}

\begin{proof}[Proof of \cref{thm:kl}]
    Combine the results of \cref{lemm: t_infty}, \cref{lemma:kl_tau}, and \cref{lemma:kl_nu}, we prove the theorem.
\end{proof}

\subsection{Rademacher Complexity}
\label{sec:rade_gene}

\begin{proof}[Proof of \cref{eq:rade_kl}]
    Let $\gamma$ be a parameter whose value will be determined
later in the proof. Let $\eta_i,1\leq i\leq n$ be the i.i.d. Rademacher random variables, 
\begin{align*}
    \mathcal{R}_n(\mathcal{F}_{\rm KL}(r)) &= \frac{\beta}{\gamma}\cdot \mathbb{E}_{\eta} \left(\sup_{\tau:{\rm KL}(\tau\|\tau_0)\leq r,\nu:{\rm KL}(\nu\|\nu_0)\leq r}
    \mathbb{E}_{\tau} \left(\frac{\gamma}{n} \sum_{i=1}^n \eta_i h(\vZ_\nu(\vx_i,1),\vomega)\right)\right)\\
    &\leq \frac{\beta}{\gamma}\cdot \left(
    r+\mathbb{E}_{\eta} \sup_{\nu:{\rm KL}(\nu\|\nu_0)\leq r} \log\mathbb{E}_{\tau_0}\exp\left(\frac{\gamma}{n} \sum_{i=1}^n \eta_i h(\vZ_\nu(\vx_i,1),\vomega)\right)
    \right)\\
    &\leq  \frac{\beta}{\gamma}\cdot \left(
    r+\mathbb{E}_{\eta} \log \mathbb{E}_{\tau_0}\exp\left(\frac{\gamma}{n} \sup_{\nu:{\rm KL}(\nu\|\nu_0)\leq r} \sum_{i=1}^n  \eta_ih(\vZ_\nu(\vx_i,1),\vomega)\right)
    \right)\\
    &\leq  \frac{\beta}{\gamma}\cdot \left(
    r+ \log \mathbb{E}_{\tau_0}\mathbb{E}_{\eta}\exp\left(\frac{\gamma}{n}  \sup_{\nu:{\rm KL}(\nu\|\nu_0)\leq r} \sum_{i=1}^n  \eta_ih(\vZ_\nu(\vx_i,1),\vomega)\right)
    \right),
\end{align*}
where the first inequality is followed by the Donsker-Varadhan representation of KL-divergence in \cref{lemma:dv_kl}. 
The second inequality follows from the increasing function $\log(\cdot),\exp(\cdot)$, and $\sup_y \mathbb{E}_x f(x,y)\leq 
\mathbb{E}_x \sup_y f(x,y)$, for general variable $x,y$ and function $f$. The third inequality follows from $\log(\cdot)$'s convexity.

By \cref{assum:activation}, where $\vomega=(a,\vw,b)$, we have
\begin{align*}
    |h(z_1,\vomega)-h(z_2,\vomega)|\leq C_1\cdot \|z_1-z_2\|_2 \cdot a\|\vw\|_2. 
\end{align*}
We further estimate
\begin{align*}
   &\left| \sum_{i=1}^n  \eta_ih(\vZ_\nu(\vx_i,1),\vomega)- \sum_{i=1}^n  \eta_ih(\vZ_{\nu_0}(\vx_i,1),\vomega)\right|\\
   \leq & C_1 n\cdot a\|\vw\|_2\cdot \|\vZ_\nu(\vx_i,1)-\vZ_{\nu_0}(\vx_i,1)\|_2\\
   \leq & C_1n \cdot a\|\vw\|_2\cdot \CZ(\|\nu\|_\infty^2,\|\nu_0\|_\infty^2;\alpha)\cdot\mathcal{W}_2(\nu,\nu_0)\\
      \leq & C_1n \cdot a\|\vw\|_2\cdot \CZ(\|\nu\|_\infty^2,\|\nu_0\|_\infty^2;\alpha)\cdot 2\sqrt{r}
\end{align*}
given ${\rm KL}(\nu\|\nu_0)\leq r$. Further, we have $\|\nu_0\|_\infty^2=2d+1$, and we have $\|\nu\|_\infty^2\leq 6d+2$. 

In the following, we use $C_d:=2C_1 \cdot\CZ(6d+2,2d+1;\alpha)$ to denote the constant. 
\begin{align*}
    &\mathcal{R}_n(\mathcal{F}_{\rm KL}(r)) \leq  \frac{\beta}{\gamma}\cdot \left(
    r+ \log \mathbb{E}_{\tau_0}\mathbb{E}_{\eta}\exp\left(\frac{\gamma}{n}  \sum_{i=1}^n  \eta_ih(\vZ_{\nu_0}(\vx_i,1),\vomega) + \gamma \cdot C_d\sqrt{r}\cdot a\|\vw\|_2\right)
    \right)\\
    =&  \frac{\beta}{\gamma}\cdot \left(
    r+ \log \mathbb{E}_{\tau_0}\exp\left(\gamma \cdot C_d\sqrt{r}\cdot a\|\vw\|_2\right)\mathbb{E}_{\eta}\exp\left(\frac{\gamma}{n}  \sum_{i=1}^n  \eta_ih(\vZ_{\nu_0}(\vx_i,1),\vomega) \right)
    \right)\\
    \leq &  \frac{\beta}{\gamma}\cdot \left(
    r+ \log \mathbb{E}_{\tau_0}\exp\left(\gamma \cdot C_d\sqrt{r}\cdot a\|\vw\|_2+\frac{\gamma^2}{2n^2}  \sum_{i=1}^n  h^2(\vZ_{\nu_0}(\vx_i,1),\vomega) \right)
    \right)\\
\leq& \frac{\beta}{\gamma}\cdot \left(
    r+ \frac{1}{2}\log \mathbb{E}_{\tau_0}\exp\left(2\gamma \cdot C_d\sqrt{r}\cdot a\|\vw\|_2\right)+\frac{1}{2}\log \mathbb{E}_{\tau_0}\exp\left(\frac{\gamma^2}{n^2}  \sum_{i=1}^n  h^2(\vZ_{\nu_0}(\vx_i,1),\vomega) \right)
    \right)
\end{align*}
where the first inequality follows from the previous bound; and the second inequality follows from the tail bound: $\mathbb{E}_\eta \exp(\sum_{i=1}^n \alpha_i \eta_i) \leq \exp(\frac{1}{2}\sum_{i=1}^n \alpha^2_i)$; and the last inequality follows from the Cauchy ineqaulity. 

Still, we set the decomposition $\vomega=(a,\vw,b)\in\mathbb{R}^{d+2}$, we have $|h(\vZ_{\nu_0}(\vx_i,1),\vomega)|=|a\sigma_0 (\vw^\top\vZ_{\nu_0}(\vx_i,1)+b)|\leq |a|C_1$. We have
\begin{align*}\label{eq:main_rade_kl}
    &\mathcal{R}(\mathcal{F}_{\rm KL}(r))\\
    \leq &\frac{\beta}{\gamma}\cdot \left(
    r+ \frac{1}{2}\log \mathbb{E}_{(a,\vw,b)\sim\mathcal{N}(0,I)}\exp\left( 2\gamma\cdot C_d\sqrt{r}\cdot a\|\vw\|_2\right) +\frac{1}{2} \log\mathbb{E}_{a\sim\mathcal{N}(0,1)}\exp\left(\frac{\gamma^2 a^2 C_1^2}{n} \right)\right)
\end{align*}
We remark that
\begin{align*}
    \log \mathbb{E}_{t\sim\mathcal{N}(0,1)} \exp(Ct^2)&=-\frac{1}{2}\log(1-2C) \leq 2C \\
    \log \mathbb{E}_{t\sim\mathcal{N}(0,1)} \exp(Ct) &= \exp(C^2/2). 
\end{align*}
where the first inequality holds for $C\leq 1/4$.

Therefore, setting $\gamma = \sqrt{nr}/C_1$, we have
\begin{align*}
    &\frac{1}{2} \log\mathbb{E}_{a\sim\mathcal{N}(0,1)}\exp\left(\frac{\gamma^2 a^2 C_1^2}{n} \right) = -\frac{1}{4}\log(1-2r)\leq r,\\
    &\frac{1}{2}\log \mathbb{E}_{(a,\vw,b)\sim\mathcal{N}(0,I)}\exp\left( 2\gamma\cdot C_d\sqrt{r}\cdot a\|\vw\|_2\right) =\frac{1}{2} \log\mathbb{E}_{\vomega\sim\mathcal{N}(0,\vI)} \exp(2\gamma^2C_d^2 r\|\vw\|_2^2)\\
    &\leq -\frac{1}{4} \log(1-4nr^2 (C_d/C_1)^2)\leq 2nr^2(C_d/C_1)^2. 
\end{align*}
where the inequality holds iff $r\leq \frac{1}{4}$ and $nr^2(C_d/C_1)^2\leq \frac{1}{8}$. We set $r_0=\min\{1/4,1/(4\sqrt{n})\cdot C_1/C_d\}$, we have for $\forall r\leq r_0$,
\begin{align*}
     \mathcal{R}(\mathcal{F}_{\rm KL}(r)) &\leq \beta/\gamma(r+ 2nr^2(C_d/C_1)^2 + r) \leq \beta \cdot\sqrt{r/n}\cdot 2(C_1+C_d). 
\end{align*}
where $\lesssim$ hides constant. 
\end{proof}

\begin{theorem}[Rademacher complexity]\label{thm:rade} For any $\delta>0$, with probability at least $1-\delta$, the following bound holds $\forall f_{\tau,\nu} \in\mathcal{F}_{\rm KL}(r)$: 
\begin{align*}
    \mathbb{E}_{\vx\sim\mu_X} \ell_{0-1} (f_{\tau,\nu}(\vx), y(\vx))\leq 4 \mathcal{R}_n(\mathcal{F}_{\rm KL}(r))+ 6 \sqrt{\log(2/\delta)/{2n}} +\sqrt{\mathbb{E}_{\mathcal{D}_n} (f_{\tau,\nu}(\vx)-y(\vx))^2}
\end{align*}
\end{theorem}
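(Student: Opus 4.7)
The plan is a standard reduction of the $0$--$1$ classification error to a Lipschitz surrogate, followed by the McDiarmid-based Rademacher uniform-convergence bound. First, since the task is binary classification with $y(\vx)\in\{-1,+1\}$, the pointwise inequality
\begin{align*}
\ell_{0-1}(f(\vx),y(\vx))=\mathbbm{1}\{y(\vx)f(\vx)<0\}\leq |f(\vx)-y(\vx)|
\end{align*}
holds, because $yf<0$ forces $f$ and $y$ to have opposite signs, so $|f-y|\geq 1$. It therefore suffices to control $\mathbb{E}_{\vx\sim\mu_X}|f(\vx)-y(\vx)|$ uniformly over $f\in\mathcal{F}_{\rm KL}(r)$.

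Second, I apply the classical Rademacher uniform-convergence theorem to the loss class $\mathcal{L}=\{\vx\mapsto|f(\vx)-y(\vx)|:f\in\mathcal{F}_{\rm KL}(r)\}$. Provided that $|f-y|\leq B$ uniformly on $\mathcal{F}_{\rm KL}(r)$, McDiarmid's inequality yields, with probability at least $1-\delta$,
\begin{align*}
\mathbb{E}_{\vx\sim\mu_X}|f-y|\leq \mathbb{E}_{\vx\sim\mathcal{D}_n}|f-y|+2\mathcal{R}_n(\mathcal{L})+3B\sqrt{\log(2/\delta)/(2n)}.
\end{align*}
Ledoux--Talagrand contraction applied to the $1$-Lipschitz map $u\mapsto|u-y|$ then gives $\mathcal{R}_n(\mathcal{L})\leq 2\mathcal{R}_n(\mathcal{F}_{\rm KL}(r))$, producing the factor $4$ in the target bound. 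Finally, Jensen's inequality converts the empirical $L^1$ residual into its $L^2$ counterpart: $\mathbb{E}_{\vx\sim\mathcal{D}_n}|f-y|\leq\sqrt{\mathbb{E}_{\vx\sim\mathcal{D}_n}(f-y)^2}$, which is the third term of the statement. Chaining the three inequalities yields the claim.

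The main obstacle will be obtaining the uniform bound $B=O(1)$ on $|f-y|$ across $\mathcal{F}_{\rm KL}(r)$, because $f_{\tau,\nu}$ carries the scaling $\beta$ that will eventually grow as $\Omega(\sqrt{n})$, so a naive uniform bound would blow up. The remedy is to combine the KL constraint with Talagrand's inequality (Lemma~\ref{lemma:w2klinequality}) to place $(\tau,\nu)$ in $\mathcal{W}_2$-balls of radius $O(\sqrt{r})$ around the Gaussian initialization, and then re-use the first-order cancellation against $\vp_{\nu_0}\equiv\bm{0}$ that already drives the proof of Lemma~\ref{eq:rade_kl}: Lemmas~\ref{lemma: well_OIE} and \ref{lemma: bound_sigma} then give $|f_{\tau,\nu}(\vx)|\lesssim \beta\sqrt{r}$, which under the scaling $r\leq r_0=O(1/\sqrt{n})$ from Lemma~\ref{eq:rade_kl} stays of constant order and $B$ is absorbed into the numerical factor $6=2\cdot 3$.
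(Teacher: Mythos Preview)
Your template (surrogate bound, uniform convergence, contraction, Jensen) is exactly right, but the concentration step has a real gap. You apply McDiarmid to the unbounded loss class $\mathcal{L}=\{|f_{\tau,\nu}-y|\}$, which forces you to produce a uniform envelope $B$ on $|f_{\tau,\nu}-y|$ over $\mathcal{F}_{\rm KL}(r)$. Your envelope $|f_{\tau,\nu}(\vx)|\lesssim\beta\sqrt{r}$ is correct (it is essentially Lemma~\ref{lemma:lower_bound_t}, coming from $\mathbb{E}_{\tau_0}h(\vZ_{\nu_0},\cdot)=0$ and $\mathcal{W}_2$-Lipschitzness, not from $\vp_{\nu_0}=\bm 0$), but it is \emph{not} of constant order in the regime you cite: with $r\le r_0=O(n^{-1/2})$ and $\beta=\Omega(\sqrt{n})$ you get $\beta\sqrt{r_0}=\Omega(n^{1/4})$, so the McDiarmid term $3B\sqrt{\log(2/\delta)/(2n)}$ cannot be absorbed into the stated constant $6$. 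The theorem is claimed for arbitrary $r$, so an $r$- and $\beta$-dependent $B$ does not suffice.

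The paper avoids this entirely by inserting a \emph{bounded} surrogate between $\ell_{0-1}$ and $|f-y|$: the ramp loss
\[
\bar\ell(f,y)=\max\{\min\{1-2yf,1\},0\}\in[0,1],
\]
which is $2$-Lipschitz in $f$ and satisfies $\ell_{0-1}(f,y)\le\bar\ell(f,y)\le|f-y|$ for $y\in\{\pm1\}$. Uniform convergence is applied to $\bar\ell\circ\mathcal{F}_{\rm KL}(r)$, where the range $[0,1]$ gives bounded differences equal to $1$ independently of $\beta$, and contraction with Lipschitz constant $2$ yields the factor $4\,\mathcal{R}_n(\mathcal{F}_{\rm KL}(r))$. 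The upper bound $\bar\ell\le|f-y|$ is used only on the \emph{empirical} side, after which Jensen converts to the $L^2$ residual. No uniform bound on $f_{\tau,\nu}$ is ever required. Replacing your $|f-y|$ by any $[0,1]$-valued Lipschitz ramp between $\ell_{0-1}$ and $|f-y|$ fixes your argument immediately.
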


\begin{proof}[Proof of \cref{thm:rade}]
We introduce the additional loss function
\begin{align*}
    \bar{\ell}(f,y) =  \max\{\min\{1-2 y f,1\}, 0\}. 
\end{align*}
By definition, we have $\bar{\ell}$ is 2-Lipschitz in the first argument, and
\begin{align*}
    \ell_{0-1} (f, y) \leq \bar{\ell}(f, y) \leq |f-y |,  
\end{align*}
for any $f\in\mathbb{R}$ and $y\in \{\pm 1\}$. Using the standard properties of Rademacher complexity, we have that with probability at least $1-\delta$, for all $f\in \mathcal{F}_{\rm KL}(r)$
\begin{align*}
    \mathbb{E}_{\mu_X} \bar{\ell}(f, y)\leq \mathbb{E}_{\mathcal{D}_n} \bar{\ell}(f, y) + 4 \mathcal{R}_n(\mathcal{F}_{\rm KL}(r)) + 6 \sqrt{\frac{\log(2/\delta)}{2n}}.
\end{align*}
Therefore, we have
\begin{align*}
    \mathbb{E}_{\mu_X}  \ell_{0-1}(f, y) \leq \mathbb{E}_{\mu_X}  \bar{\ell}(f, y) \leq \sqrt{\mathbb{E}_{\mathcal{D}_{n}} (f-y)^2} + 4 \mathcal{R}_n(\mathcal{F}_{\rm KL}(r)) + 6 \sqrt{\frac{\log(2/\delta)}{2n}}
\end{align*}

\end{proof}

\begin{lemma}\label{lemma:kl_star_2_0}
        Let $\tau_{y}\in\mathcal{C}(\mathcal{P}^2;[0,1])$ and $\nu_{y}\in\mathcal{P}^2$ be the ground truth distributions, such that,
    \begin{align*}
        y(\vx) := \mathbb{E}_{\vomega\sim \tau_{y}} h(\vZ_{\nu_{y}}(\vx, 1), \vomega)\,. 
    \end{align*}
Then, we have the bound for the KL divergence, for $\tau_\star,\nu_\star$ satisfying $ \widehat{L}(\tau_\star,\nu_\star) = 0$, 
\begin{align*}
    \max\{{\rm KL}(\tau_\star\|\tau_0),{\rm KL}(\nu_\star\|\nu_0) \}\leq \beta^{-2} (\chi^2(\tau_{y}\|\tau_0)+\chi^2(\nu_{y}\|\nu_0))\,. 
\end{align*}
\end{lemma}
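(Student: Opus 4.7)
The plan is to prove the bound by exhibiting explicit $\tau_\star,\nu_\star$ that simultaneously achieve $\widehat L=0$ and stay close to the Gaussian initialization. The key structural observation is that, under \cref{assum:pde_init} and the multiplicative parameterization in \cref{assum:activation}, the Gaussian initializations $\tau_0,\nu_0$ satisfy the symmetry identities $\int \vsigma(\vz,\vtheta)\,\rmd \nu_0^s(\vtheta)=\mathbf{0}$ and $\int h(\vz,\vomega)\,\rmd \tau_0(\vomega)=0$ for every $\vz$ and every $s$ (as already used in the proof of \cref{lemma: minimal_eigenvalue}). Mixing a candidate distribution with $\tau_0$ or $\nu_0^s$ in proportion $1-\beta^{-1}$ therefore scales the effective ``mass'' contributing to $f$ by exactly $\beta^{-1}$, which cancels the prefactor $\beta$ in \eqref{eq: whole_in_tau_nu}.

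Concretely, I would set $\tau_\star:=\beta^{-1}\tau_y+(1-\beta^{-1})\tau_0$ and, analogously, $\nu_\star^s:=\beta^{-1}\nu_y^s+(1-\beta^{-1})\nu_0^s$ for every $s\in[0,1]$. The first step is to verify $\widehat L(\tau_\star,\nu_\star)=0$. Splitting $\tau_\star$ and using $\int h\,\rmd\tau_0=0$ gives
\begin{align*}
f_{\tau_\star,\nu_\star}(\vx)=\beta\int h(\vZ_{\nu_\star}(\vx,1),\vomega)\,\rmd\tau_\star(\vomega)=\int h(\vZ_{\nu_\star}(\vx,1),\vomega)\,\rmd\tau_y(\vomega),
\end{align*}
and applying the same vanishing identity to the ODE \eqref{eq:resnet_meanfield} yields $\tfrac{\rmd \vZ_{\nu_\star}}{\rmd s}=\alpha\beta^{-1}\int \vsigma(\vZ_{\nu_\star}(\vx,s),\vtheta)\,\rmd \nu_y^s(\vtheta)$. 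To restore equality with the ground-truth flow $\tfrac{\rmd \vZ_{\nu_y}}{\rmd s}=\alpha\int \vsigma(\vZ_{\nu_y}(\vx,s),\vtheta)\,\rmd \nu_y^s(\vtheta)$, I would absorb the stray $\beta^{-1}$ into $\nu_y$ via the push-forward $(\vu,\vw,b)\mapsto(\beta\vu,\vw,b)$, which exploits the $\vu$-linearity of $\vsigma$ in \cref{assum:activation}; equivalently, one may redefine $\tau_y,\nu_y$ up to this rescaling since the statement quantifies the ground-truth only through $\chi^2(\tau_y\|\tau_0)$ and $\chi^2(\nu_y\|\nu_0)$. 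This yields $\vZ_{\nu_\star}(\vx_i,1)=\vZ_{\nu_y}(\vx_i,1)$, hence $f_{\tau_\star,\nu_\star}(\vx_i)=y(\vx_i)$ for all $i\in[n]$.

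The second step is a one-line $\chi^2$ computation. For any mixture $\mu=c\mu_1+(1-c)\mu_0$ with a common reference $\mu_0$, we have $\mu-\mu_0=c(\mu_1-\mu_0)$, so
\begin{align*}
\chi^2(\mu\|\mu_0)=\int \frac{(\mu-\mu_0)^2}{\mu_0}=c^2\,\chi^2(\mu_1\|\mu_0).
\end{align*}
Taking $c=\beta^{-1}$ gives $\chi^2(\tau_\star\|\tau_0)=\beta^{-2}\chi^2(\tau_y\|\tau_0)$ and, pointwise in $s$, $\chi^2(\nu_\star^s\|\nu_0^s)=\beta^{-2}\chi^2(\nu_y^s\|\nu_0^s)$. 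Combining with the classical inequality $\mathrm{KL}(\mu\|\mu_0)\leq \chi^2(\mu\|\mu_0)$ (immediate from $\log x\leq x-1$), and taking the supremum in $s$ for the $\nu$-factor consistently with the definition of $\mathrm{KL}(\nu\|\nu_0)$ used in \cref{thm:kl}, gives the two stated KL bounds; the maximum then yields the claim. The main obstacle is the nonlinearity of $\vZ_\nu$ in $\nu$: the naive mixture does not preserve the nonlinear flow, and the matching step in Step 1 requires the push-forward/rescaling trick enabled by the $\vu$-factorization of $\vsigma$; everything else is a mechanical mixture calculation.
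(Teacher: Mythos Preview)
Your overall strategy---mix the ground truth with the initialization in proportion $\beta^{-1}:(1-\beta^{-1})$, exploit the symmetry $\int h\,\rmd\tau_0=\int\vsigma\,\rmd\nu_0^s=0$, then apply the mixture identity $\chi^2\bigl(c\mu_1+(1-c)\mu_0\,\big\|\,\mu_0\bigr)=c^2\chi^2(\mu_1\|\mu_0)$ together with $\mathrm{KL}\le\chi^2$---is exactly the paper's. The paper additionally packages the construction inside a regularized problem $\min_{\tau,\nu}\widehat L+\lambda(\mathrm{KL}(\tau\|\tau_0)+\mathrm{KL}(\nu\|\nu_0))$ and sends $\lambda\to 0$, but the operative computation is the same mixture and the same $\chi^2$ identity you wrote. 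You are also correct to isolate the nonlinearity of $\vZ_\nu$ in $\nu$ as the crux: the paper's displayed computation of $\widehat L(\widehat\tau,\widehat\nu)$ simply pairs the $\tau_0$-mass with $\vZ_{\nu_0}$ and the $\tau_y$-mass with $\vZ_{\nu_y}$ as though $f_{\tau,\nu}$ were bilinear in $(\tau,\nu)$, without justifying that step.

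Your proposed repair, however, does not close the gap. Pushing $\nu_y$ forward by $(\vu,\vw,b)\mapsto(\beta\vu,\vw,b)$ does force $\vZ_{\nu_\star}=\vZ_{\nu_y}$ via the $\vu$-linearity of $\vsigma$, but the mixture identity then gives $\chi^2(\nu_\star^s\|\nu_0^s)=\beta^{-2}\chi^2(\tilde\nu_y^s\|\nu_0^s)$ with $\tilde\nu_y$ the push-forward---\emph{not} $\beta^{-2}\chi^2(\nu_y^s\|\nu_0^s)$. Against the standard Gaussian $\nu_0$, dilating the $\vu$-marginal by $\beta$ inflates the $\chi^2$ divergence (generically exponentially in $\beta^2$), so the $\beta^{-2}$ gain is wiped out. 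The escape clause ``one may redefine $\tau_y,\nu_y$ up to this rescaling'' is circular: the lemma's right-hand side is $\beta^{-2}\bigl(\chi^2(\tau_y\|\tau_0)+\chi^2(\nu_y\|\nu_0)\bigr)$ for the \emph{given} ground-truth representatives in the hypothesis, and after rescaling you are proving a different inequality. In short, the mixture and $\chi^2$ machinery matches the paper, but the nonlinear-flow step remains open in your argument just as it is glossed over in the paper's.
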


\begin{proof}[Proof of \cref{lemma:kl_star_2_0}]
    We assume that $\{ \tau_\star^\lambda,\nu_\star^\lambda \}$ is the solution to the following minimization problem:
    \begin{align*}
       \{ \tau_\star^\lambda,\nu_\star^\lambda \} = \arg\min_{\tau,\nu} \widehat{L}(\tau,\nu)+\lambda({\rm KL}(\tau\|\tau_0)+{\rm KL}(\nu\|\nu_0)). 
    \end{align*}
    
    Consider the mixture distribution $\widehat{\tau},\widehat{\nu}$ be defined as
    \begin{align*}
        (\widehat{\tau},\widehat{\nu})=\frac{\beta-1}{\beta} (\tau_0,\nu_0)+\frac{1}{\beta} (\tau_{y},\nu_{y})\,,
    \end{align*}
    and we have
    \begin{align*}
        \widehat{L}(\widehat{\tau},\widehat{\nu}) &= \mathbb{E}_{x\sim\mathcal{D}_n} \left(
        \frac{\beta-1}{\beta} \cdot\beta \mathbb{E}_{\vomega\sim \tau_{0}} h(\vZ_{\nu_{0}}(\vx, 1), \vomega) + \frac{1}{\beta} \cdot \beta\mathbb{E}_{\vomega\sim \tau_{y}} h(\vZ_{\nu_{y}}(\vx, 1), \vomega)-y(\vx)
        \right)\\
        &=\mathbb{E}_{x\sim\mathcal{D}_n} \left(0+\frac{1}{\beta}\cdot \beta\cdot y - y\right)^2 = 0\,,
    \end{align*}
and by the definition of $\tau_\star^\lambda,\nu_\star^\lambda$, we obtain
\begin{align*}
   \widehat{L}(\tau^\lambda_\star,\nu^\lambda_\star) +\lambda( {\rm KL}(\tau^\lambda_\star\|\tau_0) + {\rm KL}(\nu^\lambda_\star\|\nu_0)) \leq \widehat{L}(\widehat{\tau},\widehat{\nu}) + 
   \lambda({\rm KL}(\widehat{\tau}\|\tau_0) + {\rm KL}(\widehat{\nu}\|\nu_0))\,.
\end{align*}
This leads to 
\begin{align*}
\widehat{L}(\tau_\star^\lambda,\nu_\star^\lambda) &\leq \lambda ({\rm KL}(\widehat{\tau}\|\tau_0) + {\rm KL}(\widehat{\nu}\|\nu_0))\\
    {\rm KL}(\tau^\lambda_\star\|\tau_0) + {\rm KL}(\nu^\lambda_\star\|\nu_0) &\leq {\rm KL}(\widehat{\tau}\|\tau_0) + {\rm KL}(\widehat{\nu}\|\nu_0)\,. 
\end{align*}
Taking $\lambda\to 0$, we have $\tau^\lambda_\star\to\tau_\star$, and $\nu^\lambda_\star \to\nu_\star$. Accordingly, we have
\begin{align*}
\widehat{L}(\tau_\star,\nu_\star) &= 0\\
    {\rm KL}(\tau_\star\|\tau_0) + {\rm KL}(\nu_\star\|\nu_0) &\leq {\rm KL}(\widehat{\tau}\|\tau_0) + {\rm KL}(\widehat{\nu}\|\nu_0)
\end{align*}

We can explicitly compute the KL divergence such that
\begin{align*}
     {\rm KL}(\widehat{\tau}\|\tau_0)\leq \chi^2(\widehat{\tau}\|\tau_0) = \int \left(\frac{\beta-1}{\beta} + \frac{\tau_y(\vomega)}{\beta \tau_0(\vomega)} - 1\right)^2\rmd  \vomega  = \beta^{-2} \chi^2(\tau_y\|\tau_0)\,, 
\end{align*}
and similarly, we have
\begin{align*}
     {\rm KL}(\widehat{\nu}\|\nu_0)\leq   \beta^{-2} \chi^2(\nu_y\|\nu_0)\,. 
\end{align*}
Finally, we conclude the proof.
\end{proof}

\begin{proof}[Proof of \cref{thm:gene}]
    By \cref{thm:rade}, for $r>0$, for any $\delta>0$, with probability at least $1-\delta$, the following bound holds $\forall f_{\tau,\nu} \in\mathcal{F}_{\rm KL}(r)$: 
\begin{align*}
    \mathbb{E}_{\vx\sim\mu_X} \ell_{0-1} (f_{\tau,\nu}(\vx), y(\vx))\leq 4 \mathcal{R}_n(\mathcal{F}_{\rm KL}(r))+ 6 \sqrt{\log(2/\delta)/{2n}} +\sqrt{\mathbb{E}_{\mathcal{D}_n} (f_{\tau,\nu}(\vx)-y(\vx))^2}\,.
\end{align*}
By \cref{lemma:kl_star_2_0}, and the definition of $r_0$ in \cref{eq:rade_kl}, we set $\beta$ such that
\begin{align*}
     \beta^{-2} (\chi^2(\tau_{y}\|\tau_0)+\chi^2(\nu_{y}\|\nu_0)) \leq r_0\,,
\end{align*}
i.e., 
\begin{align*}
    \beta \geq \sqrt{\frac{{\chi^2(\tau_{y}\|\tau_0)+\chi^2(\nu_{y}\|\nu_0)}}{r_0}} = O(\sqrt{n})\,,
\end{align*}
and 
\begin{align*}
    f_{\tau_\star,\nu_\star}\in\mathcal{F}_{\rm KL}(\beta^{-2} (\chi^2(\tau_{y}\|\tau_0)+\chi^2(\nu_{y}\|\nu_0)))\,. 
\end{align*}
Therefore, we apply the Rademacher complexity bound in \cref{eq:rade_kl}, and we have
\begin{align*}
    \mathcal{R}_n(\mathcal{F}_{\rm KL}(\beta^{-2} (\chi^2(\tau_{y}\|\tau_0)+\chi^2(\nu_{y}\|\nu_0)))) \lesssim \beta \sqrt{\frac{\beta^{-2} (\chi^2(\tau_{y}\|\tau_0)+\chi^2(\nu_{y}\|\nu_0))}{n}} = O(1/\sqrt{n})\,. 
\end{align*}
where $\beta$ cancels out.
By \cref{lemma:kl_star_2_0}, we have $\widehat{L}(\tau_\star,\nu_\star)=0$. Finally, 
\begin{align*}
    \mathbb{E}_{\vx\sim\mu_X} \ell_{0-1} (f_{\tau,\nu}(\vx), y(\vx))\lesssim O(1/\sqrt{n}) + 6 \sqrt{\log(2/\delta)/{2n}}.
\end{align*}

\end{proof}

\subsection{Experiments}
\label{app:experiment}
We validate our findings on the toy dataset “Two Spirals”, where the data dimension $d=2$. We use a neural ODE model~\citep{poli2021torchdyn} to approximate the infinite depth ResNets, where we take the discretization $L=10$. The neural ODE model and the output layer are both parametrized by a two-layer network with the tanh activation function, and the hidden dimension is $M=K=20$. The parameters of the ResNet encoder and the output layer are jointly trained by Adam optimizer with an initial learning rate $0.01$. We perform full-batch training for 1,000 steps on the training dataset of size $n_{\rm train}$, and test the resulting model on the test dataset of size $n_{\rm test}=1024$ by the 0-1 classification loss. We run experiments over 3 seeds and report the mean. We fit the results (after logarithm) by ordinary least squares, and obtain the slope is $-1.02$ with p-value $10^{-5}$, as shown in \cref{fig:exps}. That means, the obtained rate is $\mathcal{O}(1/n)$, which is faster than our derived $\mathcal{O}(1/\sqrt{n})$ rate. We hope we could use some localized schemes, e.g., local Rademacher complexity, to close the gap. 
\vspace{-1em}
\begin{figure}[!htbp]
    \centering
    \includegraphics[width=0.25\textwidth]{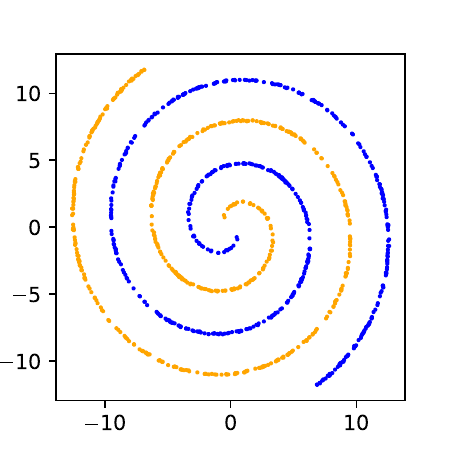}
    \includegraphics[width=0.7\textwidth]{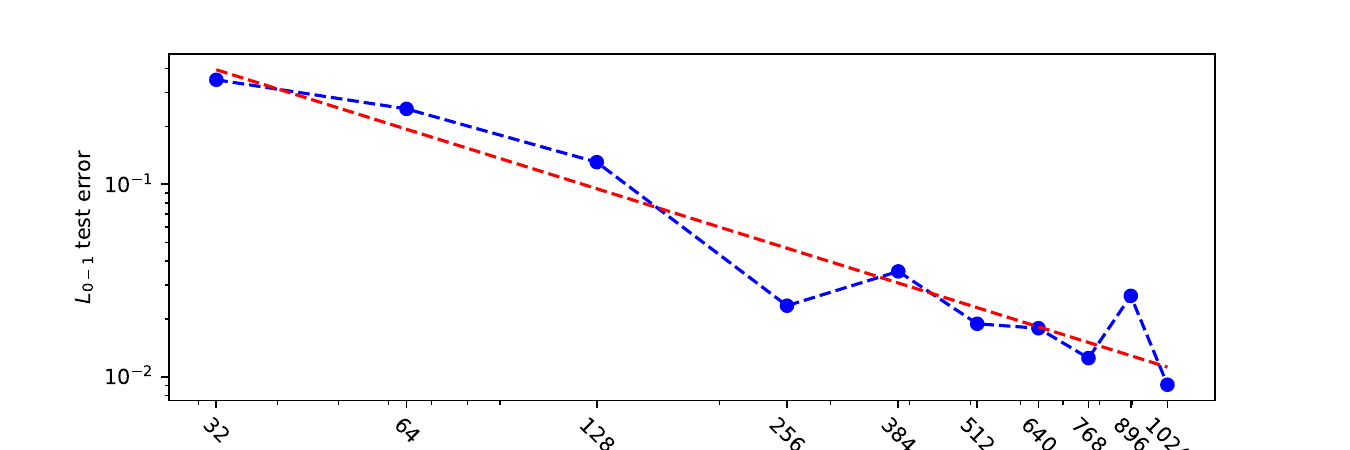}
    \vspace{-1em}
    \caption{{\bf Left}: "Two Spirals" datasets.    
    {\bf Right}: $L_{0-1}$ test error v.s. the training dataset size $n_{\rm train}$ ({\color{blue}blue}), OLS fitted line ({\color{red}red}) which is close to the $\mathcal{O}(1/n)$ rate with $p$-value $10^{-5}$. 
    }
    \label{fig:exps}
\end{figure}

\end{document}